\definecolor{cite_color}{HTML}{114083}
\definecolor{link_color}{RGB}{0,102,102}%  green-style
\definecolor{link_color}{RGB}{153, 0,0}  %  red
\definecolor{url_color}{RGB}{153, 102,  0}
\definecolor{emp_color}{RGB}{0,0,255}
\def\E{{\mathbb E}}
\newcommand*\dif{\mathop{}\!\mathrm{d}}
\def \y{\mathbf{y}}
\def \exp{\mathrm{exp}}
\providecommand{\customgenericname}{}
\newcommand{\newcustomtheorem}[2]{%
  \newenvironment{#1}[1]
  {%
   \renewcommand\customgenericname{#2}%
   \renewcommand\theinnercustomgeneric{##1}%
   \innercustomgeneric
  }
  {\endinnercustomgeneric}
}
\DeclareMathOperator*{\argmin}{argmin}
\newtheorem{lemma}{Lemma}
\newtheorem{definition}{Definition}
\crefname{section}{Section}{Sections}
\crefname{theorem}{Theorem}{Theorems}
\crefname{lemma}{Lemma}{Lemmas}
\crefname{equation}{Equation}{Equations}
\crefname{proposition}{Proposition}{Propositions}
\crefname{claim}{Claim}{Claims}
\crefname{appendix}{Appendix}{Appendices}
\crefname{algorithm}{Algorithm}{Algorithms}
\crefname{figure}{Figure}{Figures}
\crefname{table}{Table}{Tables}
\crefname{remark}{Remark}{Remarks}
\crefname{definition}{Definition}{Definitions}
\crefname{corollary}{Corollary}{Corollaries}
\DeclarePairedDelimiterX{\infdivx}[2]{(}{)}{%
  #1\;\delimsize\|\;#2%
  }
\newcommand{\ours}{$\text{DNFS}$\xspace}
\title{Discrete Neural Flow Samplers with Locally Equivariant Transformer}
\author{%
  Zijing Ou$^{1}$, \ Ruixiang Zhang$^{2}$, \ Yingzhen Li$^{1}$ \\
  $^1$Imperial College London, \  $^2$Apple \\
  \texttt{\{z.ou22,\!\! yingzhen.li\}@imperial.ac.uk} \ \
  \texttt{ruixiang\_zhang2@apple.com} \\
  \setcounter{footnote}{0}
}
\begin{document}

\maketitle

\begin{abstract}
Sampling from unnormalised discrete distributions is a fundamental problem across various domains. 
While Markov chain Monte Carlo offers a principled approach, it often suffers from slow mixing and poor convergence.
In this paper, we propose Discrete Neural Flow Samplers (DNFS), a trainable and efficient framework for discrete sampling. DNFS learns the rate matrix of a continuous-time Markov chain such that the resulting dynamics satisfy the Kolmogorov equation.
As this objective involves the intractable partition function, we then employ control variates to reduce the variance of its Monte Carlo estimation, leading to a coordinate descent learning algorithm.
To further facilitate computational efficiency, we propose locally equivaraint Transformer, a novel parameterisation of the rate matrix that significantly improves training efficiency while preserving powerful network expressiveness.
Empirically, we demonstrate the efficacy of DNFS in a wide range of applications, including sampling from unnormalised distributions, training discrete energy-based models, and solving combinatorial optimisation problems.
\end{abstract}

\section{Introduction}
We consider the task of sampling from a discrete distribution $\pi(x) = \frac{\rho (x)}{Z}$, known only up to a normalising constant $Z = \sum_x \rho (x)$.
This problem is foundamental in a wide range of scientific domains, including Bayesian inference \citep{murray2012mcmc}, statistical physics \citep{newman1999monte}, and computational biology \citep{lartillot2004bayesian}.
However, efficient sampling from such unnormalised distributions remains challenging, especially when the state space is large and combinatorially complex, making direct enumeration or exact computation of $Z$ infeasible.

Conventional sampling techniques, such as Markov Chain Monte Carlo (MCMC) \citep{metropolis1953equation} have been widely employed with great success. Nevertheless, MCMC often suffers from poor mixing and slow convergence due to the issues of Markov chains getting trapped in local minima and large autocorrelation \citep{neal2011mcmc}.
These limitations have motivated the development of neural samplers \citep{wu2020stochastic,vargas2023transport,mate2023learning},  which leverage deep neural networks to improve sampling efficiency and convergence rates.
In discrete settings, autoregressive models \citep{box2015time} have been successfully applied to approximate Boltzmann distributions of spin systems in statistical physics \citep{wu2019solving}.
Inspired by recent advances in discrete diffusion models \citep{austin2021structured,sun2022score,campbell2022continuous}, \cite{sanokowski2024diffusion,sanokowski2025scalable} propose diffusion-based samplers with applications to solving combinatorial optimisation problems.
Moreover, \cite{Holderrieth2025LEAPSAD} introduces an alternative discrete sampler by learning a parametrised continuous-time Markov chain (CMCT) \citep{norris1998markov} to minimise the variance of importance weights between the CMCT-induced distribution and the target distribution.

Building on these advances, the goal of our paper is to develop a sampling method for discrete distributions that is both efficient and scalable.
To this end, we introduce Discrete Neural Flow Samplers (DNFS), a novel framework that learns the rate matrix of a CTMC whose dynamics satisfy the Kolmogorov forward equation \citep{oksendal2013stochastic}.
In contrast to discrete flow models \citep{campbell2024generative,gat2024discrete}, which benefit from access to training data to fit the generative process, DNFS operates in settings where no data samples are available. This data-free setting makes direct optimisation of the Kolmogorov objective particularly challenging and necessitates new methodological advances to ensure stable and effective training.
Specifically, the first difficulty lies in the dependence of the objective on the intractable partition function. We mitigate this by using control variates \citep{geffner2018using} to reduce the variance of its Monte Carlo estimate, which enables efficient optimisation via coordinate descent.
More critically, standard neural network parameterisations of the rate matrix render the objective computationally prohibitive. To make learning tractable, a locally equivariant Transformer architecture is introduced to enhance computational efficiency significantly while retaining strong model expressiveness.
Empirically, DNFS proves to be an effective sampler for discrete unnormalised distributions. We further demonstrate its versatility in diverse applications, including training discrete energy-based models and solving combinatorial optimization problems.
\section{Preliminaries}

We begin by introducing the key preliminaries: the Continuous Time Markov Chain (CTMC) \citep{norris1998markov} and the Kolmogorov forward equation \citep{oksendal2013stochastic}.
Let $x$ be a sample in the $d$-dimensional discrete space $\{1,\dots,S\}^d \triangleq \mathcal{X}$. A continuous-time discrete Markov chain at time $t$ is characterised by a rate matrix $R_t: \mathcal{X} \times \mathcal{X} \mapsto \mathbb{R}$, which captures the instantaneous rate of change of the transition probabilities. Specifically, the entries of $R_t$ are defined by 
\begin{align}
    R_t (y, x) = \lim_{\Delta t \rightarrow 0} \frac{p_{t + \Delta t | t} (y | x) - \mathbf{1}_{y=x}}{\Delta t}, \quad \mathbf{1}_{y=x} = 
    \begin{cases}
        1, & y=x \\
        0, & y \neq x
    \end{cases},
\end{align}
which equivalently yields the local expansion $p_{t + \Delta t | t} (y | x) = \mathbf{1}_{y=x} +  R_t(y, x) \Delta t + o(t)$ and the rate matrix satisfies $R_t(y,x) \ge 0$ if $y \ne x$ and $R_t(x, x) = -\sum_{y \ne x} R_t(y, x)$.
Given $R_t$, the marginal distribution $p_t (x_t)$ for any $t \in \mathbb{R}$ is uniquely determined.
Let $x_{0 \le t \le 1}$ be a sample trajectory. Our goal is to seek a rate matrix $R_t$ that transports an initial distribution $p_0 \propto \eta$ to the target distribution $p_1 \propto \rho$. The trajectory then can be obtained via the Euler method \citep{sun2022score}
\begin{align}
    x_{t+\Delta t} \sim \mathrm{Cat} \left( x; \mathbf{1}_{x_{t+\Delta t}=x} +  R_t(x_{t+\Delta t}, x) \Delta t \right), \quad x_0 \sim p_0,
\end{align}
and the induced probability path $p_t$ by $R_t$ satisfies the Kolmogorov equation \citep{oksendal2013stochastic}
\begin{align}
    \partial_t p_t (x) = \sum_y R_t (x, y) p_t(y) = \sum_{y \ne x} R_t (x, y) p_t(y) - R_t (y, x) p_t(x).
\end{align}
In this case, we say that the rate matrix $R_t$ generates the probability path $p_t$. 
Dividing both sides of the Kolmogorov equation by $p_t$ leads to
\begin{align} \label{eq:kolmogorov_eq}
    \partial_t \log p_t (x)  = \sum_{y \ne x} R_t (x, y) \frac{p_t(y)}{p_t(x)} - R_t (y, x).
\end{align}
In the next section, we describe how to leverage \cref{eq:kolmogorov_eq} to learn a model-based rate matrix for sampling from a given target distribution $\pi$, followed by the discussion of applications to discrete energy-based modelling and combinatorial optimisation.

\section{Discrete Neural Flow Samplers}
The rate matrix $R_t$ that transports the initial distribution to the target is generally not unique. However, we can select a particular path by adopting an annealing interpolation between the prior $\eta$ and the target $\rho$, defined as $p_t \propto \rho^t \eta^{1-t} \triangleq \tilde{p}_t$ \citep{gelman1998simulating,neal2001annealed}.
This annealing path coincides with the target distribution $\pi \propto \rho$ at time $t=1$.
To construct an $R_t$ that generates the probability path $p_t$, we seek a rate matrix that satisfies the Kolmogorov equation in \cref{eq:kolmogorov_eq}.
Specifically, we learn a model-based rate matrix $R_t^\theta (y, x)$, parametrised by $\theta$, by minimizing the loss
\begin{align} \label{eq:dfs_loss}
    \mathcal{L}(\theta) \!=\! \E_{w(t),q_t(x)}  \delta_t^2(x;R_t^\theta), \quad \delta_t(x;R_t^\theta) \!\triangleq\! \partial_t \log p_t (x) \!+\! \sum_{y \ne x} R_t^\theta(y,x) - R_t^\theta(x, y)\frac{p_t (y)}{p_t(x)},
\end{align}
where $q_t$ is an arbitrary reference distribution that has the same support as $p_t$, and $w(t)$ denotes a time schedule distribution.
At optimality, the condition $\delta_t (x; R_t^\theta) = 0$ holds for all $t$ and $x \in \mathcal{X}$, implying that the learned rate matrix $R_t^\theta$ ensures the dynamics prescribed by the Kolmogorov equation are satisfied along the entire interpolation path.
In practice, minimising the loss \eqref{eq:dfs_loss} guides $R_t^\theta$ to correctly capture the infinitesimal evolution of the distribution $p_t$, enabling accurate sampling from the target distribution via controlled stochastic dynamics.

However, evaluating \cref{eq:dfs_loss} directly is computationally infeasible due to the intractable summation over $y$, which spans an exponentially large space of possible states, resulting in a complexity of $\mathcal{O}(S^d)$.
To alleviate this issue, we follow \cite{sun2022score,campbell2022continuous,lou2023discrete} by assuming independence across dimensions.
In particular, we restrict the rate matrix $R_t^\theta$ such that it assigns non-zero values only to states $y$ that differ from $x$ in at most one dimension. 
Formally, $R_t^\theta (y,x) = 0$ if $y \notin \mathcal{N}(x)$, where $\mathcal{N}(x) := \{y \in \mathcal{X} | y_i \ne x_i \ \text{at most one}\ i \}$.
To improve clarity in the subsequent sections, we renotate the rate matrix $R_t^\theta$ for $y \in \mathcal{N}(x)$ with $y_i \ne x_i$ as
\begin{align}
    R_t^\theta (y,x) \triangleq R_t^\theta(y_i, i | x) \quad \mathrm{and} \quad \ R_t^\theta(x_i, i | x) = -  \sum_{i, y_i \ne x_i}  R_t^\theta(y_i, i | x),
\end{align}
which yields a simplified and more tractable form of the loss in \cref{eq:dfs_loss}
\begin{align} \label{eq:dfs-loss-v2}
    \delta_t(x;v_t) = \partial_t \log p_t (x) + \sum_{i, y_i \ne x_i} R_t^\theta(y_i, i | x) - R_t^\theta(x_i, i | y)\frac{p_t (y)}{p_t(x)}.
\end{align}
This approximation reduces the computational complexity from $\mathcal{O}(S^d)$ to $\mathcal{O}(S \times d)$. Nonetheless, two main challenges persist.
First, the time derivative $\partial_t \log p_t (x)$ remains intractable due to the dependence on the partition function, as it expands to $\partial_t \log \tilde{p}_t(x) - \partial_t \log Z_t$ with $Z_t = \sum_x \tilde{p}_t(x)$ being intractable.
Second, evaluating \cref{eq:dfs-loss-v2} requires evaluating the neural network $|\mathcal{N}(x)|$ times, which is computationally expensive for each $x$.
In the following, we propose several techniques to address these computational bottlenecks.
% Moreover, computing the ratio $\frac{p_t(y)}{p_t(x)}$ can be also costly, as it requires multiple evaluations of the target density, which is particularly expensive when the density is modelled by a neural network.

% \subsection{Estimation of \texorpdfstring{$\partial_t \log Z_t$}{partial derivative of log Zt}}
\subsection{Estimating the Time Derivative of the Log-Partition Function}
To estimate the time derivative, note that $\partial_t \log Z_t = \E_{p_t (x)}[\partial_t \log \tilde{p}_t (x)]$, which can be approximated via the Monte Carlo estimator $\partial_t \log Z_t \approx \frac{1}{K} \sum_{k=1}^K \partial_t \log \tilde{p}_t (x_t^{(k)})$.
However, this approach relies on sampling from $p_t$, which is typically impractical due to the lack of convergence guarantees for short-run MCMC in practice and the high variance inherent in Monte Carlo estimation.
To address this issue, we leverage a key identity that holds for any given rate matrix $R_t$
\begin{align} \label{eq:log_Zt_est}
    \partial_t \log Z_t = \argmin_{c_t} \E_{p_t} (\xi_t (x; R_t) - c_t)^2, \xi_t (x; R_t) \triangleq \partial_t \log \tilde{p}_t (x_t)  - \sum_y R_t(x, y) \frac{p_t(y)}{p_t(x)}
\end{align}
\begin{wrapfigure}{r}{0.6\linewidth}
    \vspace{-2mm}
    \centering
        \begin{minipage}[t]{1.\linewidth}
            \centering
            \begin{minipage}[t]{0.48\linewidth}
                \centering
                \includegraphics[width=.99\linewidth]{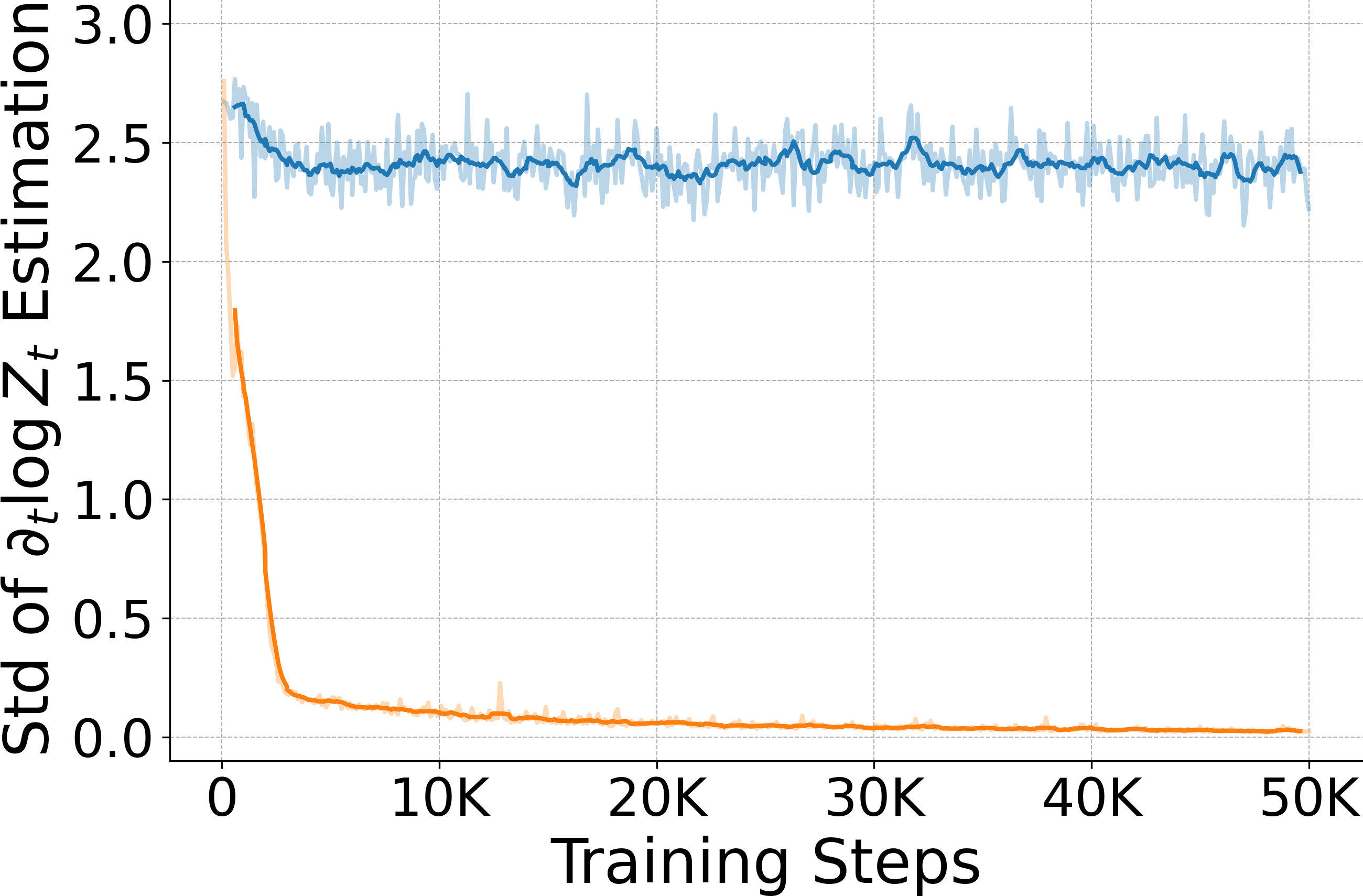}
            \end{minipage}
            \begin{minipage}[t]{0.48\linewidth}
                \centering
                \includegraphics[width=.99\linewidth]{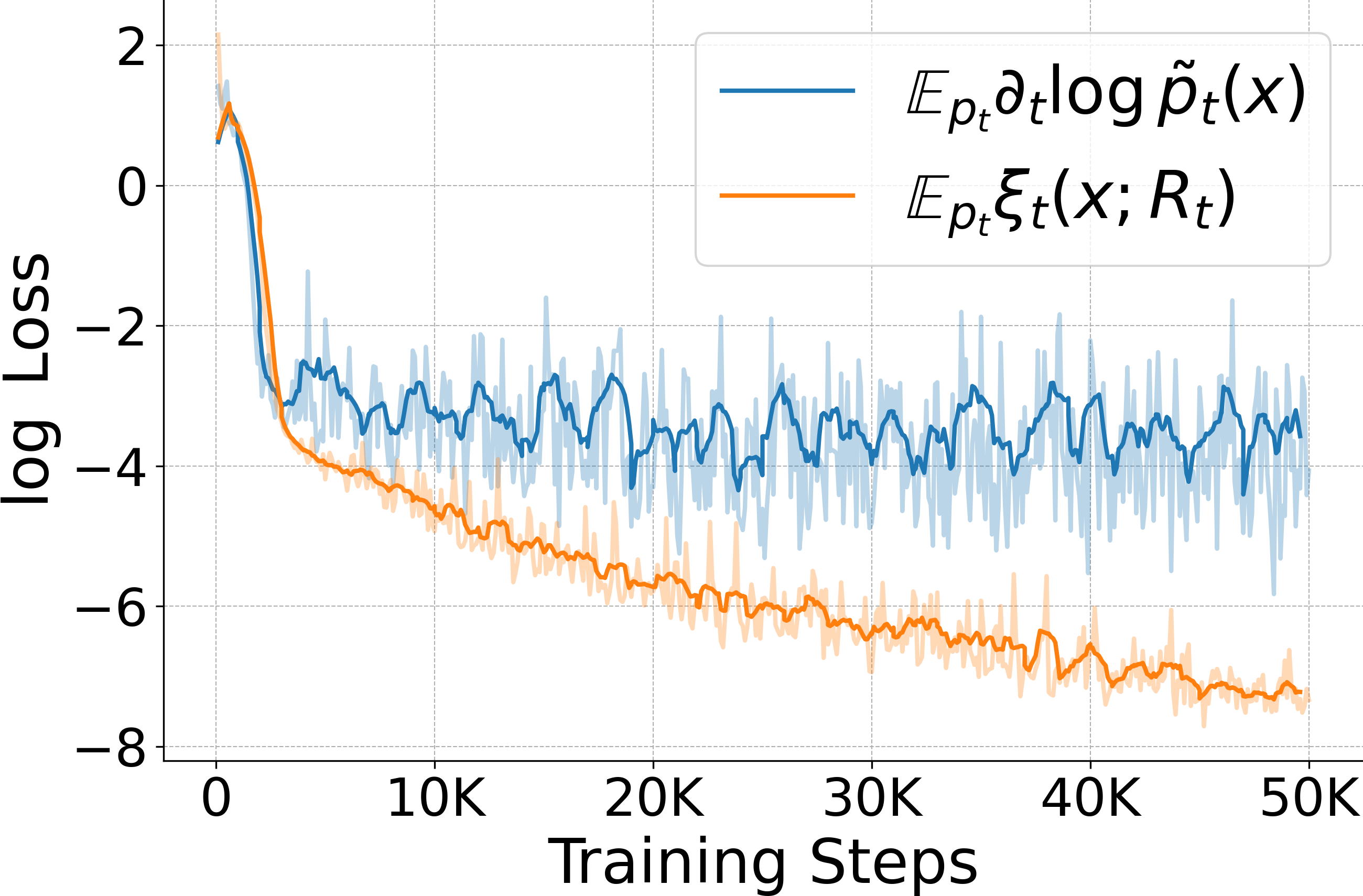}
            \end{minipage}
        \end{minipage}
    % \vspace{-4mm}
    \caption{Comparison of std. dev. and training loss for different estimators of $\partial_t \log Z_t$. Lower variance estimator exhibits lower training loss, indicating a better learned rate matrix satisfying the Kolmogorov equation in \cref{eq:dfs_loss}.}
    \label{fig:ising_var_reduction}
    \vspace{-4mm}
\end{wrapfigure}
Because the objective \eqref{eq:log_Zt_est} is convex in $c_t$, the minimizer is given by $\partial_t \log Z_t = \E_{p_t} \xi_t (x; R_t)$. Moreover, in practice, the expectation over $p_t$ can be safely replaced by an expectation over any distribution $q_t$ with the same support, still yielding a valid estimate of the time derivative (see \cref{sec:appendix_proof_logzeq} for details).
Empirically, we observe that using \cref{eq:log_Zt_est} results in a significantly lower-variance estimator compared to the direct Monte Carlo approach $\partial_t \log Z_t = \E_{p_t (x)}[\partial_t \log \tilde{p}_t (x)]$. This reduction in variance can lead to improved optimisation performance.
To assess this, we conducted experiments on the Ising model \citep{mezard1987spin}, minimising the loss in \cref{eq:dfs-loss-v2} using two different estimators for $\partial_t \log Z_t$. 
The standard deviations of both estimators, as well as their corresponding loss values during training, are plotted over training steps in \cref{fig:ising_var_reduction}. The results demonstrate that the estimator based on $\E_{p_t} \xi_t (x; R_t)$ consistently achieves lower loss values, underscoring the benefits of reduced variance in estimating $\partial_t \log Z_t$ for improved training dynamics.
In \cref{sec:discrete_stein_cv}, we provide a perspective of control variate \citep{geffner2018using} to further explain this observation.
This insight enables a coordinate descent approach to learning the rate matrix. Specifically: i) $\theta \leftarrow \argmin_\theta \int_0^1 \E_{q_t(x)} (\xi_t (x; R_t^\theta) - c_t)^2 \dif t$; and ii) $c_t \leftarrow \argmin_{c_t} \E_{q_t} (\xi_t (x; R_t) - c_t)^2$.
Alternatively, the time derivative can be parameterised directly via a neural network $c_t^\phi$, allowing joint optimisation of $\theta$ and $\phi$ through the objective $\argmin_{\theta, \phi} = \int_0^1 \E_{q_t(x)} (\xi_t (x; R_t^\theta) - c_t^\phi)^2 \dif t$. 
This formulation recovers the physics-informed neural network (PINN) loss proposed in \cite{Holderrieth2025LEAPSAD}. A detailed discussion of the connection to the PINN loss is provided in \cref{sec:appendix-con2leaps}.

\subsection{Efficient Training with Locally Equivariant Networks}
As previously noted, computing the $\delta$ function in \cref{eq:dfs-loss-v2} requires evaluating the neural network $|\mathcal{N}|$ times, which is computationally prohibitive. 
Inspired by \cite{Holderrieth2025LEAPSAD}, we proposed to mitigate this issue by utilising locally equivariant networks, an architectural innovation that significantly reduces the computational complexity with the potential to preserve the capacity of network expressiveness.
A central insight enabling this reduction is that any rate matrix can be equivalently expressed as a one-way rate matrix\footnote{A rate matrix $R$ is said to be one-way if $R(y, x) > 0$ implies $R(x, y) = 0$. That is, if a transition from $x$ to $y$ is permitted, the reverse transition must be impossible.}, while still inducing the same probabilistic path. This is formalised in the following proposition:
\begin{restatable}[]{proposition}{restateproposition} \label{prop:one-way-rate-matrix}
    For a rate matrix $R_t$ that generates the probabilistic path $p_t$, there exists a one-way rate matrix $Q_t (y,x)=\left[R_t(y,x) - R_t(x,y) \frac{p_t(y)}{p_t(x)}\right]_+$ if $y\ne x$ and $Q_t (x,x) = \sum_{y\ne x} Q_t (y,x)$, that generates the same probabilistic path $p_t$, where $[z]_+ = \mathrm{max}(z, 0)$ denotes the ReLU operation.
\end{restatable}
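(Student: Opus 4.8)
The plan is to reduce the statement to an elementary fact about the antisymmetric probability flux, and then to check that replacing $R_t$ by $Q_t$ leaves this flux — hence the right-hand side of the Kolmogorov equation — unchanged.

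First I would introduce, for $x \neq y$, the net flux $J_t(x,y) \triangleq R_t(x,y)\,p_t(y) - R_t(y,x)\,p_t(x)$, which is antisymmetric: $J_t(x,y) = -J_t(y,x)$. Multiplying \cref{eq:kolmogorov_eq} through by $p_t(x)$ turns the hypothesis that $R_t$ generates $p_t$ into the statement $\partial_t p_t(x) = \sum_{y \neq x} J_t(x,y)$ for all $t$ and all $x$.

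Next I would compute the flux associated with $Q_t$. Assuming $p_t(x) > 0$ for every $x$ (which is what makes the ratio $p_t(y)/p_t(x)$ in the definition of $Q_t$ well posed, and which holds for the annealing path $p_t \propto \rho^t\eta^{1-t}$ when $\eta$ has full support), multiplying the definition of $Q_t$ through by $p_t(x)$ gives $Q_t(y,x)\,p_t(x) = \bigl[R_t(y,x)\,p_t(x) - R_t(x,y)\,p_t(y)\bigr]_+ = [\,J_t(y,x)\,]_+ = [-J_t(x,y)]_+$, and symmetrically $Q_t(x,y)\,p_t(y) = [\,J_t(x,y)\,]_+$. Hence the $Q_t$-flux is $Q_t(x,y)\,p_t(y) - Q_t(y,x)\,p_t(x) = [\,J_t(x,y)\,]_+ - [-J_t(x,y)]_+ = J_t(x,y)$, by the elementary identity $[z]_+ - [-z]_+ = z$. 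Summing over $y \neq x$ then gives $\sum_{y \neq x}\bigl(Q_t(x,y)p_t(y) - Q_t(y,x)p_t(x)\bigr) = \sum_{y \neq x} J_t(x,y) = \partial_t p_t(x)$, which is exactly the Kolmogorov equation for $Q_t$; thus $Q_t$ generates the same path $p_t$.

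Finally I would confirm that $Q_t$ is a legitimate one-way rate matrix: its off-diagonal entries are nonnegative by construction (they are ReLU outputs), its diagonal is defined so that each column sums to zero, hence preserving total mass, and it is one-way because $Q_t(y,x) > 0$ forces $J_t(y,x) > 0$, so $J_t(x,y) < 0$, so $Q_t(x,y) = 0$. I do not expect a deep obstacle here; the only genuine care needed is the positivity of $p_t$, required to multiply and divide by $p_t(x)$ freely — for the annealing interpolation this follows from full support of $\eta$, and otherwise one restricts throughout to the common support of $p_t$ and the reference distribution $q_t$. The one remaining bookkeeping subtlety is keeping the orientation of the ReLU consistent between the two flux directions, which is precisely what the identity $[z]_+ - [-z]_+ = z$ handles.
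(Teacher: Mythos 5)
Your proposal is correct and follows essentially the same route as the paper's proof: multiplying the definition of $Q_t$ by $p_t$, invoking the identity $[z]_+ - [-z]_+ = z$ to recover the original net flux in the Kolmogorov equation, and establishing the one-way property via the sign-flip argument $Q_t(y,x) > 0 \Rightarrow Q_t(x,y) = 0$. Your flux notation $J_t(x,y)$ and the explicit remark on positivity of $p_t$ are merely cleaner bookkeeping of the same computation the paper carries out directly.
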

This result was originally introduced by \cite{zhang2023formulating}, and we include a proof in \cref{sec:appendix_oneway_rm} for completeness.
Building on \cref{prop:one-way-rate-matrix}, we can parameterise $R_t^\theta$ directly as a one-way rate matrix. To achieve this, we use a locally equivariant neural network as described by \cite{Holderrieth2025LEAPSAD}. Specifically, a neural network $G$ is locally equivariant if and only if:
\begin{align} \label{eq:le_network}
    G_t^\theta (\tau, i | x) = -G_t^\theta (x_i, i| \text{Swap}(x,i,\tau)), \quad i=1,\dots,d
\end{align}
where $\text{Swap}(x, i, \tau) = (x_1, \dots, x_{i-1}, \tau, x_{i+1}, \dots, x_d)$ and $\tau \in \{1,\dots,S\} \triangleq \mathcal{S}$. Based on this, the one-way rate matrix can be defined as $R_t^\theta (\tau, i | x) \triangleq [G_t^\theta (\tau, i | x)]_+$.
Substituting this parametrization into \cref{eq:dfs-loss-v2}, we obtain the simplified expression:
\begin{align} \label{eq:le-dfs-loss}
    \delta_t(x;R_t^\theta) 
    = \partial_t \log p_t (x) + \sum_{i, y_i\neq x_i} [G_t^\theta(y_i, i | x)]_+ - [-G_t^\theta(y_i, i | x)]_+ \frac{p_t (y)}{p_t(x)}.
\end{align}
This formulation reduces the computational cost from $\mathcal{O}(|\mathcal{N}|)$ to $\mathcal{O}(1)$, enabling far more efficient training.
We term the proposed method as discrete neural flow sampler (\ours) and summarise the training and sampling details in \cref{sec:appendix_algs}.
Nonetheless, the gain in efficiency introduces challenges in constructing a locally equivariant network (leNet) that is both expressive and flexible.

\subsection{Instantiation of leNets: Locally Equivariant Transformer} \label{sec:letf_arch}
To construct a locally equivariant network, we first introduce \textit{hollow network} \citep{chen2019neural}. Formally, let $x_{i\leftarrow \tau} \!=\! (x_1,\dots,x_i\!=\!\tau,\dots, x_d)$ denote the input with its $i$-th token set to $\tau$. A function $H: \mathcal{X} \mapsto \mathbb{R}^{d \times h}$ is termed a hollow network if it satisfies $H(x_{i \leftarrow \tau})_{i,:} = H(x_{i \leftarrow \tau^\prime})_{i,:}, \forall \tau,\tau^\prime \in \mathcal{S}$, where $M_{i,:}$ denotes the $i$-th row of the matrix $M$.
Intuitively, it implies that the output at position $i$ is invariant to the value of the $i$-th input token.
Hollow networks provide a foundational building block for constructing locally equivariant networks, as formalised in the following proposition.
\begin{restatable}[Instantiation of Locally Equivariant Networks]{proposition}{restatepropositionlenet} \label{prop:appendix-inst-lenet}
    Let $x \in \mathcal{X}$ denote the input tokens and $H: \mathcal{X} \mapsto \mathbb{R}^{d\times h}$ be a hollow network. Furthermore, for each token $\tau \in \mathcal{S}$, let $\omega_{\tau} \in \mathbb{R}^{h}$ be a learnable projection vector. Then, the locally equivariant network can be constructed as:
    \begin{align}
        G(\tau, i | x) = (\omega_\tau - \omega_{x_i})^T H(x)_{i,:}. \nonumber
    \end{align}
\end{restatable}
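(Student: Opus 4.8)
The plan is to verify directly that the proposed map $G$ satisfies the defining identity \eqref{eq:le_network} of local equivariance, namely $G(\tau, i \,|\, x) = -G\big(x_i, i \,|\, \text{Swap}(x,i,\tau)\big)$ for every coordinate $i$, every input $x \in \mathcal{X}$, and every token $\tau \in \mathcal{S}$. The only structural property of $H$ we will invoke is that it is hollow: its $i$-th row does not depend on the $i$-th input coordinate.

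First I would fix $i$, $x$, $\tau$, and abbreviate $x' \triangleq \text{Swap}(x,i,\tau)$, noting that $x'$ agrees with $x$ in every coordinate except the $i$-th, where $x'_i = \tau$. Unfolding the definition of $G$ at the swapped argument gives $G(x_i, i \,|\, x') = (\omega_{x_i} - \omega_{x'_i})^T H(x')_{i,:} = (\omega_{x_i} - \omega_{\tau})^T H(x')_{i,:}$.

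The key step is to replace $H(x')_{i,:}$ by $H(x)_{i,:}$. Since $x$ and $x'$ are obtained from the same off-$i$ coordinates by setting position $i$ to $x_i$ and to $\tau$ respectively, the hollow property $H(z_{i\leftarrow\tau})_{i,:} = H(z_{i\leftarrow\tau'})_{i,:}$ applied to this common off-$i$ context yields $H(x')_{i,:} = H(x)_{i,:}$. Substituting and using the antisymmetry $\omega_{x_i} - \omega_{\tau} = -(\omega_{\tau} - \omega_{x_i})$ gives $G(x_i, i \,|\, x') = -(\omega_{\tau} - \omega_{x_i})^T H(x)_{i,:} = -G(\tau, i \,|\, x)$, which is exactly \eqref{eq:le_network}; since $i$, $x$, $\tau$ were arbitrary, $G$ is locally equivariant.

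I do not expect a genuine obstacle here: the construction is essentially engineered so that the hollow property carries the proof, and what remains is a one-line sign flip. The only point needing mild care is index bookkeeping when invoking the hollow property — making explicit that $x$ and $x'$ share the same coordinates away from $i$, which is precisely what licenses the row replacement $H(x')_{i,:}=H(x)_{i,:}$. If the proposition is additionally meant to assert a converse (that every locally equivariant $G$ admits such a representation), I would supplement the argument by showing that the defining identity forces $\tau \mapsto G(\tau,i\,|\,x)$ to be affine in a vector that is insensitive to $x_i$, from which one can extract a hollow $H$ and projections $\{\omega_\tau\}$; but this direction is not required for the stated construction.
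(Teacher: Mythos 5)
Your proof is correct and follows essentially the same route as the paper's: unfold $G$ at the swapped input, use the hollow property to replace $H(\text{Swap}(x,i,\tau))_{i,:}$ by $H(x)_{i,:}$, and conclude by the antisymmetry of $\omega_\tau - \omega_{x_i}$. Your explicit remark that $x$ and $\text{Swap}(x,i,\tau)$ agree off coordinate $i$ simply makes precise the step the paper states in one line.
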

This can be verified via $G(\tau, i | x) \!=\! - (\omega_{x_i} \!-\! \omega_\tau)^T H(\text{Swap}(x,i,\tau))_{i,:} \!=\! - G(x_i, i | \text{Swap}(x,i,\tau))$.
Although \cref{prop:appendix-inst-lenet} offers a concrete approach to constructing locally equivariant networks, significant challenges persist.
In contrast to globally equivariant architectures \citep{cohen2016group,fuchs2020se}, where the composition of equivariant layers inherently preserves equivariance, locally equivariant networks are more delicate to design. In particular, stacking locally equivariant layers does not, in general, preserve local equivariance.
While \cite{Holderrieth2025LEAPSAD} propose leveraging multi-layer perceptions (MLPs), attention mechanisms, and convolutional layers (see \cref{sec:appendix_le_net} for details) to construct locally equivariant networks, these architectures may still fall short in terms of representational capacity and flexibility.

\begin{wrapfigure}{r}{0.6\linewidth}
\vspace{-4mm}
    \centering
    \includegraphics[width=.99\linewidth]{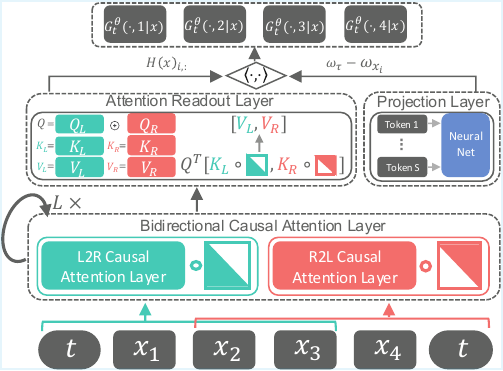}
% \vspace{-4mm}
\caption{Illustration of the leTF network.}
\vspace{-4mm}
\label{fig:letf_arch}
\end{wrapfigure}
\textbf{Locally Equivariant Transformer.}
As shown in \cref{prop:appendix-inst-lenet}, a crucial component to construct a locally equivariant network is the hollow network.
Specifically, the key design constraint is that the network's output at dimension $i$ must be independent of the corresponding input value $x_i$.
Otherwise, any dependence would result in information leakage and violate local equivariance.
However, the $i$-th output may depend freely on all other coordinates of the input token $x$ except for the $i$-th entry.
This insight motivates the use of hollow transformers \citep{sun2022score} as a foundation for constructing locally equivariant networks.
Specifically, it employs two autoregressive Transformers \citep{vaswani2017attention,radford2018improving} per layer; one processing inputs from left to right, and the other from right to left. In the readout layer, the representations from two directions are fused via attention to produce the output.
This design ensures that each output dimension remains independent of its corresponding input coordinate, while still leveraging the expressiveness of multi-layer Transformers.
Thereby, the final output $G_t^\theta (\cdot, i | x)$ can be obtained by taking the inner product between the hollow attention output and the token embeddings produced by the projection layer.
We term the proposed architecture as locally equivariant transformer (leTF), and defer the implementation details to \cref{sec:appendix_letf_details}.

\begin{wrapfigure}{r}{0.6\linewidth}
\vspace{-4mm}
\centering
    \begin{minipage}[t]{1.\linewidth}
        \centering
        \begin{minipage}[t]{0.48\linewidth}
            \centering
            \includegraphics[width=.99\linewidth]{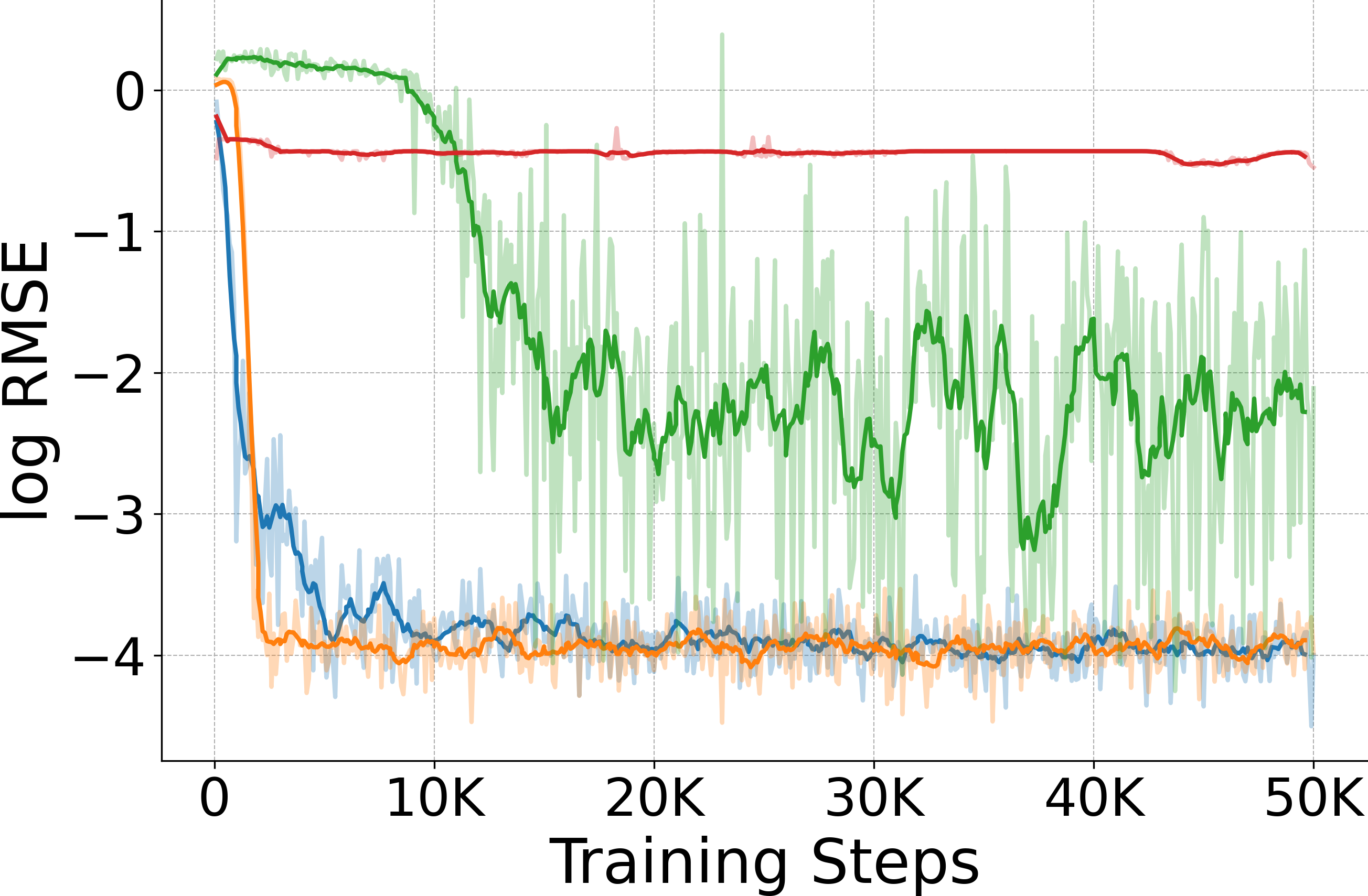}
        \end{minipage}
        \begin{minipage}[t]{0.48\linewidth}
            \centering
            \includegraphics[width=.99\linewidth]{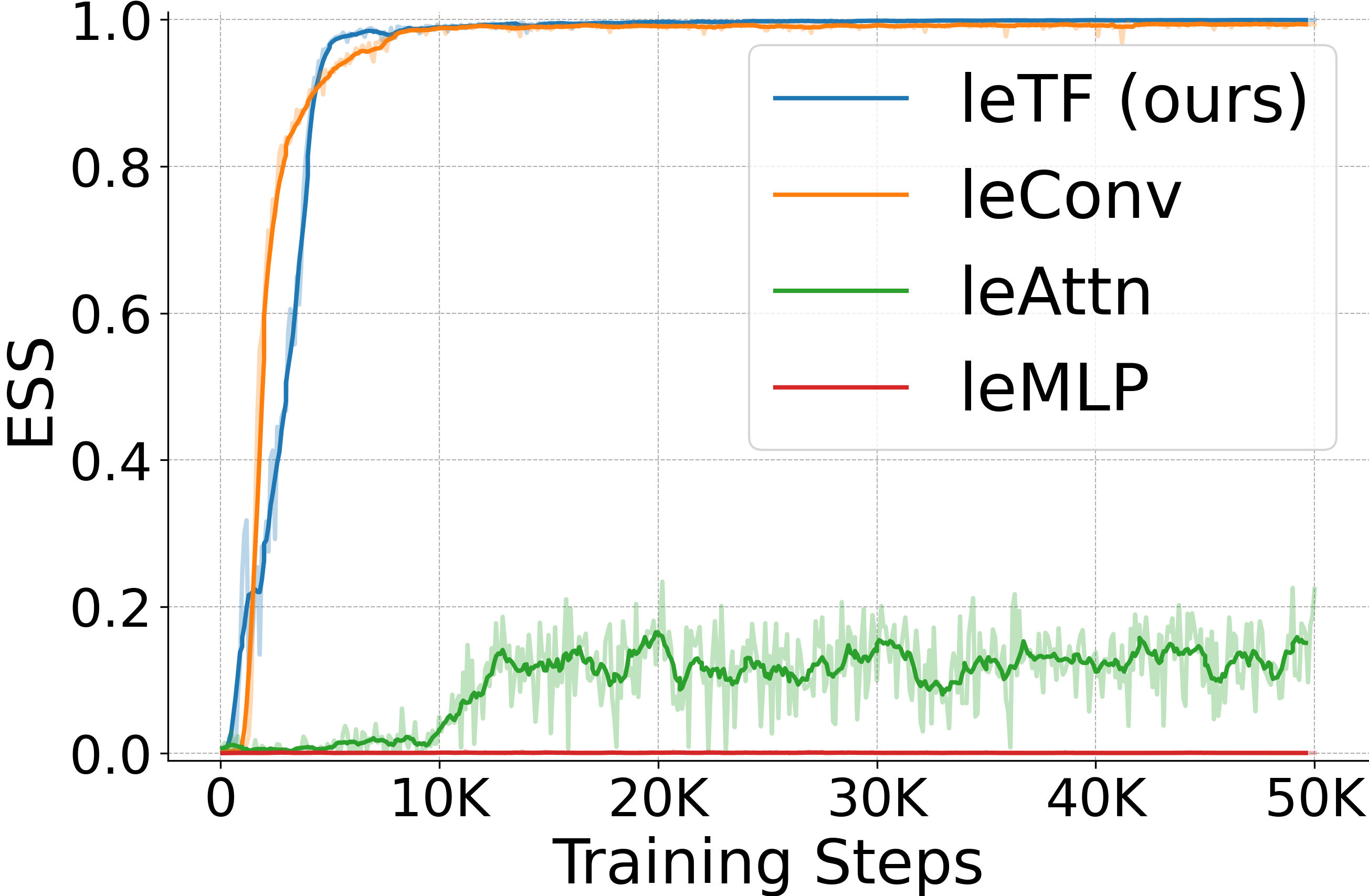}
        \end{minipage}
    \end{minipage}
% \vspace{-4mm}
\caption{Comparison of log RMSE $(\downarrow)$ and ESS $(\uparrow)$ for different locally equivariant networks. More expressive networks achieve better performance.}
\vspace{-4mm}
\label{fig:ising_diff_arch}
\end{wrapfigure}
\textbf{Comparison of Different leNets.}
In \cref{fig:ising_diff_arch}, we compare leTF with other locally equivariant networks for training a discrete neural flow sampler on the Ising model.
The results show that leTF achieves lower estimation errors and higher effective sample sizes (see \cref{sec:appendix_exp_sample_from_unnorm_dist} for experimental details).
leAttn and leMLP, which each consist of only a single locally equivariant layer, perform significantly worse, highlighting the importance of network expressiveness in achieving effective local equivariance.
Although leConv performs comparably to leTF, its convolutional design is inherently less flexible.
It is restricted to grid-structured data, such as images or the Ising model, and does not readily generalise to other data types like text or graphs.
Additionally, as shown in \cref{fig:appendix_loss_diff_arch}, leTF achieves lower training loss compared to leConv, further confirming its advantage in expressiveness.

\section{Applications and Experiments}\label{sec:experiments}
To support our theoretical discussion, we first evaluate the proposed methods by sampling from predefined unnormalised distributions. 
We then demonstrate two important applications to \ours: i) training discrete energy-based models and ii) solving combinatorial optimisation problems. 
Detailed experimental settings and additional results are provided in \cref{sec:appendix_exp_res}.

\begin{figure}[!t]
% % Hacky ways to use \vspace & \hspace (＃－.－)
    \centering
    \begin{minipage}[t]{0.32\linewidth}
        \begin{minipage}[t]{0.3\linewidth}
            \centering
            \vspace{5mm}
            {\scriptsize Energy}
            \includegraphics[width=\linewidth]{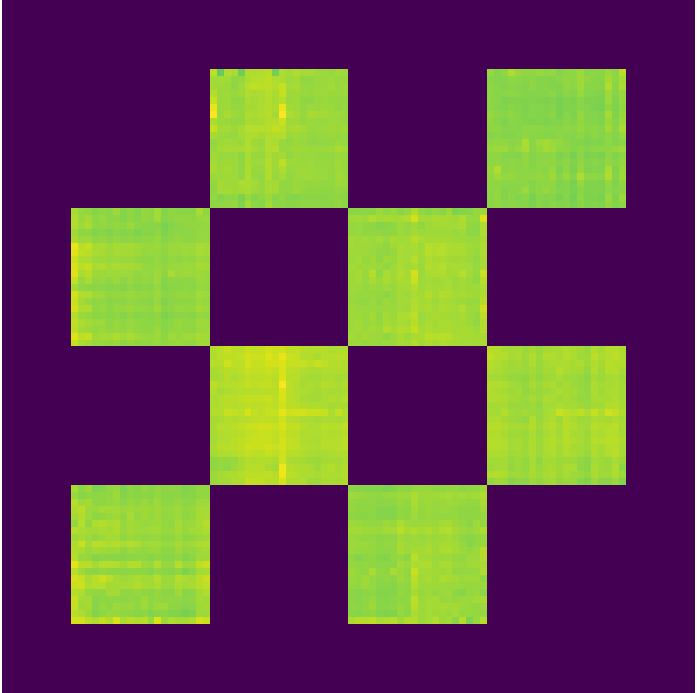}
        \end{minipage}
        \begin{minipage}[t]{0.62\linewidth}
            \centering
            \begin{minipage}[t]{0.48\linewidth}
                \centering
                {\scriptsize GFlowNet}
                \includegraphics[width=\linewidth]{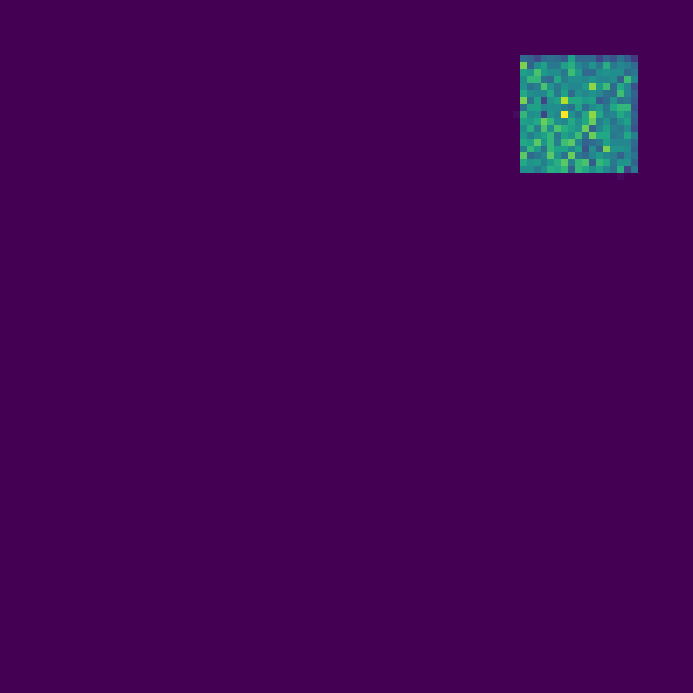}
            \end{minipage}
            \begin{minipage}[t]{0.48\linewidth}
                \centering
                {\scriptsize LEAPS}
                \includegraphics[width=\linewidth]{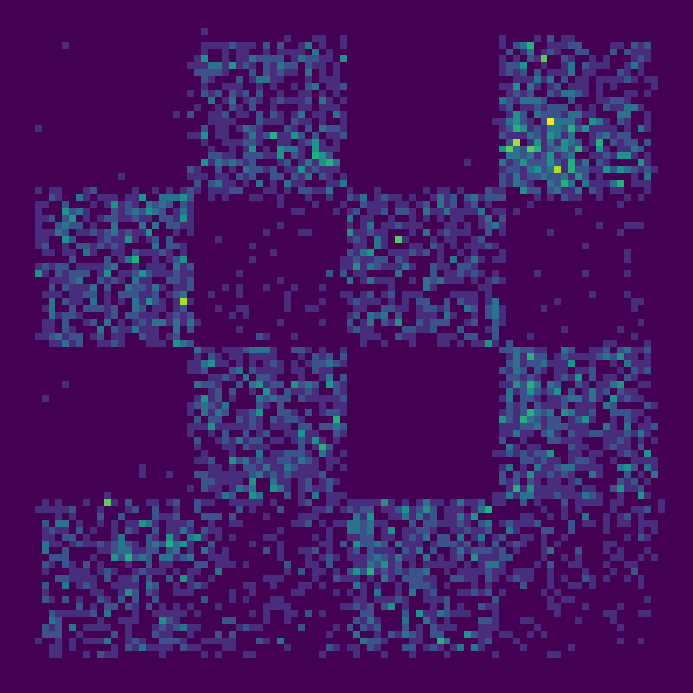}
            \end{minipage}
            \begin{minipage}[t]{0.48\linewidth}
                \centering
                {\scriptsize Oracle}
                \includegraphics[width=\linewidth]{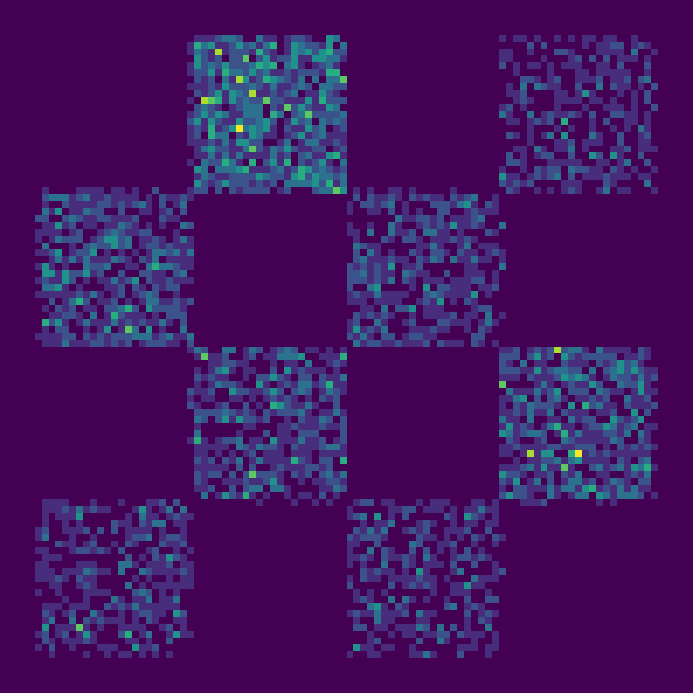}
            \end{minipage}
            \begin{minipage}[t]{0.48\linewidth}
                \centering
                {\scriptsize \ours}
                \includegraphics[width=\linewidth]{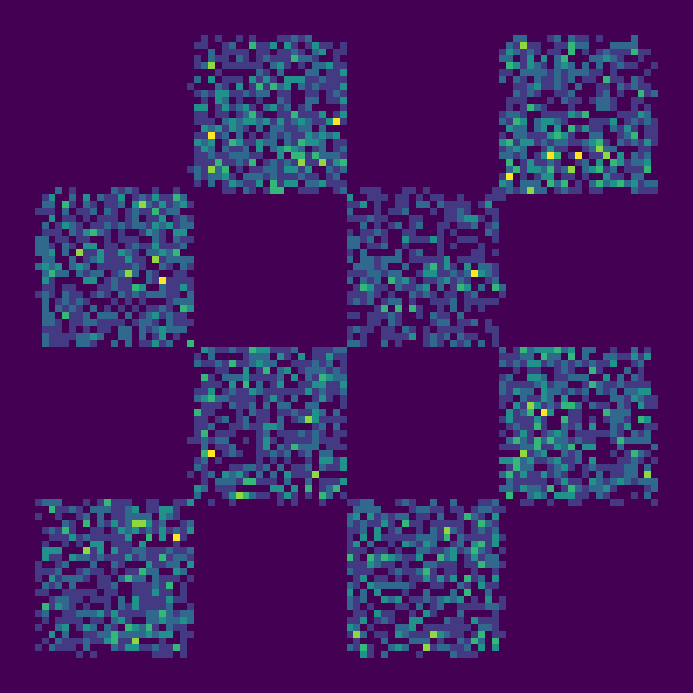}
            \end{minipage}
        \end{minipage}
    \end{minipage}
    \begin{minipage}[t]{0.32\linewidth}
        \begin{minipage}[t]{0.3\linewidth}
            \centering
            \vspace{5mm}
            {\scriptsize Energy}
            \includegraphics[width=\linewidth]{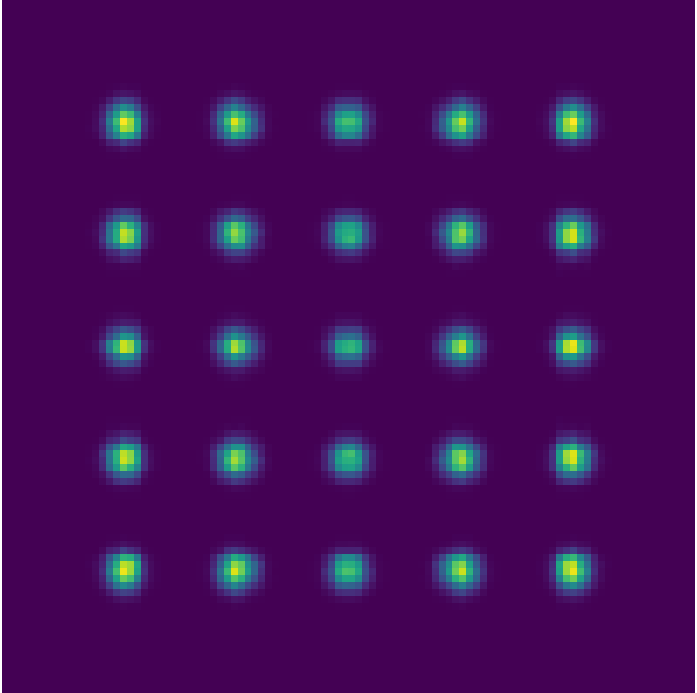}
        \end{minipage}
        \begin{minipage}[t]{0.62\linewidth}
            \centering
            \begin{minipage}[t]{0.48\linewidth}
                \centering
                {\scriptsize GFlowNet}
                \includegraphics[width=\linewidth]{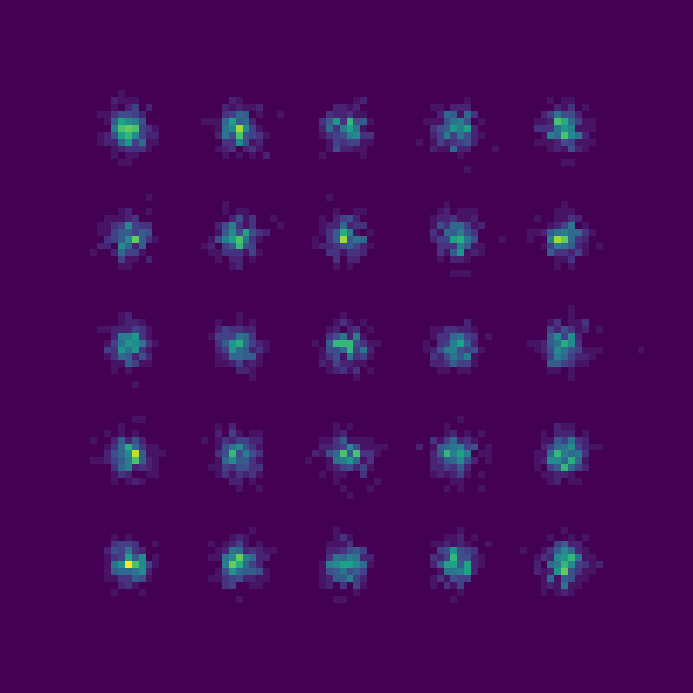}
            \end{minipage}
            \begin{minipage}[t]{0.48\linewidth}
                \centering
                {\scriptsize LEAPS}
                \includegraphics[width=\linewidth]{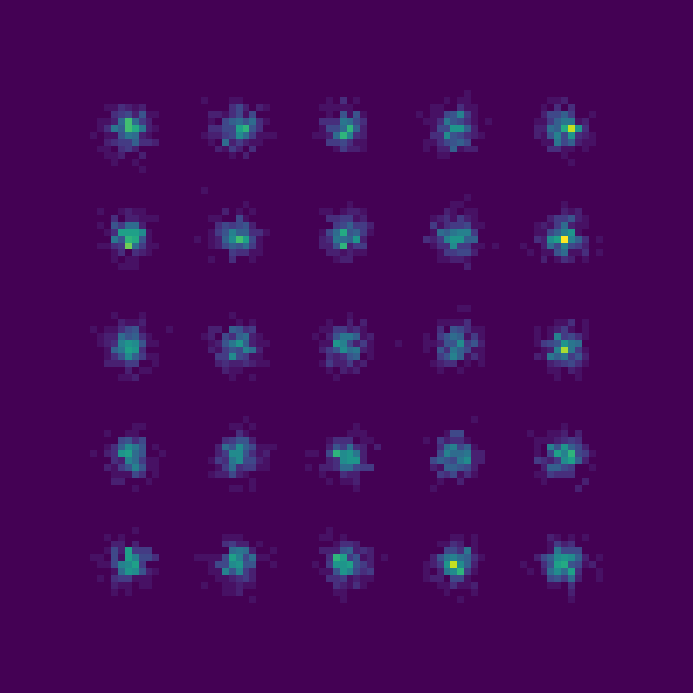}
            \end{minipage}
            \begin{minipage}[t]{0.48\linewidth}
                \centering
                {\scriptsize Oracle}
                \includegraphics[width=\linewidth]{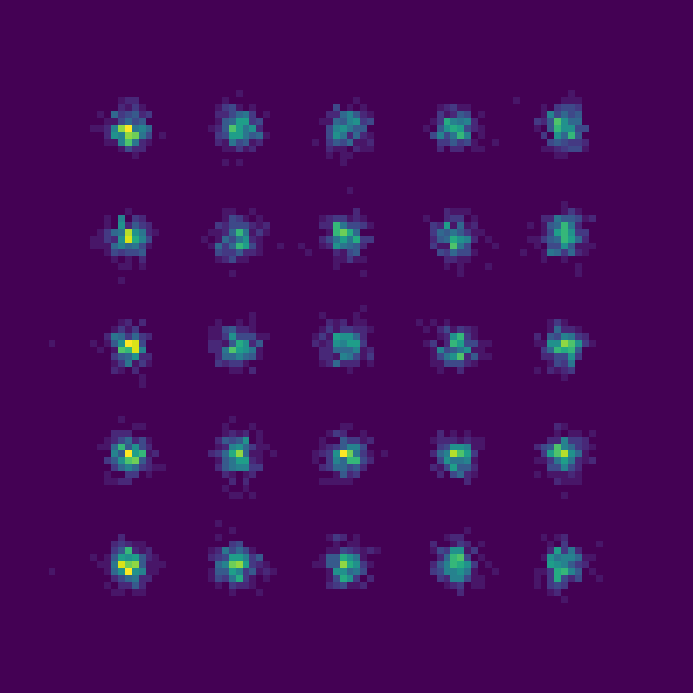}
            \end{minipage}
            \begin{minipage}[t]{0.48\linewidth}
                \centering
                {\scriptsize \ours}
                \includegraphics[width=\linewidth]{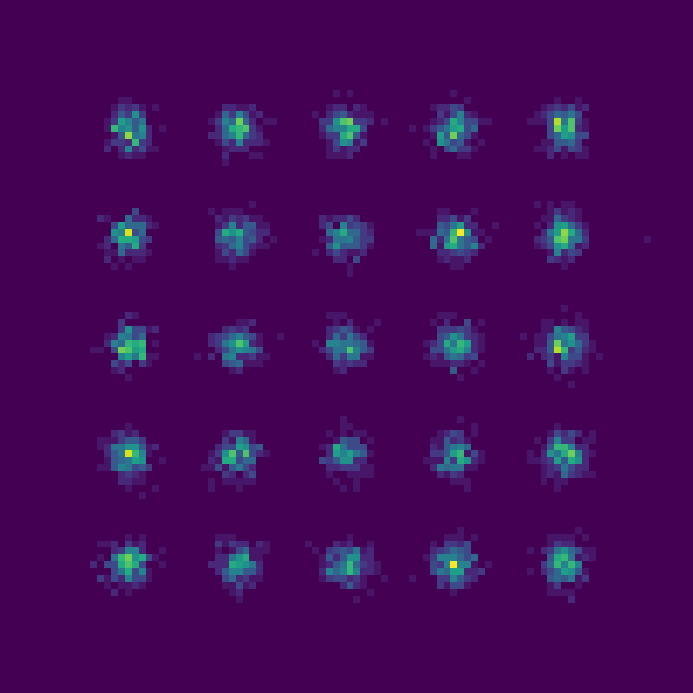}
            \end{minipage}
        \end{minipage}
    \end{minipage}
    \begin{minipage}[t]{0.32\linewidth}
        \begin{minipage}[t]{0.3\linewidth}
            \centering
            \vspace{5mm}
            {\scriptsize Energy}
            \includegraphics[width=\linewidth]{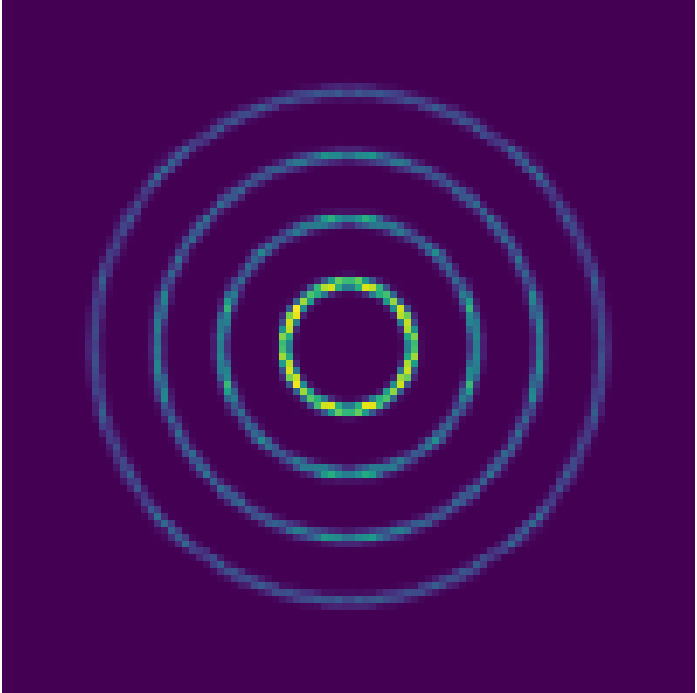}
        \end{minipage}
        \begin{minipage}[t]{0.62\linewidth}
            \centering
            \begin{minipage}[t]{0.48\linewidth}
                \centering
                {\scriptsize GFlowNet}
                \includegraphics[width=\linewidth]{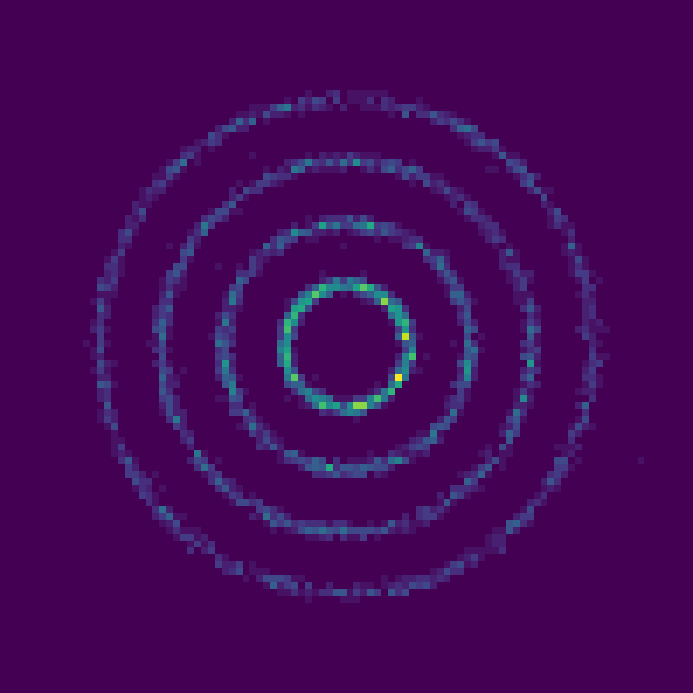}
            \end{minipage}
            \begin{minipage}[t]{0.48\linewidth}
                \centering
                {\scriptsize LEAPS}
                \includegraphics[width=\linewidth]{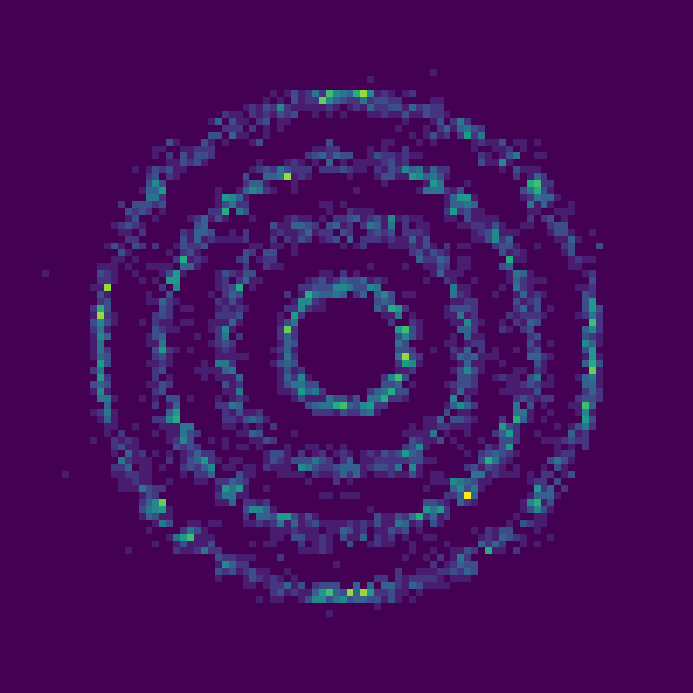}
            \end{minipage}
            \begin{minipage}[t]{0.48\linewidth}
                \centering
                {\scriptsize Oracle}
                \includegraphics[width=\linewidth]{figures/toy2d/gibbs/rings_gibbs_samples.png}
            \end{minipage}
            \begin{minipage}[t]{0.48\linewidth}
                \centering
                {\scriptsize \ours}
                \includegraphics[width=\linewidth]{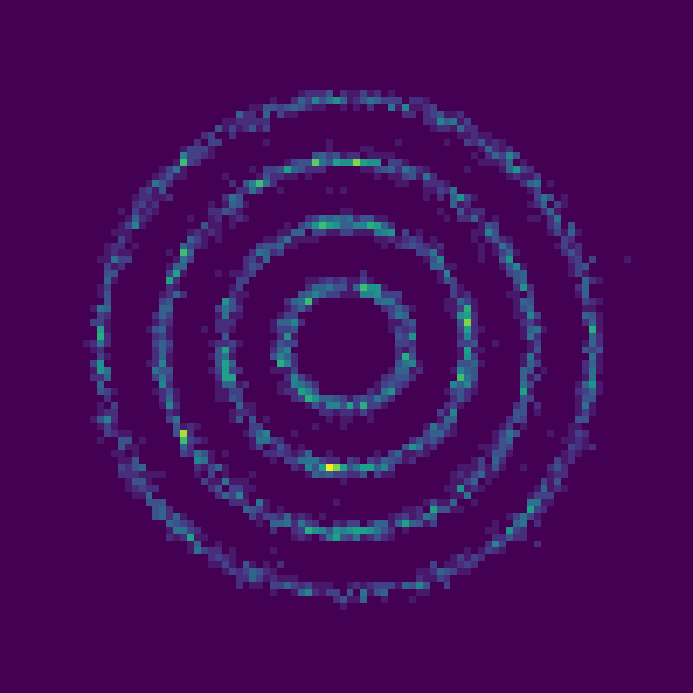}
            \end{minipage}
        \end{minipage}
    \end{minipage}
    \caption{Comparison between different discrete samplers on pre-trained EBMs.}
    \label{fig:sampling_toy2d}
\end{figure}

\subsection{Sampling from Unnormalised Distributions}

\textbf{Sampling from Pre-trained EBMs.}
We begin by evaluating the effectiveness of our method by sampling from a pre-trained deep energy-based model.
Specifically, we train an EBM on 32-dimensional binary data obtained by applying the Gray code transformation \citep{waggener1995pulse} to a 2D continuous plane, following \citet{dai2020learning}.
The EBM consists of a 4-layer MLP with 256 hidden units per layer and is trained using the discrete energy discrepancy introduced in \citet{schroder2024energy}.
Therefore, the trained EBM defines an unnormalised distribution, upon which we train a discrete neural sampler.
We benchmark \ours against three baselines: (i) long-run Gibbs sampling \citep{casella1992explaining} as the oracle; (ii) GFlowNet with trajectory balance \citep{malkin2022trajectory}; and (iii) LEAPS \citep{Holderrieth2025LEAPSAD} with the proposed leTF network.

The results, shown in \cref{fig:sampling_toy2d}, demonstrate that the proposed method, \ours, produces samples that closely resemble those from the oracle Gibbs sampler.
In contrast, GFlowNet occasionally suffers from mode collapse, particularly on structured datasets such as the checkerboard pattern.
Although LEAPS with leTF achieves performance comparable to \ours, it sometimes produces inaccurate samples that fall in smoother regions of the energy landscape, potentially due to imprecise estimation of $\partial_t \log Z_t$.
Furthermore, we observe that LEAPS with leConv performs poorly in this setting (see \cref{fig:leaps_leconv_letf_deep_ebm}), reinforcing the limited expressiveness of locally equivariant convolutional networks when applied to non-grid data structures.
For a more comprehensive evaluation, additional visualisations on other datasets are provided in \cref{fig:appendix_sampling_toy2d}, further illustrating the effectiveness of our method.

\textbf{Sampling from Ising Models.}
We further evaluate our method on the task of sampling from the lattice Ising model, which has the form of
\begin{align} \label{eq:lattic_ising_definition}
    \!\!\!\!\!p(x) \!\propto\! \exp(x^T J x), x \!\in\! \{-1,1\}^{\!D}\!,\!\!\!
\end{align}
where $J = \sigma A_D$ with $\sigma \in \mathbb{R}$ and $A_D$ being the adjacency matrix of a $D\times D$ grid.\footnote{The adjacency matrix is constructed using A\_D = igraph.Graph.Lattice(dim=[D, D], circular=True).}
In \cref{fig:ising_compare_to_baselines}, we evaluate \ours on a $D=10\times 10$ lattice grid with $\sigma=0.1$, comparing it to baselines methods in terms of effective sample size (ESS) (see \cref{sec:appendix_is} for details) and the energy histogram of $5,000$ samples.
The oracle energy distribution is approximated using long-run Gibbs sampling.
The results show that \ours performs competitively with LEAPS and significantly outperforms GFlowNet, which fails to capture the correct mode of the energy distribution.
Although LEAPS with leConv achieves a comparable effective sample size, it yields a less accurate approximation of the energy distribution compared to \ours. Furthermore, \ours attains a lower loss value, as shown in \cref{fig:appendix_loss_leaps_dnfs}. 
\begin{wrapfigure}{r}{0.59\linewidth}
\vspace{-2.5mm}
\centering
    \begin{minipage}[t]{1.\linewidth}
        \centering
        \begin{minipage}[t]{0.48\linewidth}
            \centering
            \includegraphics[width=.99\linewidth]{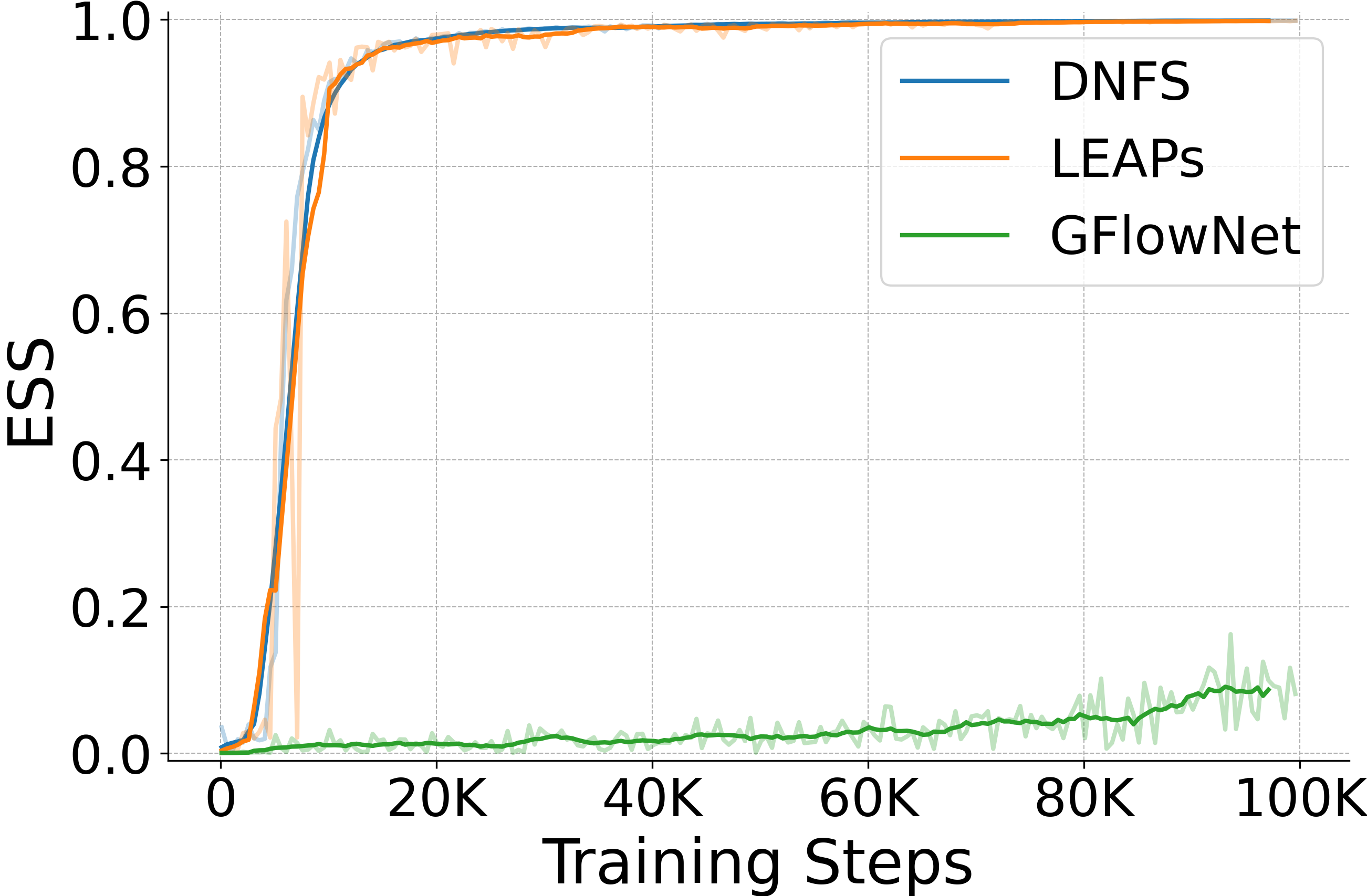}
        \end{minipage}
        \begin{minipage}[t]{0.48\linewidth}
            \centering
            \includegraphics[width=.99\linewidth]{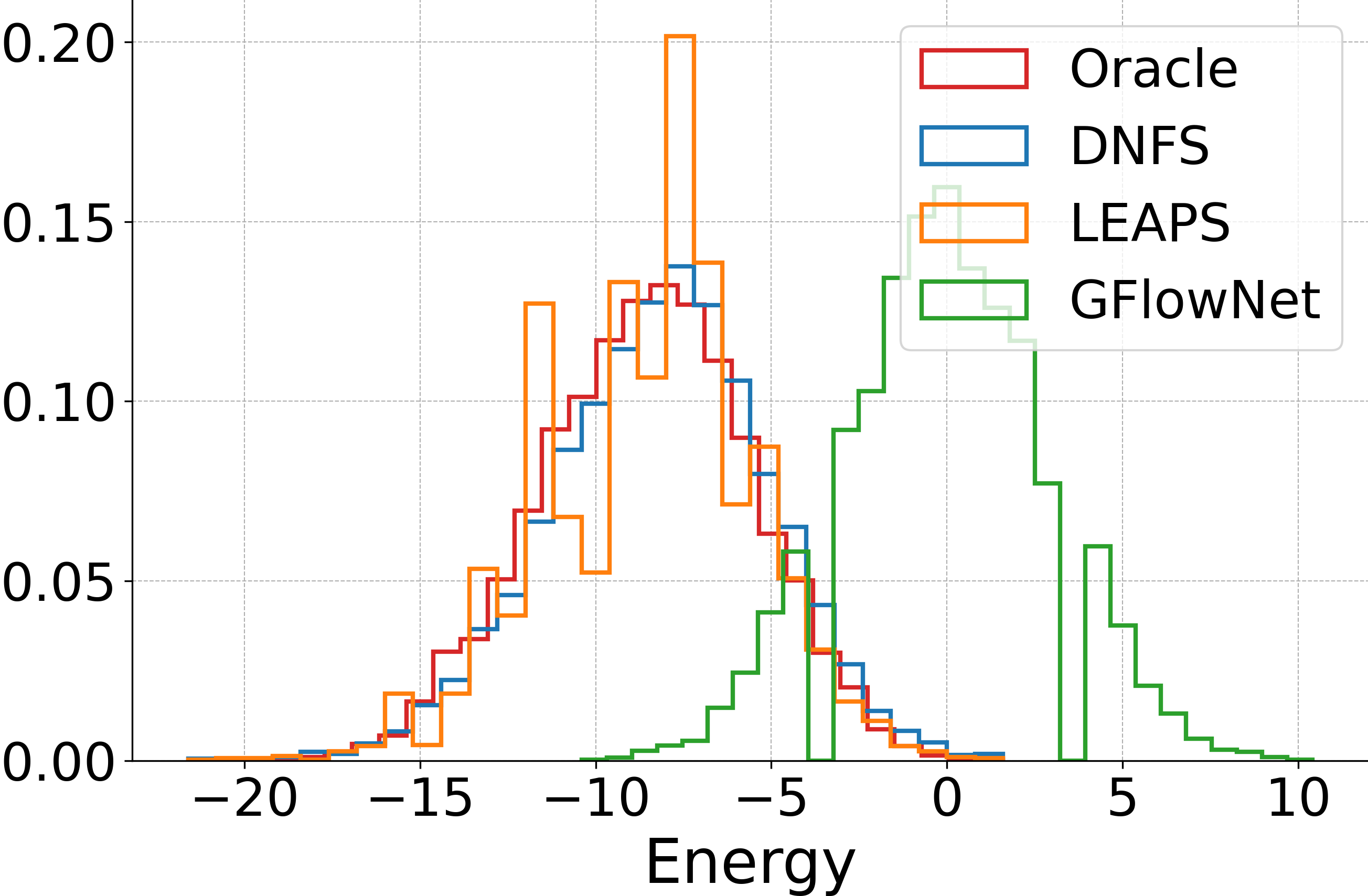}
        \end{minipage}
    \end{minipage}
\vspace{-4mm}
\caption{Comparison of effective sample size and histogram of sample energy on the lattice Ising model.}
\vspace{-5mm}
\label{fig:ising_compare_to_baselines}
\end{wrapfigure}
These findings underscore the effectiveness of the proposed neural sampler learning algorithm and highlight the strong generalisation capability of leTF across both grid-structured and non-grid data. For a more comprehensive evaluation, we also compare our method with MCMC-based approaches in \cref{fig:ising_compare_to_mcmc_baselines}, further demonstrating the efficacy of our approach.

\begin{figure}[!t]
% \vspace{-10mm}
% % Hacky ways to use \vspace & \hspace (＃－.－)
    \centering
    \begin{minipage}[t]{0.325\linewidth}
        \begin{minipage}[t]{\linewidth}
            \centering
            \begin{minipage}[t]{0.3\linewidth}
                \centering
                {\scriptsize Data}\\
                \vspace{.52mm}
                \includegraphics[width=\linewidth]{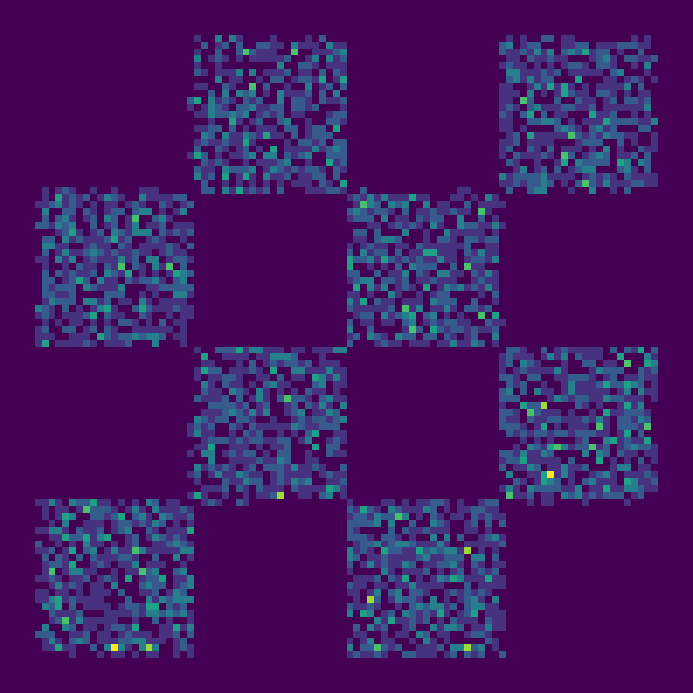}
            \end{minipage}
            \begin{minipage}[t]{0.3\linewidth}
                \centering
                {\scriptsize Energy}
                \includegraphics[width=\linewidth]{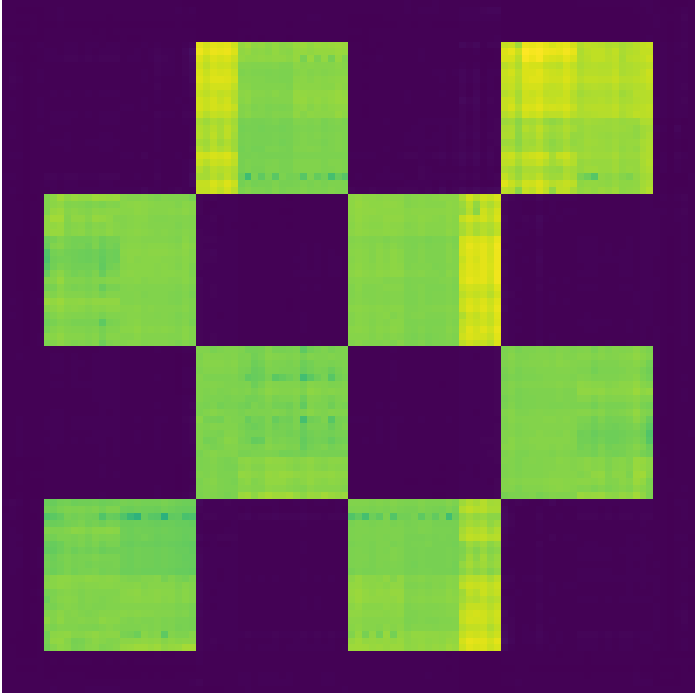}
            \end{minipage}
            \begin{minipage}[t]{0.3\linewidth}
                \centering
                {\scriptsize Samples}
                \includegraphics[width=\linewidth]{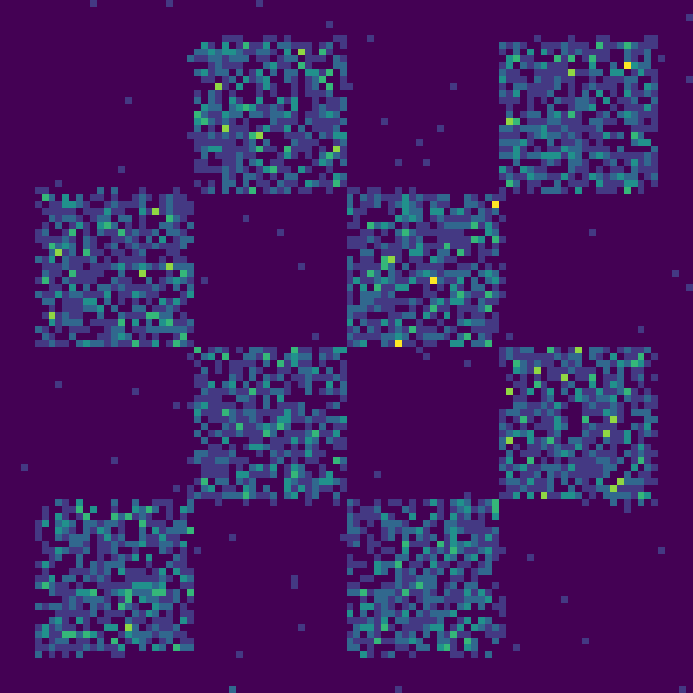}
            \end{minipage}
        \end{minipage}
    \end{minipage}
    \begin{minipage}[t]{0.325\linewidth}
        \begin{minipage}[t]{\linewidth}
            \centering
            \begin{minipage}[t]{0.3\linewidth}
                \centering
                {\scriptsize Data}\\
                \vspace{.52mm}
                \includegraphics[width=\linewidth]{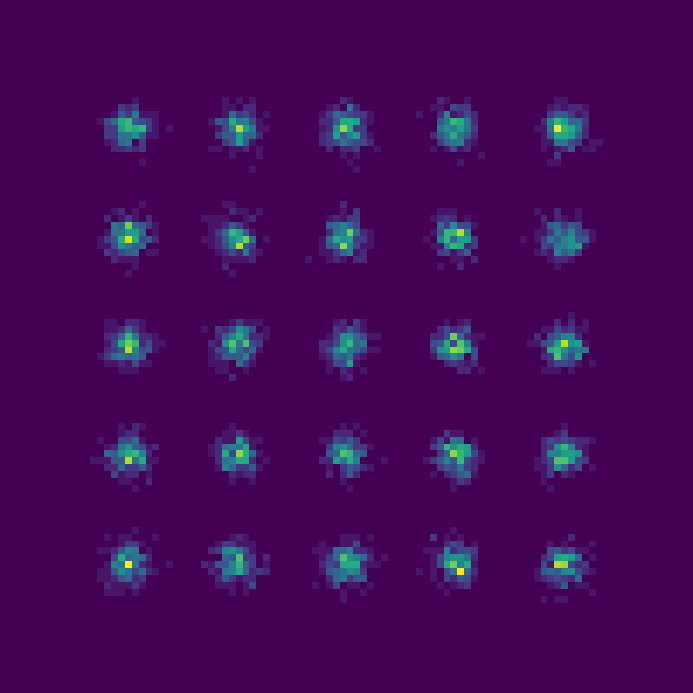}
            \end{minipage}
            \begin{minipage}[t]{0.3\linewidth}
                \centering
                {\scriptsize Energy}
                \includegraphics[width=\linewidth]{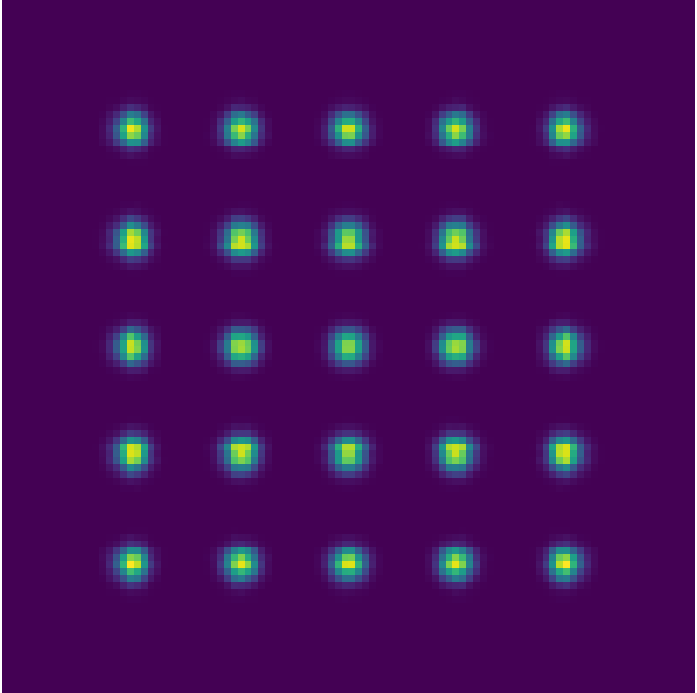}
            \end{minipage}
            \begin{minipage}[t]{0.3\linewidth}
                \centering
                {\scriptsize Samples}
                \includegraphics[width=\linewidth]{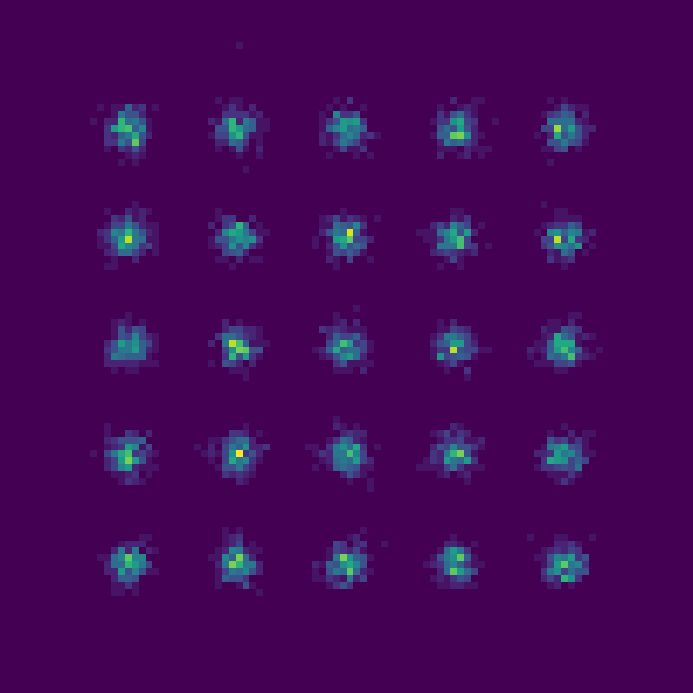}
            \end{minipage}
        \end{minipage}
    \end{minipage}
    \begin{minipage}[t]{0.325\linewidth}
        \begin{minipage}[t]{\linewidth}
            \centering
            \begin{minipage}[t]{0.3\linewidth}
                \centering
                {\scriptsize Data}\\
                \vspace{.52mm}
                \includegraphics[width=\linewidth]{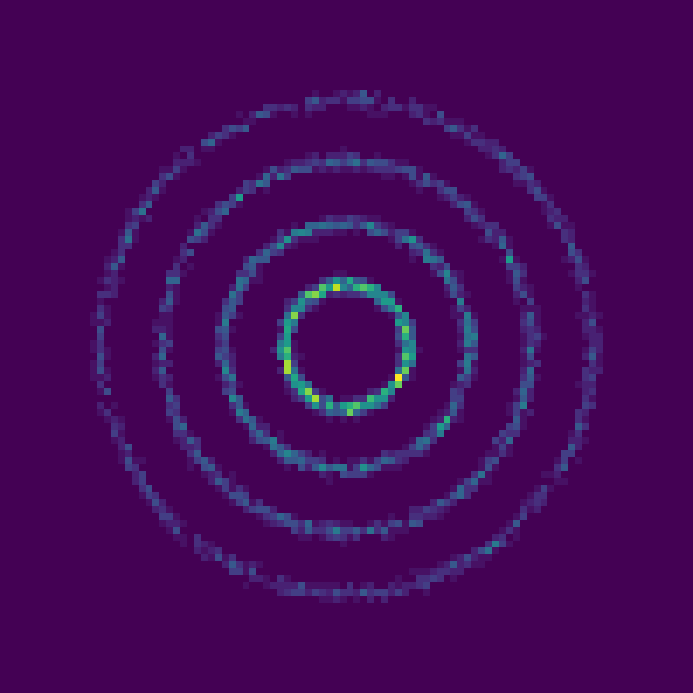}
            \end{minipage}
            \begin{minipage}[t]{0.3\linewidth}
                \centering
                {\scriptsize Energy}
                \includegraphics[width=\linewidth]{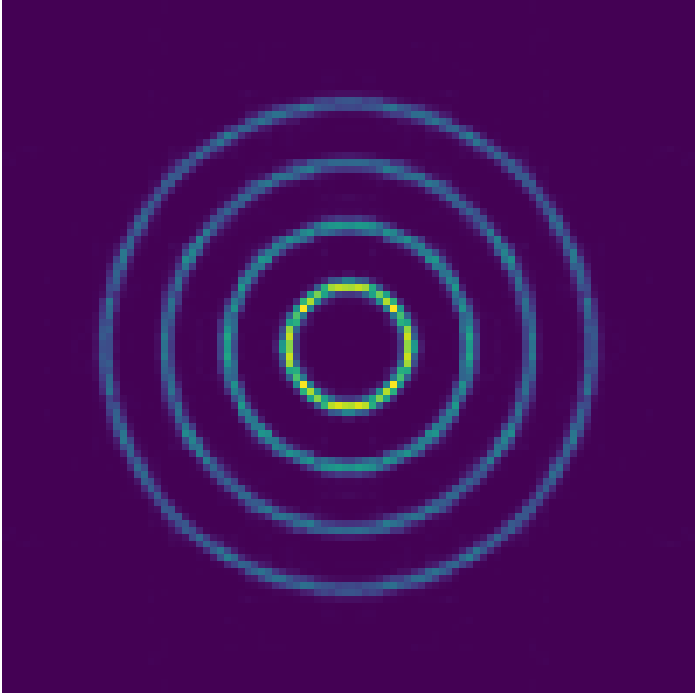}
            \end{minipage}
            \begin{minipage}[t]{0.3\linewidth}
                \centering
                {\scriptsize Samples}
                \includegraphics[width=\linewidth]{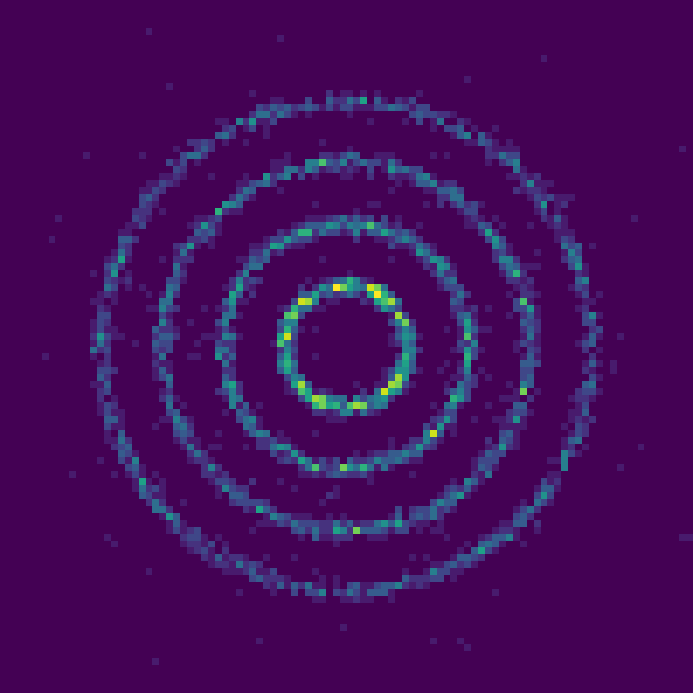}
            \end{minipage}
        \end{minipage}
    \end{minipage}
    \caption{Results of probability mass estimation in training discrete EBMs. We visualise the training data, learned energy landscape, and the synthesised samples of \ours.}
    \label{fig:ebm_toy2d}
    \vspace{-5mm}
\end{figure}

\subsection{Training Discrete Energy-based Models}
A key application of \ours is training energy-based models (EBMs). Specifically, EBMs define a parametric distribution $p_\phi \propto \exp (-E_\phi (x))$, where the goal is to learn an energy function $E_\phi$ that approximates the data distribution.
EBMs are typically trained using contrastive divergence \citep{hinton2002training}, which estimates the gradient of the log-likelihood as
\begin{align} \label{eq:cd_loss}
    \nabla_\phi \log p_\phi (x) = \E_{p_\phi (y)} [\nabla_\phi E_\phi (y)] - \nabla_\phi E_\phi (x).
\end{align}
To approximate this intractable gradient, we train a rate matrix $R_t^\theta$ to sample from the target $p_\phi$. This enables using importance sampling to estimate the expectation (see \cref{sec:appendix_is} for details).
\begin{align} \label{eq:cd_is_dfs}
    \E_{p_\phi (x)} [\nabla_\phi E_\phi (x)] \approx \sum_{k=1}^K \frac{\exp(w^{(k)})}{\sum_{j=1}^K \exp (w^{(j)})} \nabla_\phi E_\phi (x^{(k)}), \quad w^{(k)} = \int_0^1 \xi_t (x_t; R_t^\theta) \dif t.
\end{align}
This neural-sampler-based approach is more effective than traditional MCMC methods, as in optimal training, it has garuantee to produce exact samples from the target distribution within a fixed number of sampling steps. Moreover, neural samplers are arguabelly easier to discover regularities and jump between modes compared to MCMC methods, leading to better exploration of the whole energy landscape, and thus results in a more accurate estimate of the energy function \citep{zhang2022generative}.
To demonstrate the effectiveness of \ours in energy-based modelling, we conduct experiments on probability mass estimation with synthetic data and training Ising models.
% We defer the detailed training algorithm to \cref{sec:appendix_train_ebm_alg}.

\textbf{Probability Mass Estimation.}
Following \cite{dai2020learning}, we first generate 2D floating-point data from several two-dimensional distributions. Each dimension is then encoded using a 16-bit Gray code, resulting in a 32-dimensional training dataset with 2 possible states per dimension. 
% We then train an energy-based model using the proposed \ours method, as described in \cref{sec:train_ebms}.

\cref{fig:ebm_toy2d} illustrates the estimated energy landscape alongside samples generated using the trained \ours sampler. 
The results demonstrate that the learned EBM effectively captures the multi-modal structure of the underlying distribution, accurately modelling the energy across the data support.
The sampler produces samples that closely resemble the training data, highlighting the effectiveness of \ours in training discrete EBMs. Additional qualitative results are presented in \cref{fig:appendix_ebm_toy2d}.
In \cref{tab:synthetic_toy2d}, we provide a quantitative comparison of our method with several baselines, focusing in particular on two contrastive divergence (CD)-based approaches: PCD \citep{tieleman2008training} with MCMC and ED-GFN \citep{zhang2022generative} with GFlowNet. Our method, built upon the proposed \ours, consistently outperforms PCD in most settings, underscoring the effectiveness of \ours in training energy-based models. While ED-GFN achieves better performance than \ours, it benefits from incorporating a Metropolis-Hastings (MH) \citep{hastings1970monte} correction to sample from the model distribution $p_\phi$ in \cref{eq:cd_loss}, which may offer an advantage over the importance sampling strategy used in \cref{eq:cd_is_dfs}. We leave the integration of MH into \ours as a direction for future work.

\begin{wrapfigure}{r}{0.50\linewidth}
\vspace{-4mm}
\centering
    \begin{minipage}[t]{0.32\linewidth}
        \centering
        \hspace{-3mm}{\scriptsize Ground Truth}
        \includegraphics[width=\linewidth]{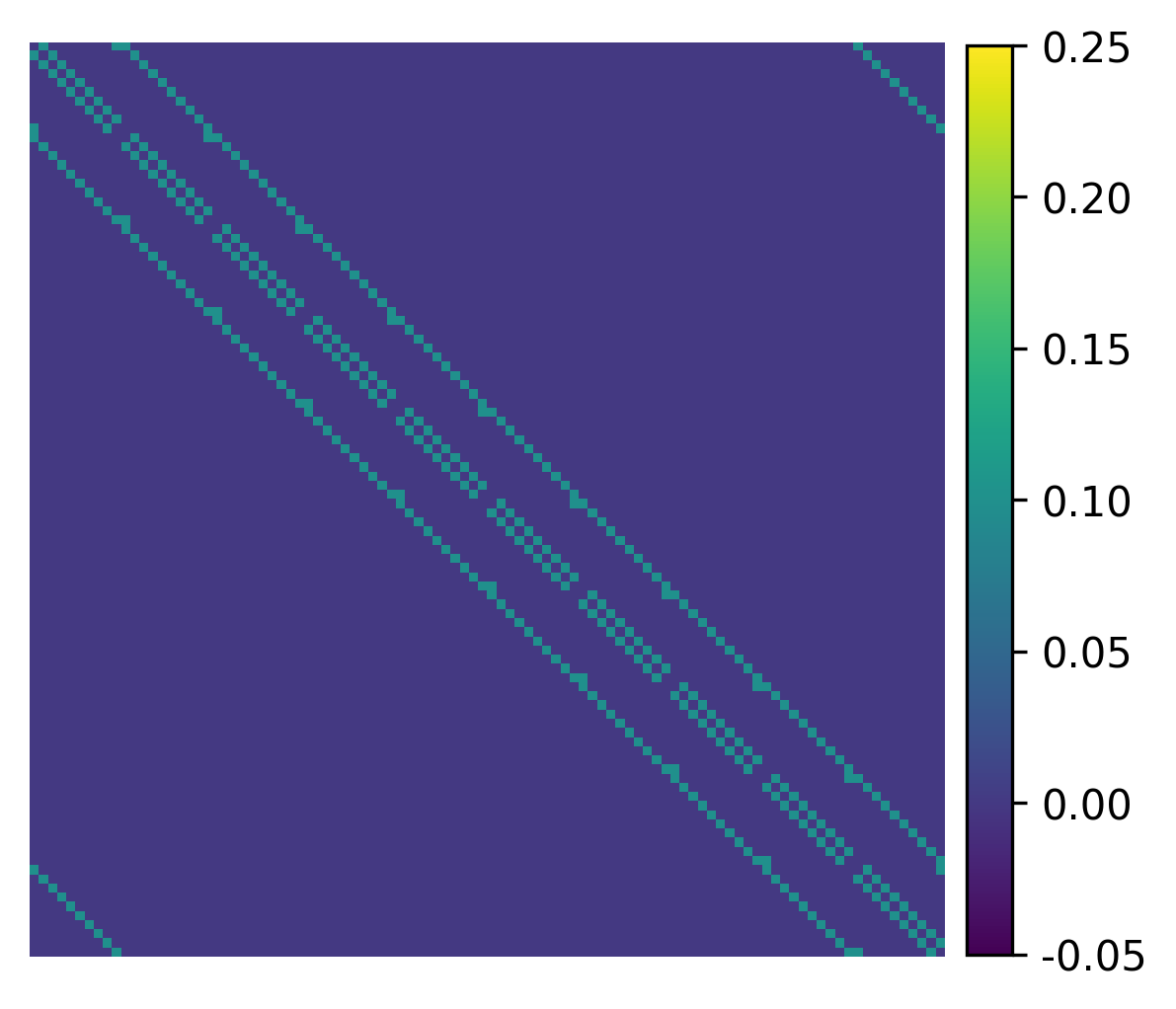}
    \end{minipage}
    \begin{minipage}[t]{0.32\linewidth}
        \centering
        \hspace{-3mm}{\scriptsize Random Initialised}
        \includegraphics[width=\linewidth]{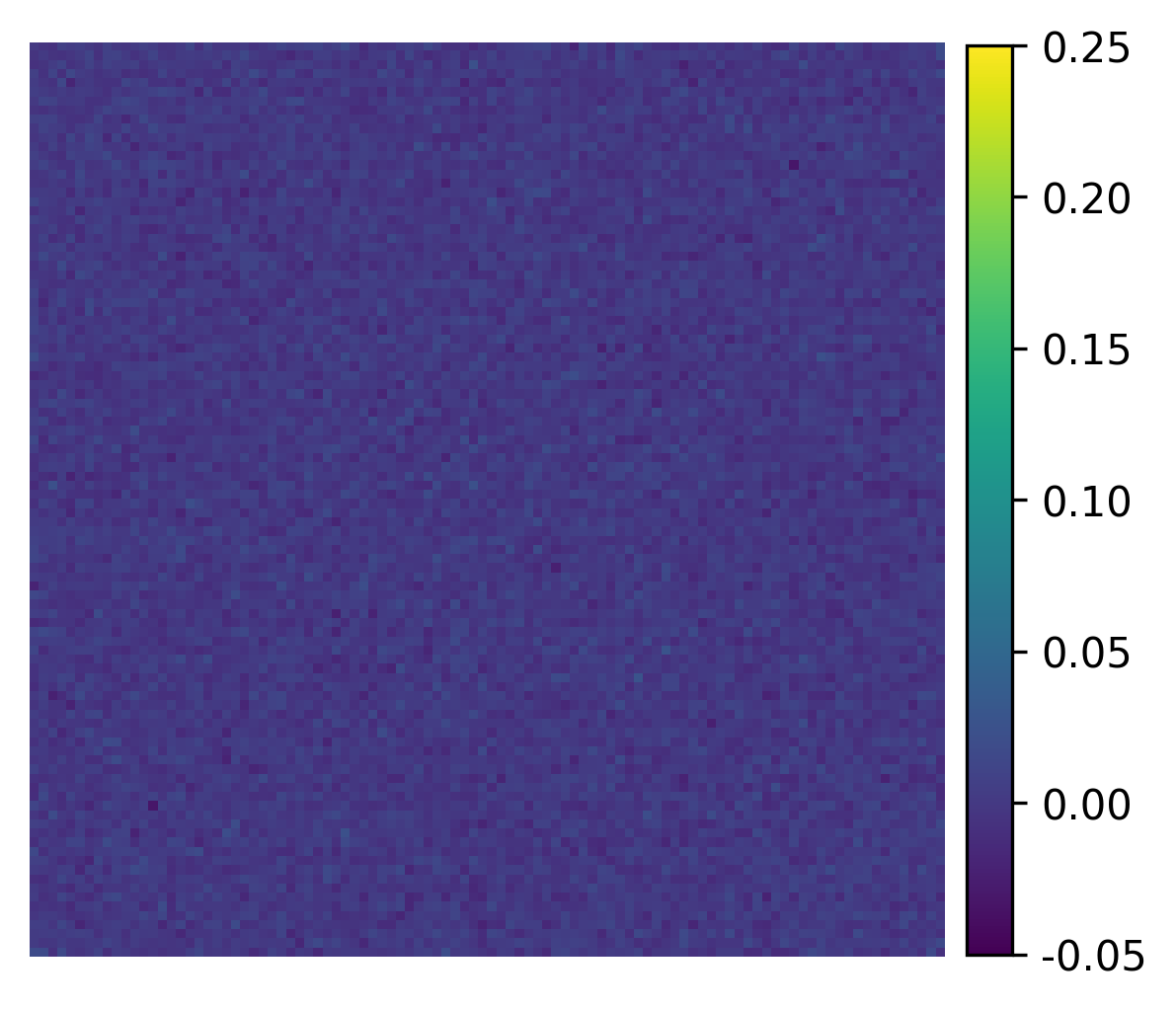}
    \end{minipage}
    \begin{minipage}[t]{0.32\linewidth}
        \centering
        \hspace{-3mm}{\scriptsize Learned}
        \includegraphics[width=\linewidth]{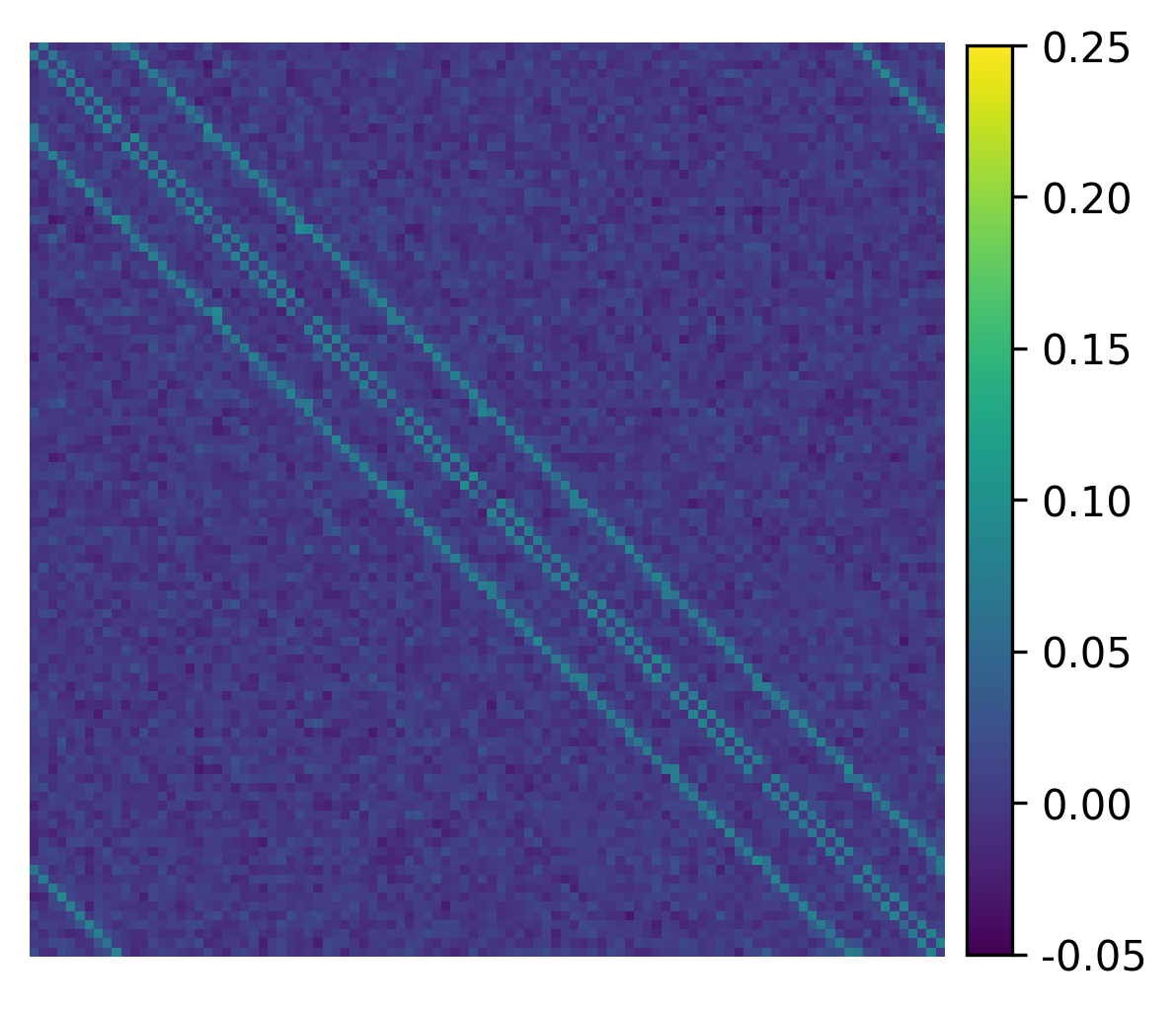}
    \end{minipage}
\vspace{-2mm}
\caption{Results on learning Ising models.}
\vspace{-4mm}
\label{fig:ising_learned_matrix}
\end{wrapfigure}
\textbf{Training Ising Models.}
We further assess \ours for training the lattice model defined in \cref{eq:lattic_ising_definition}.
Following \citet{grathwohl2021oops}, we generate training data using Gibbs sampling and use these samples to learn a symmetric matrix $J_\phi$ to estimate the true matrix in the Ising model. Importantly, the training algorithms do not have access to the true data-generating matrix $J$, but only to the synthesised samples.

In \cref{fig:ising_learned_matrix}, we consider a $D = 10 \times 10$ grid with $\sigma = 0.1$ and visualise the learned matrix $J_\phi$ using a heatmap. The results show that the proposed method successfully captures the underlying pattern of the ground truth, demonstrating the effectiveness of \ours.
Further quantitative analysis across various configurations of $D$ and $\sigma$ is presented in \cref{tab:ising_results_appendix}.

\subsection{Solving Combinatorial Optimisation Problems}
Another application of \ours is solving combinatorial optimisation problems. As an example, we describe how to formulate the maximum independent set as a sampling problem.

\textbf{Maximum Independent Set as Sampling.}
Given a graph $G = (V, E)$, the maximum independent set (MIS) problem aims to find the largest subset of non-adjacent vertices.
It can be encoded as a binary vector $x \in \{0,1\}^{|V|}$, where $x_i = 1$ if vertex $i$ is included in the set, and $x_i = 0$ otherwise. The objective is to maximise $\sum_{i=1}^{|V|} x_i$ subject to  $x_i  x_j = 0$ for all $(i, j) \in E$.
This can be formulated as sampling from the following unnormalised distribution:
\begin{align}
    p(x) \propto \exp\left( \frac{1}{T} \left(\sum_{i=1}^{|V|} x_i - \lambda \sum_{(i,j)\in E} x_i x_j \right) \right),
\end{align}
where $T > 0$ is the temperature and $\lambda > 1$ is a penalty parameter enforcing the independence constraint. As $T \to 0$, $p(x)$ uniformly concentrates on the maximum independent sets.

Therefore, we can train \ours to sample from $p(x)$, which will produce high-quality solutions to the MIS problem. To enable generalisation across different graphs $G$, we condition the locally equivariant transformer on the graph structure. Specifically, we incorporate the Graphformer architecture \citep{ying2021transformers}, which adjusts attention weights based on the input graph. This allows the model to adapt to varying graph topologies. We refer to this architecture as the locally equivariant Graphformer (leGF), with implementation details provided in \cref{sec:appendix_leGF}.

\textbf{Experimental Settings.}
In this experiment, we apply our method to solve the Maximum Independent Set (MIS) problem, with other settings deferred to \cref{sec:appendix_sol_combopt}. Specifically, we benchmark MIS on Erd\H{o}s--R\'enyi (ER) random graphs \citep{erdos1961evolution}, comprising 1,000 training and 100 testing instances, each with 16 to 75 vertices. Evaluation on the test set includes both performance and inference time. We report the average solution size and the approximation ratio with respect to the best-performing mixed-integer programming solver (\textsc{Gurobi}) \citep{optimization2021llc}, which serves as the oracle.

\textbf{Results \& analysis.}
We compare our method against two baselines: an annealed MCMC sampler \citep{sun2023revisiting} using DMALA \citep{zhang2022langevin}, and a neural sampler based on GFlowNet \citep{zhang2023let}. Additionally, we include results from a randomly initialised version of \ours without training, which serves as an estimate of the task's intrinsic difficulty.
As shown in \cref{tab:mis}, \ours after training substantially outperforms its untrained counterpart, highlighting the effectiveness of our approach.
While the MCMC-based method achieves the strongest overall performance, it requires longer inference time.
Compared to GFlowNet, another neural sampler, \ours performs slightly worse.
This may be attributed to the fact that GFlowNet restricts sampling to feasible solutions only along the trajectory, effectively reducing the exploration space and making the learning problem easier. Incorporating this inductive bias into \ours is a promising direction for future work.
Nevertheless, a key advantage of our method is that the unnormalised marginal distribution $p_t$ is known, allowing us to integrate additional MCMC steps to refine the sampling trajectory. As shown in the last row of \cref{tab:mis}, this enhancement leads to a substantial performance gain. Further analysis of this approach is provided in \cref{tab:appendix_mis_dnfs_dlama} in the appendix.

\begin{table}[t]
% \vspace{-8mm}
\setlength\tabcolsep{4.pt}
\caption{Max independent set experimental results. We report the absolute performance, approximation ratio (relative to \textsc{Gurobi}), and inference time.
}
\label{tab:mis}
\centering
% \begin{small}
\begin{footnotesize}
% \begin{scriptsize}
\begin{sc}
    \begin{tabular}{lcrrcrrcrr}
    \toprule
    \multirow{2}{*}{Method} & 
    \multicolumn{3}{c}{ER16-20} & \multicolumn{3}{c}{ER32-40} & \multicolumn{3}{c}{ER64-75} \\
    \cmidrule(lr){2-4}\cmidrule(lr){5-7}\cmidrule(lr){8-10}
    & Size $\uparrow$ & Drop $\downarrow$ & Time $\downarrow$ & Size $\uparrow$ & Drop $\downarrow$ & Time $\downarrow$ & Size $\uparrow$ & Drop $\downarrow$ & Time $\downarrow$ \\
    \midrule
    \textsc{Gurobi} & $8.92$ & $0.00\%$ & 4:00 &  $14.62$ & $0.00\%$ & 4:03 & $20.55$ & $0.00\%$ & 4:10 \\
    \midrule
    Random & $5.21$ & $41.6\%$ & 0:03 &  $6.31$ & $56.8\%$ & 0:06 & $8.63$ & $58.0\%$ & 0:09 \\
    DMALA & $8.81$ & $1.23\%$ & 0:21 & $14.02$ & $4.10\%$ & 0:22 & $19.54$ & $4.91\%$ & 0:24 \\
    GFlowNet & $8.75$ & $1.91\%$ & 0:02 & $13.93$ & $4.72\%$ & 0:04 & $19.13$ & $6.91\%$ & 0:07 \\
    \ours & $8.28$ & $7.17\%$ & 0:03 & $13.18$ & $9.85\%$ & 0:06 & $18.12$ & $11.8\%$ & 0:09 \\
    \ours\!\!+DMALA & $8.91$ & $0.11\%$ & 0:10 & $14.31$ & $2.12\%$ & 0:15 & $20.06$ & $2.38\%$ & 0:22 \\
    \bottomrule
    \end{tabular}
\end{sc}
% \end{scriptsize}
\end{footnotesize}
% \end{small}
% \vspace{-3mm}
\end{table}

\section{Related Work}

\textbf{CTMCs and Discrete Diffusion.}
Our work builds on the framework of continuous-time Markov chains (CTMCs), which were first introduced in generative modelling by \cite{austin2021structured,sun2022score,campbell2022continuous} under the context of continuous-time discrete diffusion models, where the rate matrix is learned from training data. 
This approach was later simplified and generalised to discrete-time masked diffusion \citep{shi2024simplified,sahoo2024simple,ou2024your}, demonstrating strong performance across a wide range of applications, including language modelling \citep{lou2023discrete,zhang2025target}, molecular simulation \citep{campbell2024generative}, and code generation \citep{gat2024discrete,gong2025diffucoder}.
However, these methods require training data and are inapplicable when only an unnormalised target is given.

\textbf{MCMC and Neural Samplers.}
Markov chain Monte Carlo (MCMC) \citep{metropolis1953equation} is the \textit{de facto} approach to sampling from a target distribution. In discrete spaces, Gibbs sampling \citep{casella1992explaining} is a widely adopted method. Building on this foundation, \cite{zanella2020informed} improve the standard Gibbs method by incorporating locally informed proposals to improve sampling efficiency.
This method was extended to include gradient information to drastically reduce the computational complexity of flipping bits in several places. This idea was further extended by leveraging gradient information \citep{grathwohl2021oops,sun2021path}, significantly reducing the computational cost.
Inspired by these developments, discrete analogues of Langevin dynamics have also been introduced to enable more effective sampling in high-dimensional discrete spaces \citep{zhang2022langevin,sun2023discrete}.
Despite their theoretical appeal, MCMC methods often suffer from slow mixing and poor convergence in practice.
To address these limitations, recent work has proposed neural samplers, including diffusion-based \citep{vargas2023transport,chen2024sequential,richter2023improved} and flow-based \citep{mate2023learning,tian2024liouville,chen2025neural} approaches. 
However, the majority of these methods are designed for continuous spaces, and there remains a notable gap in the literature when it comes to sampling methods for discrete distributions.
A few exceptions include \cite{sanokowski2024diffusion,sanokowski2025scalable}, which are inspired by discrete diffusion models and primarily target combinatorial optimisation problems.
A concurrent work, MDNS \citep{zhu2025mdns}, introduces a masked diffusion neural sampler grounded in stochastic optimal control theory \citep{berner2022optimal}.
LEAPS \citep{Holderrieth2025LEAPSAD} and our method \ours are more closely related to discrete flow models \citep{campbell2024generative,gat2024discrete}, as both can be view as learning a CTMC to satisfy the Kolmogorov forward equation. While LEAPS parametrise $\partial_t \log Z_t$ using a neural network, \ours estimates it via coordinate descent.
% LEAPS is based on a physics-informed neural network (PINN) loss \citep{albergo2024nets}, while \ours learns a CTMC to satisfy the Kolmogorov forward equation.

\textbf{Discrete EBMs and Neural Combinatorial Optimisation.}
Contrastive divergence is the \textit{de facto} approach to train energy-based models, but it relies on sufficiently fast mixing of Markov chains, which typically cannot be achieved \citep{nijkamp2020anatomy}. To address this, several sampling-free alternatives have been proposed, including energy discrepancy \citep{schroder2023energy,schroder2024energy}, ratio matching \citep{lyu2012interpretation}, and variational approaches \citep{lazaro2021perturb}.
More recently, \cite{zhang2022generative} replace MCMC with GFlowNet, a neural sampler that arguably offers improvement by reducing the risk of getting trapped in local modes.
Our work follows this line of research by using \ours as a neural alternative to MCMC for training energy-based models.
Sampling methods are also widely used to solve combinatorial optimisation problems (COPs). Early work \citep{sun2022annealed} demonstrated the effectiveness of MCMC techniques for this purpose. More recent approaches \citep{zhang2023let,sanokowski2024diffusion,sanokowski2025scalable} leverage neural samplers to learn amortised solvers for COPs. In this paper, we further show that \ours is well-suited for combinatorial optimisation tasks, demonstrating its flexibility and broad applicability.
\section{Conclusion and Limitation} \label{sec:conclusion_limit}
We proposed discrete neural flow samplers (\ours), a discrete sampler that learns a continuous-time Markov chain to satisfy the Kolmogorov forward equation.
While our empirical studies demonstrate the effectiveness of \ours across various applications, it also presents several limitations.
A natural direction for future work is to extend \ours beyond binary settings. However, this poses significant challenges due to the high computational cost of evaluating the ratios in \cref{eq:dfs-loss-v2}. As demonstrated in \cref{sec:appendix_ratio_ext}, a naive Taylor approximation introduces bias into the objective, resulting in suboptimal solutions. Overcoming this limitation will require more advanced and principled approximation techniques.
Additionally, we find that the current framework struggles to scale to very high-dimensional distributions. This difficulty arises mainly from the summation over the ratios in \cref{eq:dfs-loss-v2}, which can lead to exploding loss values. Designing methods to stabilise this computation represents a promising avenue for future research.
Finally, extending \ours to the masked diffusion setting offers another compelling direction, with the potential to support more flexible and efficient sampling over structured discrete spaces.

\textbf{Broader impact.}
This paper aims to advance machine learning research. While there may be potential societal impacts, none require specific mention at this time.

\section*{Acknowledgements}
ZO is supported by the Lee Family Scholarship.
We would like to thank Tobias Schröder and Sangwoong Yoon for their valuable discussions on the early draft of this work. ZO also thanks Peter Holderrieth for generously sharing his insights on LEAPS during their discussion at ICLR 2025.

{
\small 
\bibliography{main}
\bibliographystyle{sections_neurips/icml2022}
}

\appendix
\newpage

\begin{center}
\LARGE
\textbf{Appendix for ``Discrete Neural Flow Samplers with Locally Equivariant Transformer''}
\end{center}

\etocdepthtag.toc{mtappendix}
\etocsettagdepth{mtchapter}{none}
\etocsettagdepth{mtappendix}{subsection}
{\small \tableofcontents}

\section{Variance Reduction and Control Variates}
In this section, we analyse the variance reduction phenomenon shown in \cref{fig:ising_var_reduction} through the lens of control variates. We begin with the proof of \cref{eq:log_Zt_est} and the discussion of control variates, then establish its connection to LEAPS \citep{Holderrieth2025LEAPSAD}.

\subsection{Proof of \cref{eq:log_Zt_est}} \label{sec:appendix_proof_logzeq}
Recall from \cref{eq:log_Zt_est} that the following equation holds:
\begin{align}
    \partial_t \log Z_t = \argmin_{c_t} \E_{p_t} (\xi_t (x; R_t) - c_t)^2, \xi_t (x; R_t) \triangleq \partial_t \log \tilde{p}_t (x_t)  - \sum_y R_t(x, y) \frac{p_t(y)}{p_t(x)}. \nonumber
\end{align}
We now provide a detailed proof of this result, beginning with two supporting lemmas.

\begin{lemma}[Discrete Stein Identity] \label{lemma:discrete-stein-identity}
    Given a rate matrix $R(y,x)$ that satisfies $R(x, x) = -\sum_{y \ne x} R(y,x)$, we have the identity $\E_{p(x)} \sum_y R(x, y)\frac{p(y)}{p(x)} = 0$.
\end{lemma}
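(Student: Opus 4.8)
The plan is to prove the identity $\E_{p(x)} \sum_y R(x, y)\frac{p(y)}{p(x)} = 0$ by a direct computation that unpacks the expectation and exploits the row-sum (conservation) property of the rate matrix $R(x,x) = -\sum_{y \ne x} R(y,x)$. First I would write out the left-hand side as a double sum over the state space:
\begin{align}
    \E_{p(x)} \sum_y R(x, y)\frac{p(y)}{p(x)} = \sum_x p(x) \sum_y R(x, y)\frac{p(y)}{p(x)} = \sum_x \sum_y R(x, y) p(y), \nonumber
\end{align}
where the factors of $p(x)$ cancel. The key step is then to swap the order of summation and collect, for each fixed $y$, the coefficient of $p(y)$: this coefficient is $\sum_x R(x, y)$, which is exactly the $y$-th column sum of $R$. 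By the defining property of a rate matrix, $\sum_x R(x,y) = R(y,y) + \sum_{x \ne y} R(x,y) = -\sum_{x \ne y} R(x,y) + \sum_{x \ne y} R(x,y) = 0$. Hence every term in $\sum_y \big(\sum_x R(x,y)\big) p(y)$ vanishes, giving the claimed identity.

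The one point requiring a little care — and the main (mild) obstacle — is justifying the interchange of the two summations. In the finite-state setting $\mathcal{X} = \{1,\dots,S\}^d$ this is trivial since the sums are finite; in a countably infinite setting one would invoke absolute summability, which holds under the standard assumption that $R$ has finite exit rates and $p$ is a genuine probability distribution, so that $\sum_{x,y} |R(x,y)| p(y) = \sum_y p(y)\sum_x |R(x,y)| < \infty$. Since the paper works over the finite space $\mathcal{X}$, I would simply note that all sums are finite and the rearrangement is unconditionally valid, then conclude. This also makes transparent why $p$ can be replaced by any other distribution $q$ with the same support in the downstream use of the lemma: the cancellation of $p(x)$ against $\frac{1}{p(x)}$ leaves an expression whose vanishing depends only on the column sums of $R$ being zero, not on which distribution generated the outer expectation — a remark I would add at the end to foreshadow \cref{eq:log_Zt_est}.
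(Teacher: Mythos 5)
Your proposal is correct and follows essentially the same route as the paper's proof: cancel the $p(x)$ factors, swap the order of summation, and use that each column of $R$ sums to zero by the defining property $R(x,x)=-\sum_{y\ne x}R(y,x)$. The extra remarks on finite-state summability and on the distribution-independence of the identity are fine but not needed beyond what the paper already does.
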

\begin{proof}
    To prove the result, notice that
    \begin{align}
        \E_{p(x)} \sum_y R(x, y)\frac{p(y)}{p(x)} 
        = \sum_x \sum_y R(x, y) p(y)
        = \sum_y p(y) \sum_x R(x, y)
        = 0, \nonumber
    \end{align}
    which completes the proof. For a more comprehensive overview of the discrete Stein operator, see \cite{shi2022gradient}.
\end{proof}

\begin{lemma} \label{eq:appendix_l2_loss}
    Let $c_t^* = \argmin_{c_t} \E_{p_t} (\xi_t (x; R_t) - c_t)^2$, then $c_t^* = \E_{p_t} \xi_t (x; R_t)$.
\end{lemma}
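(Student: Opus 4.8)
The plan is to treat this as the elementary fact that a quadratic function of a single real variable is minimised at the mean, so the proof is essentially a one-line completion of the square. First I would fix $t$, regard $\xi_t(\cdot\,;R_t)$ as a real-valued function on the finite state space $\mathcal{X}=\{1,\dots,S\}^d$, and note that because $\mathcal{X}$ is finite and $p_t$ has full support, the expectation $\E_{p_t}[\xi_t(x;R_t)]$ and the second moment $\E_{p_t}[\xi_t(x;R_t)^2]$ are finite sums and hence well-defined; this removes any integrability concern.

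Next I would expand the objective using linearity of expectation:
\[
\E_{p_t}\big(\xi_t(x;R_t)-c_t\big)^2
= \E_{p_t}\!\big[\xi_t(x;R_t)^2\big] - 2c_t\,\E_{p_t}\!\big[\xi_t(x;R_t)\big] + c_t^2 ,
\]
which is a strictly convex quadratic in $c_t$ (leading coefficient $1>0$) and therefore has a unique global minimiser. Differentiating in $c_t$ and setting the derivative to zero gives $-2\,\E_{p_t}[\xi_t(x;R_t)] + 2c_t = 0$, i.e. $c_t^\ast = \E_{p_t}[\xi_t(x;R_t)]$. Equivalently, and perhaps more cleanly, I would complete the square,
\[
\E_{p_t}\big(\xi_t(x;R_t)-c_t\big)^2
= \big(c_t - \E_{p_t}[\xi_t(x;R_t)]\big)^2 + \mathrm{Var}_{p_t}\!\big(\xi_t(x;R_t)\big),
\]
which displays the minimiser $c_t^\ast=\E_{p_t}[\xi_t(x;R_t)]$ directly and incidentally records that the optimal value equals $\mathrm{Var}_{p_t}(\xi_t(x;R_t))$, a fact used in the control-variate discussion.

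There is no real obstacle in this lemma; it is a routine convexity argument, and the only point requiring a word of care is the well-definedness of the moments, which is immediate in the finite-state setting. The substantive content comes afterwards: combining this lemma with Lemma~\ref{lemma:discrete-stein-identity} and the definition of $\xi_t(x;R_t)$ yields $c_t^\ast = \partial_t\log Z_t$, which is the actual claim of Appendix~\ref{sec:appendix_proof_logzeq}; the present lemma is just the convex-optimisation step isolated for clarity.
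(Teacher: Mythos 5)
Your proof is correct and follows essentially the same route as the paper's: expand the objective as a quadratic in $c_t$ and complete the square (the paper's proof is exactly this one-line expansion, with the second moment absorbed into a constant). Your additional remarks on finiteness of the moments and strict convexity are fine but not needed beyond what the paper already does.
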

\begin{proof}
    To see this, we can expand the objective
    \begin{align}
        \mathcal{L}(c_t) = \argmin_{c_t} \E_{p_t} (\xi_t (x; R_t) - c_t)^2 = c_t^2 - 2c_t \E_{p_t} \xi_t (x; R_t) + c = (c_t - \E_{p_t} \xi_t (x; R_t))^2 + c^\prime, \nonumber
    \end{align}
    where the final expression is minimized when $c_t^* = \E_{p_t} \xi_t (x; R_t)$.
\end{proof}
We are now ready to prove \cref{eq:log_Zt_est}. Specifically:
\begin{align}
    c_t^* 
    &= \argmin_{c_t} \E_{p_t} (\xi_t (x; R_t) - c_t)^2 \nonumber \\
    &= \E_{p_t} \xi_t (x; R_t) \nonumber \\
    &= \E_{p_t} \partial_t \log \tilde{p}_t (x)  - \E_{p_t} \sum_y R_t(x, y) \frac{p_t(y)}{p_t(x)} \nonumber \\
    &= \E_{p_t} \partial_t \log \tilde{p}_t (x) = \partial_t \log Z_t, \nonumber
\end{align}
where the second and third equations follow \cref{eq:appendix_l2_loss,lemma:discrete-stein-identity} respectively.

\subsection{Discrete Stein Control Variates} \label{sec:discrete_stein_cv}
To better understand the role of control variates \citep{geffner2018using} in variance reduction as shown in \cref{fig:ising_var_reduction}, let us consider a standard Monte Carlo estimation problem. Suppose our goal is to estimate the expectation $\mu = \E_\pi [f (x)]$, where $f(x)$ is a function of interest.
A basic estimator for $\mu$ is the the Monte Carlo average $\hat{\mu} = \frac{1}{K} \sum_{k=1}^K f(x^{(k)}), x^{(k)} \sim \pi$.
Now, suppose we have access to another function $g(x)$, known as a control variate, which has a known expected value $\gamma = \E_\pi [g(x)]$. We can the use $g(x)$ to construct a new estimator: $\check{\mu} = \frac{1}{K} \sum_{k=1}^K (f(x^{(k)}) - \beta g(x^{(k)})) + \beta \gamma$.
This new estimator $\check{\mu}$ is \textit{unbiased} for any choice of $\beta$, since $\E[\check{\mu}] = \E[f(x)] - \beta \E[g(x)] + \beta \gamma = \mu$.
The benefit of this construction lies not in bias correction but in variance reduction. To see this, we can compute the variance of $\check{\mu}$:
\begin{align} \label{eq:variance_mu_cv}
    \mathbb{V}[\check{\mu}] = \frac{1}{K} (\mathbb{V} [f] - 2\beta \mathrm{Cov}(f, g) + \beta^2\mathbb{V} [g]).
\end{align}
This is a quadratic function of $\beta$, and since it is convex, ts minimum can be found by differentiating w.r.t. $\beta$ and setting the derivative to zero. This yields the optimal coefficient $\beta^* = \mathrm{Cov}(f, g) / \mathbb{V}[g]$.
Substituting it back into the variance expression \eqref{eq:variance_mu_cv} gives:
\begin{align}
    \mathbb{V}[\check{\mu}] = \frac{1}{K} \mathbb{V}[\hat{\mu}](1 - \mathrm{Corr}(f, g)^2).
\end{align}
This result shows a key insight: the effectiveness of a control variate depends entirely on its correlation with the target function $f$. As long as $f$ and $g$ are correlated (positively or negatively), the variance of $\check{\mu}$ is strictly less than that of $\hat{\mu}$. The stronger the correlation, the greater the reduction.
In practice, the optimal coefficient $\beta^\prime$ can be estimated from the same sample used to compute the Monte Carlo estimate, typically with minimal additional cost \citep{ranganath2014black}. However, the main challenge lies in selecting or designing a suitable control variate $g$ that both correlates well with $f$ and has a tractable expectation under $\pi$.
For an in-depth treatment of this topic and practical considerations, see \cite{geffner2018using}.

Fortunately, \cref{lemma:discrete-stein-identity} provides a principled way to construct a control variate tailored to our setting $\E_{p_t} [f(x)] \triangleq \E_{p_t}[\partial_t \log \tilde{p}_t(x)] \approx \frac{1}{K} \sum_{k=1}^K \partial_t \log \tilde{p}_t(x^{(k)})$, where $x^{(k)} \sim p_t$.
To reduce the variance of this estimator, we seek a control variate $g(x)$ whose expectation under $p_t$ is known. Inspired by the discrete Stein identity, we define $g(x) = \sum_y R_t(x, y) \frac{p_t(y)}{p_t(x)}$, which satisfies $\E_{p_t}[g(x)] = 0$ by \cref{lemma:discrete-stein-identity}.
This makes $g$ a valid control variate with known mean. Using this construction, we can define a variance-reduced estimator as:
\begin{align}
    \check{\mu} \!=\! \frac{1}{K} \sum_{k=1}^K \partial_t \log \tilde{p}_t(x^{(k)}) \!-\! \beta^* \left(\sum_y R_t(x^{(k)}, y) \frac{p_t(y)}{p_t(x^{(k)})}\right), \quad x^{(k)} \!\sim\! p_t.
\end{align}
This estimator remains unbiased for any $\beta$, but with the optimal choice $\beta^*$, it can substantially reduce variance.
Moreover, in the special case where the parameter $\theta$ is optimal (in the sense that the objective in \cref{eq:dfs_loss} equals zero), an even stronger result emerges: the control variate becomes perfectly (negatively) correlated with the target function. That is, $g(x) = -f(x) + c$, where $c$ is a constant independent of the sample $x$, leading to $\mathrm{Corr}(f, g) = -1$. 
In this idealised case, $\check{\mu}$ becomes a zero-variance estimator, a rare but highly desirable scenario.
For a more comprehensive discussion of discrete Stein-based control variates and their applications in variance reduction, we refer the reader to \cite{shi2022gradient}.

\subsection{Connection to LEAPS} \label{sec:appendix-con2leaps}
\cref{eq:log_Zt_est} provides a natural foundation for learning the rate matrix $R_t^\theta$ using coordinate ascent. This involves alternating between two optimisation steps:
\begin{itemize}
    \item[i)] Updating the rate matrix parameters $\theta$ by minimising the squared deviation of $\xi_t (x; R_t^\theta)$ from a baseline $c_t$, averaged over time and a chosen reference distribution $q_t (x)$
    \begin{align}
        \theta \leftarrow \argmin_\theta \int_0^1 \E_{q_t(x)} (\xi_t (x; R_t(\theta)) - c_t)^2 \dif t \nonumber
    \end{align}
    \item[ii)] Updating the baseline $c_t$ to match the expected value of $\xi_t$ under the true distribution $p_t (x)$
    \begin{align}
        c_t \leftarrow \argmin_{c_t} \E_{p_t (x)} (\xi_t (x; R_t^\theta) - c_t)^2 \nonumber
    \end{align}
\end{itemize}
Alternatively, instead of treating $c_t$ as a scalar baseline, we can directly parametrize it as a neural network $c_t^\phi$. This allows us to jointly learn both $\theta$ and $\phi$ by solving the following objective:
\begin{align} \label{eq:pinn-obj}
    \theta^*, \phi^* = \argmin_{\theta, \phi} \E_{w(t), q_t(x)} \left[ \partial_t \log \tilde{p}_t (x) - c_t^\phi  - \sum_y R_t^\theta(x, y) \frac{p_t(y)}{p_t(x)}\right]^2,
\end{align}
which matches the Physics-Informed Neural Network (PINN) objective as derived in \cite[Proposition 6.1]{Holderrieth2025LEAPSAD}.
At optimality, this objective recovers two important conditions: i) The learned network $c_t^{\phi^\prime}$ recovers the true derivative $ \partial_t \log Z_t$; and ii) The rate matrix $R_t^{\theta^\prime}$ satisfies the Kolmogorov forward equation.

A key insight is that even though \cref{eq:log_Zt_est} formally holds when the expectation is taken under $p_t$, the training objective in \cref{eq:pinn-obj} remains valid for any reference distribution $q_t$ as long as it shares support with $p_t$. This is because, at optimality, the residual
\begin{align}
    \partial_t \log \tilde{p}_t (x) - c_t^{\phi^*}  - \sum_y R_t^{\theta^*}(x, y) \frac{p_t(y)}{p_t(x)} = 0, \forall x. \nonumber
\end{align}
Integrating both sides with respect to $p_t$, and invoking the discrete Stein identity (\cref{lemma:discrete-stein-identity}), we find:
\begin{align}
    \E_{p_t} \left[ \partial_t \log \tilde{p}_t (x) - c_t^{\phi^*} \right] = 0 \ \Rightarrow \ c_t^{\phi^*} = \E_{p_t} \left[ \partial_t \log \tilde{p}_t (x) \right] = \partial_t \log Z_t. \nonumber
\end{align}
This also naturally admits a coordinate ascent training procedure, where $\theta$ and $\phi$ are updated in turn:
\begin{align}
    \theta &\leftarrow \argmin_\theta \E_{w(t), q_t(x)} \left[ \partial_t \log \tilde{p}_t (x) - c_t^\phi  - \sum_y R_t^{\theta}(x, y) \frac{p_t(y)}{p_t(x)}\right]^2, \nonumber \\
    c_t^{\phi} &\leftarrow \E_{w(t), q_t(x)} \left[ \partial_t \log \tilde{p}_t (x) - \sum_y R_t^{\theta}(x, y) \frac{p_t(y)}{p_t(x)}\right] \nonumber,
\end{align}
In this light, the term the term $\sum_y R_t^{\theta}(x, y) \frac{p_t(y)}{p_t(x)}$ serves as a control variate for estimating $\partial_t \log Z_t$, effectively reducing variance in the learning signal. Under optimal training, $c_t^{\phi^*}$ accurately captures the log-partition derivative, confirming the correctness of the learned dynamics.

Since NDFS closely resembles LEAPS, we summarise the main distinctions below to highlight the unique contributions of our work:
\begin{itemize}
    \item While LEAPS and DNFS yield similar objective functions, they are derived from different perspectives. DNFS derives the objective by learning the rate matrix to satisfy the Kolmogrove equation, whereas LEAPS learns the rate matrix by minimising the importance weights. Perhaps surprisingly, these two perspectives lead to similar objectives.
    However, the new perspective from the Kolmogorov equation offers a new insight for future research: leveraging more accurate estimators of $\partial_t \log Z_t$ to further improve performance, which is not evident from the LEAPS framework.
    \item The success of DNFS highly depends on the proposed Locally Equivariant Transformer (leTF). Compared to the locally equivariant networks in LEAPS, leTF offers greater model capacity and improved adaptability, making it more suitable for diverse modalities and complex input structures. We hope this architectural advancement will inspire future developments, which are essential for advancing both LEAPS and DNFS.
    \item Unlike LEAPS, which is only evaluated on synthetic Ising and Potts models, DNFS is tested on broader applications, including sampling from Ising models, training EBMs, and solving combinatorial optimisation problems. We hope this wider empirical scope will inspire further research into additional applications of discrete neural samplers.
\end{itemize}

\section{Derivation of Locally Equivariant Transformer}

\subsection{Proof of \cref{prop:one-way-rate-matrix}} \label{sec:appendix_oneway_rm}

It is worth noting that \cref{prop:one-way-rate-matrix} was first introduced in \cite[Proposition 5]{zhang2023formulating}, and subsequently utilized by \cite{campbell2024generative} to construct the conditional rate matrix, as well as by \cite{Holderrieth2025LEAPSAD} in the development of the locally equivariant network.
For completeness, we provide a detailed proof of \cref{prop:one-way-rate-matrix} in this section.

We begin by formally defining the one-way rate matrix
\begin{definition}[One-way Rate Matrix]
    A rate matrix $R$ is one-way if and only if $R(y, x) > 0 \Rightarrow R(x, y) = 0$. In other words, if a one-way rate matrix permits a transition from $x$ to $y$, then the transition probability from $y$ to $x$ must be zero.
\end{definition}
We then restate \cref{prop:one-way-rate-matrix} and provide a detailed proof as follows.
\restateproposition*
\begin{proof}
    We first prove that the one-way rate matrix $Q_t$ generates the same probabilistic path as $R_t$:
    \begin{align}
        \sum_{y} Q_t(x, y) p_t (y)
        &= \sum_{y \ne x} Q_t(x, y) p_t (y) - Q_t(y, x) p_t (x) \nonumber \\
        &= \sum_{y \ne x} \left[R_t(x,y)p_t(y) - R_t(y,x) p_t(x)\right]_+  - \left[R_t(y,x)p_t(x) - R_t(x,y) p_t(y)\right]_+ \nonumber \\
        &= \sum_{y \ne x} R_t(x,y)p_t(y) - R_t(y,x) p_t(x) \nonumber \\
        &= \sum_{y} R_t(x, y) p_t (y) = \partial_t p_t (x), \nonumber
    \end{align}
    which completes the proof. We then show that $Q_t$ is one-way:
    \begin{align}
        Q_t(y, x) > 0 
        &\Rightarrow R_t(y,x) - R_t(x,y) \frac{p_t(y)}{p_t(x)} > 0 \Rightarrow R_t(x,y) - R_t(y,x) \frac{p_t(x)}{p_t(y)} < 0 \nonumber \\
        &\Rightarrow Q_t(x, y) = \left[ R_t(x,y) - R_t(y,x) \frac{p_t(x)}{p_t(y)} \right]_+ = 0.
    \end{align}
    Thus, $Q_t(y, x) > 0 \Rightarrow Q_t(x, y) = 0$, which completes the proof.
\end{proof}

\subsection{Locally Equivariant Networks} \label{sec:appendix_le_net}
Based on \cref{prop:one-way-rate-matrix}, we can parametrise $R_t^\theta$ as a one-way rate matrix, which is theoretically capable of achieving the optimum that minimizes \cref{eq:dfs_loss}.
Although the one-way rate matrix is a restricted subset of general rate matrices and thus offers limited flexibility, it enables efficient computation of the objective in \cref{eq:dfs_loss}. To see this, we first formally define the local equivariant network, originally proposed in \cite{Holderrieth2025LEAPSAD}.
\begin{definition}[Locally Equivariant Network] \label{def:appendix-le-net}
    A neural network $G$ is locally equivariant if and only if 
    \begin{align}
        G_t (\tau, i | x) = -G_t (x_i, i| \text{Swap}(x,i,\tau)), \quad i=1,\dots,d
    \end{align}
    where $\text{Swap}(x,i,\tau) = (x_1, \dots, x_{i-1}, \tau, x_{i+1}, \dots, x_d)$ and $\tau \in \{1, \dots, S\}$. 
\end{definition}
We can then parametrise the one-way rate matrix $R_t$ using a locally equivariant network $G_t$: $R_t (\tau, i | x) = [G_t (\tau, i | x)]_+$, if $\tau \ne x_i$ and $R_t(x,x) = \sum_{y\ne x} R_t(y,x)$. This construction ensures that $R_t$ is a one-way rate matrix. To see this, consider a state $y = (x_1, \dots, x_{i-1}, \tau, x_{i+1}, \dots, x_d)$. If $R_t (y,x) > 0$, then
\begin{align}
     G_t (\tau, i | x) > 0 \Rightarrow -G_t (x_i, i| \text{Swap}(x,i,\tau)) > 0 \Rightarrow R_t (x,y) < 0,
\end{align}
demonstrating the one-way property.
With this parameterisation, the objective function in \cref{eq:dfs_loss} can be computed as
\begin{align}
    \delta_t(x;R_t) 
    &= \partial_t \log p_t (x) + \sum_{i, y_i\neq x_i} R_t^\theta(y_i, i | x) - R_t^\theta(x_i, i | y)\frac{p_t (y)}{p_t(x)} \nonumber \\
    &= \partial_t \log p_t (x) + \sum_{i, y_i\neq x_i} [G_t^\theta(y_i, i | x)]_+ - [G_t^\theta(x_i, i | y)]_+ \frac{p_t (y)}{p_t(x)} \nonumber \\
    &= \partial_t \log p_t (x) + \sum_{i, y_i\neq x_i} [G_t^\theta(y_i, i | x)]_+ - [-G_t^\theta(y_i, i | x)]_+ \frac{p_t (y)}{p_t(x)} \nonumber,
\end{align}
where the final expression only requires a single forward pass of the network $G_t^\theta$ to compute the entire sum, significantly reducing computational cost.
To construct a locally equivariant network, we first define its fundamental building block, the hollow network, as follows
\begin{definition}[Hollow Network] \label{def:appendix-hollow-net}
    Let $x_{i \leftarrow \tau} = (x_1,\dots,x_i=\tau,\dots,x_d) \in \mathcal{X}$ denote the input tokens with its $i$-th component set to $\tau$. A function $H: \mathcal{X} \mapsto \mathbb{R}^{d\times h}$ is called a \textit{hollow network} if it satisfies the following condition
    \begin{align}
        H(x_{i \leftarrow \tau})_{i,:} = H(x_{i \leftarrow \tau^\prime})_{i,:} \qquad \mathrm{for\ all}\ \tau, \tau^\prime \in \mathcal{S}, \nonumber
    \end{align}
    where $M_{i,:}$ denotes the $i$-th row of the matrix $M$. In other words, the output at position $i$ is invariant to the input at position $i$; that is, the $i$-th input does not influence the $i$-th output.
\end{definition}
Inspired by \cite{Holderrieth2025LEAPSAD}, we then introduce the following proposition, which provides a concrete method for instantiating a locally equivariant network.
\restatepropositionlenet*
\begin{proof}
    We verify local equivariance by showing:
    \begin{align}
        G(\tau, i | x) 
        &= (\omega_\tau - \omega_{x_i})^T H(x)_{i,:} \nonumber \\
        &= - (\omega_{x_i} - \omega_\tau)^T H(\text{Swap}(x, i, \tau))_{i,:} \nonumber \\
        &= -G(x_i, i | \text{Swap}(x, i, \tau)), \nonumber
    \end{align}
    where the second equality follows from the definition of the hollow network in \cref{def:appendix-hollow-net}, which ensures that the $i$-th output is invariant to the changes in the $i$-th input. This confirms that $G$ satisfies the required local equivariance condition as in \cref{def:appendix-le-net}
\end{proof}
Based on \cref{prop:appendix-inst-lenet}, we present two locally equivariant architectures introduced in \cite{Holderrieth2025LEAPSAD}, followed by our proposed locally equivariant transformer.

\textbf{Locally Equivariant MLP (leMLP) \citep{Holderrieth2025LEAPSAD}.}
Let $x \in \mathbb{R}^{d\times h}$ be the embedded input data. To construct a locally equivariant multilinear perceptron (MLP), we first define a hollow MLP as $H_{\mathrm{MLP}}(x) = \sum_{k=1}^K \sigma(W^{k}x + b^{k})$ where each $W^{k} \in \mathbb{R}^{d\times d}$ is a weight matrix with zero diagonal entries (i.e., $W_{ii} = 0$ for all $i$), $b^k \in \mathbb{R}^h$ is the bias term, and $\sigma$ denotes an element-wise activation function.
A locally equivariant MLP can then be defined as $G(\tau,i|x) = (\omega_\tau - \omega_{x_i})^T H_{\mathrm{MLP}}(x)_{i,:}$.

\textbf{Locally Equivariant Attention (leAttn) \citep{Holderrieth2025LEAPSAD}.}
Let $x = (x_1, \dots, x_d) \in \mathbb{R}^{d \times h}$ be the embedded input data. Similarly, we first define a hollow attention network as
\begin{align}
    H_{\mathrm{Attn}}(x)_{i,:} = \sum_{s\ne i} \frac{\exp(k(x_s)^T q(x_s))}{\sum_{t\ne i}\exp(k(x_t)^T q(x_t))} v (x_s), \nonumber
\end{align}
where $q,k,v$ denote the query, key, and value functions, respectively. Thus a locally equivariant attention network can be defined as $G(\tau,i|x) = (\omega_\tau - \omega_{x_i})^T H_{\mathrm{Attn}}(x)_{i,:}$.

While these two architectures\footnote{Note that \cite{Holderrieth2025LEAPSAD} also introduces a locally equivariant convolutional network; we refer interested readers to their work for further details.} offer concrete approaches for constructing locally equivariant networks, their flexibility is limited, as they each consider only a single layer. More importantly, naively stacking multiple MLP or attention layers violates local equivariance, undermining the desired property.

\subsection{Locally Equivariant TransFormer (leTF)} \label{sec:appendix_letf_details}

\begin{wrapfigure}{r}{0.45\linewidth}
\vspace{-4mm}
    \centering
    \includegraphics[width=.99\linewidth]{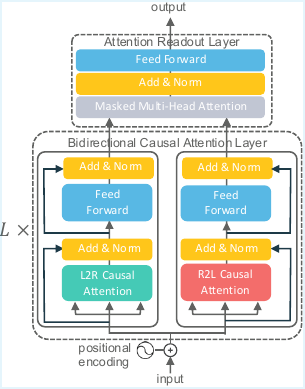}
\vspace{-6mm}
\caption{Illustration of the hollow transformer.}
\vspace{-3mm}
\label{fig:hollowtf_arch}
\end{wrapfigure}
In this section, we present the implementation details of the proposed Locally Equivariant Transformer (leTF), an expressive network architecture designed to preserve local equivariance.
As introduced in \cref{sec:letf_arch}, leTF is formulated as 
\begin{align}
    G_t^\theta (\tau, i | x) = (\omega_\tau - \omega_{x_i})^T H_{\mathrm{HTF}}(x)_{i,:},
\end{align}
where $\omega$ denotes the learnable token embeddings produced by the projection layer (illustrated in \cref{fig:letf_arch}), and $H_{\mathrm{HTF}}$ represents the Hollow Transformer module. In the following, we focus on the implementation details of the hollow transformer $H_{\mathrm{HTF}}$.

As illustrated in \cref{fig:hollowtf_arch}, the hollow transformer comprises $L$ bidirectional causal attention layers followed by a single attention readout layer. For clarity, we omit details of each causal attention layer, as they follow the standard Transformer architecture \citep{vaswani2017attention}. We denote the outputs of the final left-to-right and right-to-left causal attention layers as
\begin{align}
    Q_L, K_L, V_L &= \mathrm{CausalAttn}_{\mathrm{L2R}}(\mathrm{input}); \nonumber \\
    Q_R, K_R, V_R &= \mathrm{CausalAttn}_{\mathrm{R2L}}(\mathrm{input}). \nonumber
\end{align}
Because causal attention restricts each token to attend only to its preceding tokens, the resulting outputs $Q, K, V$ inherently satisfy the hollow constraint.
In the readout layer, we first fuse the two query representations by computing $Q = Q_L + Q_R$, and then apply a masked multi-head attention to produce the final output
\begin{align}
    \mathrm{softmax}(\frac{Q [K_L \!\odot\! M_L , K_R \!\odot\! M_R]^T}{\sqrt{2d_k}}) [V_L, V_R], \nonumber
\end{align}
where $\odot$ denotes the element-wise product, $d_k$ is the dimensionality of the key vectors, and  $M_L$ and $M_R$ masks that enforce the hollow constraint by masking out future-token dependencies in the left-to-right and right-to-left streams, respectively.

\section{Details of Training and Sampling of \ours} \label{sec:appendix_algs}

\subsection{Training and Sampling Algorithms}
The training and sampling procedures are presented in \cref{alg:dnfs_training,alg:dnfs_sampling}.
For clarity, the rate matrix $R_t^\theta$ is parametrised using the proposed locally equivariant network, defined as $R_t^\theta (y,x) \triangleq [G_t^\theta (y_i, i | x)]_+$, where $y$ and $x$ only differ at the $i$-th coordinate.
To initiate training, we discretise the time interval $[0,1]$ into a set of evenly spaced time spans $\{t_k\}_{k=0}^K$, satisfying $0=t_0 < \dots < t_K = 1$ and $2t_k = t_{k+1} + t_{k-1}$ for all valid indices $k$.

In each outer loop of training, we generate trajectory samples by simulating the forward process under the current model parameters. Specifically, samples are drawn from the probability path $\overrightarrow{\mathbb{Q}}^{p_0, R_t^{\theta_{\mathrm{sg}}}}$, defined by the initial distribution $p_0$ and the current rate matrix $R_t^{\theta_{\mathrm{sg}}}$, where $\theta_{\mathrm{sg}}$ denotes $\mathrm{stop\_gradient}(\theta)$. This forward trajectory is simulated using the Euler–Maruyama method, as detailed in \cref{alg:dnfs_sampling}, and stored in a replay buffer for reuse.
During the inner training loop, we draw mini-batches uniformly from the buffer and compute the training loss based on \cref{eq:le-dfs-loss}. The model parameters are then updated via gradient descent, as described in steps 6–11 of \cref{alg:dnfs_training}.

\begin{algorithm}[!t]
% \small
\caption{Training Procedure of \ours}
\label{alg:dnfs_training}
\textbf{Input}: initial rate matrix $R_t^\theta$, probability path $p_t$, time spans $\{t_k\}_{k=0}^K$, outer-loop batch size $M$, inner-loop batch size $N$, replay buffer $\mathcal{B}$
\begin{algorithmic}[1]
    \State $\mathcal{B} \leftarrow \emptyset$   \textcolor{gray}{\Comment{Initialise replay buffer}}
    \While{Outer-Loop}
        \State $\{ x_{{t}_k}^{(m)} \}_{m=1, k=0}^{M, K} \sim \overrightarrow{\mathbb{Q}}^{p_0, R_t^{\theta_{\mathrm{sg}}}}$  \textcolor{gray}{\Comment{Generate training samples}}
        \State $c_t \!\leftarrow \frac{1}{M} \sum_m \partial_t \log \tilde{p}_t (x_t^{(m)}) - \sum_y R_t^{\theta_{\mathrm{sg}}} (x_t^{(m)}, y) \frac{p_t (y)}{p_t (x_t^{(m)})}, \forall t \in \{t_k\}_{k=0}^K$     \textcolor{gray}{\Comment{ $\partial_t \log Z_t$}}
        \State $\mathcal{B} \leftarrow \mathcal{B} \cup \{ (t_k , x_{{t}_k}^{(m)}) \}_{m=1, k=0}^{M, K}$   \textcolor{gray}{\Comment{update replay buffer}}
        \While{Inner-Loop} 
            \State $\{(t, x_t^{(n)})\}_{n=1}^N \sim \mathcal{U}(\mathcal{B})$ \textcolor{gray}{\Comment{Uniformly sample from buffer}}
            \State $\xi_t^{(n)} \!\leftarrow\! \partial_t \log \tilde{p}_t (x_t^{(n)}) - \sum_y R_t^{\theta} (x_t^{(n)}, y) \frac{p_t (y)}{p_t (x_t^{(n)})}$
            \State $\mathcal{L}(\theta) \leftarrow \frac{1}{N} \sum_n (\xi_t^{(n)} - c_t)^2$   \textcolor{gray}{\Comment{Compute training loss}}
            \State $\theta \leftarrow \mathrm{optimizer\_step}(\theta, \nabla_\theta \mathcal{L}(\theta))$    \textcolor{gray}{\Comment{Perform gradient update}}
        \EndWhile
    \EndWhile
\end{algorithmic}
\textbf{Output}: trained rate matrix $R_t^\theta$
\end{algorithm}

\begin{algorithm}[!t]
% \small
\caption{Sampling Procedure of \ours}
\label{alg:dnfs_sampling}
\textbf{Input}: trained rate matrix $R_t^\theta$, initial density $p_0$, \# steps $K$
\begin{algorithmic}[1]
\State $x_0 \sim p_0, \quad \Delta t \leftarrow \frac{1}{K}, \quad t \leftarrow 0$ \textcolor{gray}{\Comment{Initialisation}}
\For{$k = 0, \dots, K-1$}
    \State $x_{t + \Delta t} \leftarrow \mathrm{Cat}(\mathbf{1}_{x_{t+\Delta t}=x} +  R_t(x_{t+\Delta t}, x) \Delta t)$ \textcolor{gray}{\Comment{Euler-Maruyama update}}
    \State $t \leftarrow t + \Delta t$
\EndFor
\end{algorithmic}
\textbf{Output}: generated samples $x_1$
\end{algorithm}

\subsection{Efficient Ratio Computation} \label{sec:appendix_ratio_ext}
Although the ratio $\left[ \frac{p_t(y)}{p_t (x)} \right]_{y \in \mathcal{N}(x)}$ can be computed in parallel, it remains computationally expensive in general.
However, in certain settings, such as sampling from Ising models and solving combinatorial optimisation problems, the ratio can be evaluated efficiently due to the specific form of the underlying distribution. 
In these cases, the unnormalized distribution takes a quadratic form:
\begin{align}
    p (x) \propto \tilde{p}(x) \triangleq \exp (x^T W x + h^Tx), \quad x \in \{0,1\}^d, W \in \mathbb{R}^{d\times d}, h \in \mathbb{R}^d,
\end{align}
where the neighbourhood $\mathcal{N}(x)$ is defined as the $1$-Hamming Ball around $x$, i.e., all vectors differing from $x$ in exactly one bit. Let $y$ be such a neighbor obtained by flipping the $i$-th bit: $y_i = \neg x_i$ and $y_j = x_j$ for $j \ne i$.
The log of the unnormalised probability can be decomposed as:
\begin{align}
    \log \tilde{p}(x) = \sum_{a \ne i, b \ne i} x_a W_{ab} x_b + x_i \sum_{b \ne i} W_{ib} x_b + x_i \sum_{a \ne i} x_a W_{ai} + x_i W_{ii} x_i + \sum_a h_a x_a.
\end{align}
Thus, the change in log-probability when flipping bit $i$ is:
\begin{align}
    \log p(y) - \log p(x) 
    &= (1 - 2x_i)\left( \sum_{b \ne i} W_{ib} x_b + \sum_{a \ne i} x_a W_{ai} + W_{ii} + h_i \right).
\end{align}
This expression can be vectorised to efficiently compute the log-ratios for all neighbours:
\begin{align}
    \left[\log \frac{p(y)}{p(x)}\right]_{y \in \mathcal{N}(x)} = (1 - 2x) \odot ((W + W^T) x - \mathrm{diag}(W) + h),
\end{align}
where $\odot$ denotes element-wise multiplication.
Furthermore, in special cases, such as combinatorial optimisation where $W$ is symmetric with zero diagonal, the log-ratio simplifies to
\begin{align}
     \left[\log \frac{p(y)}{p(x)}\right]_{y \in \mathcal{N}(x)} = (1 - 2x) \odot (2W x + h).
\end{align}

In more general settings, where $x$ is a categorical variable and the energy is non-quadratic, there is no closed-form solution for efficiently calculating the likelihood ratio. However, it can be approximated through a first-order Taylor expansion \citep{grathwohl2021oops}. Specifically,
\begin{align}
    \log p_t (y) - \log p_t (x) = (y-x) \nabla_x \log p_t (x), 
\end{align}
which gives the following approximation:
\begin{align} \label{eq:appendix_taylor_approx_ratio}
    \left[\log \frac{p_t(y)}{p_t(x)}\right]_{i,j} = [\nabla_x \log p_t (x)]_{i,j} - x_i^T [\nabla_x \log p_t (x)]_{i,:},
\end{align}
where $\left[\log \frac{p_t(y)}{p_t(x)}\right]_{i,j} \triangleq \log \frac{p_t(x_1, \dots, x_{i-1}, j, x_{i+1}, \dots, x_d)}{p_t(x)}$ and  we take the fact that $y$ and $x$ differ only in one position. 
This approximation requires computing $\nabla_x \log p_t (x)$ just once to estimate the ratio for the entire neighbourhood, thus improving computational efficiency. However, it introduces bias into the training objective in \cref{eq:dfs_loss}, potentially leading to suboptimal solutions.

To illustrate this, we train \ours to sample from pre-trained deep EBMs by minimising the objective in \cref{eq:dfs_loss} using two methods for computing the ratio $\frac{p_t(y)}{p_t(x)}$: i) exact computation in parallel; and ii) an approximation via the Taylor expansion in \cref{eq:appendix_taylor_approx_ratio}.
As shown \cref{fig:comp_approx_ratio_deep_ebm}, using the approximate ratio leads to inaccurate samples that tend to lie in overly smoothed regions of the energy landscape. This degradation in sample quality is likely caused by the bias introduced by the approximation.
Consequently, in deep EBM settings where the energy function is non-quadratic, we opt to compute the exact ratio in parallel, leaving the development of more efficient and unbiased approximations for future work.

\begin{figure}[!t]    
    \centering
    \begin{minipage}[t]{1.\linewidth}
        % \raisebox{0.5mm}{\rotatebox{90}{\scriptsize approx ratio}}
        % \hspace{-1.3mm}
        \begin{minipage}[t]{0.105\linewidth}
            \centering
            \includegraphics[width=\linewidth]{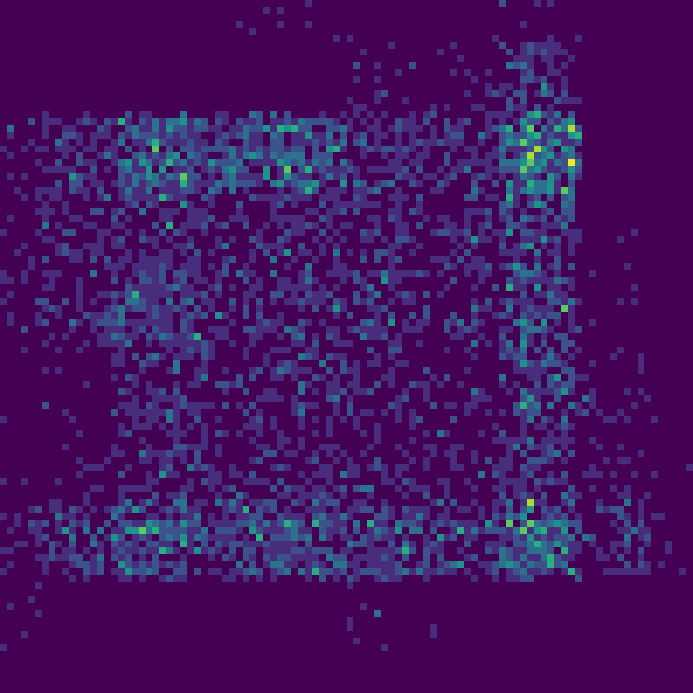}
        \end{minipage}
        \begin{minipage}[t]{0.105\linewidth}
            \centering
            \includegraphics[width=\linewidth]{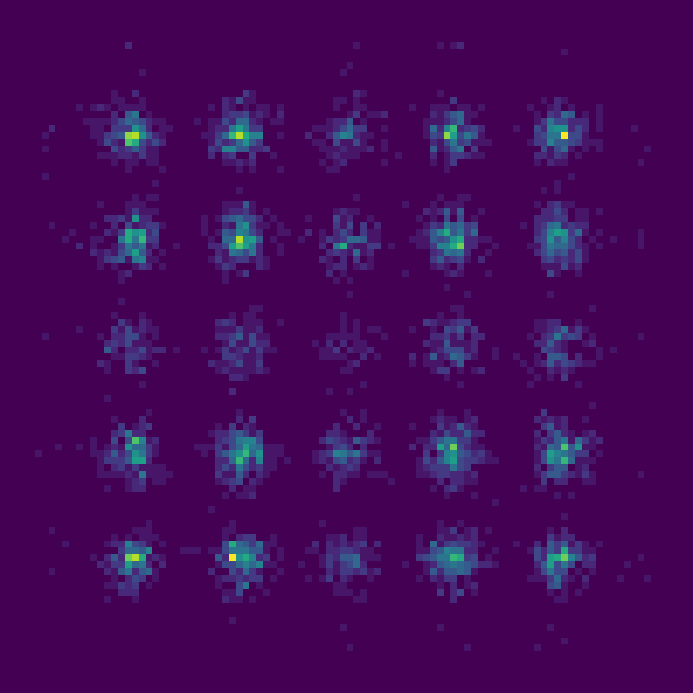}
        \end{minipage}
        \begin{minipage}[t]{0.105\linewidth}
            \centering
            \includegraphics[width=\linewidth]{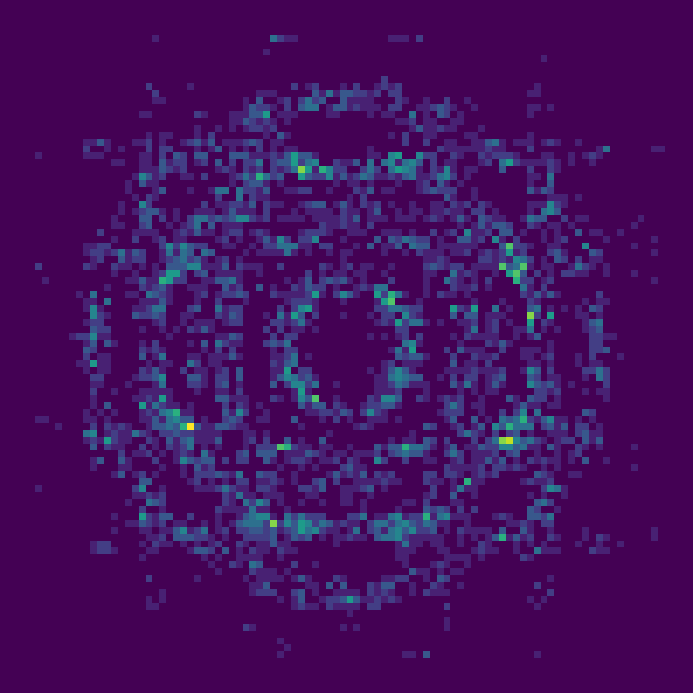}
        \end{minipage}
        \begin{minipage}[t]{0.105\linewidth}
            \centering
            \includegraphics[width=\linewidth]{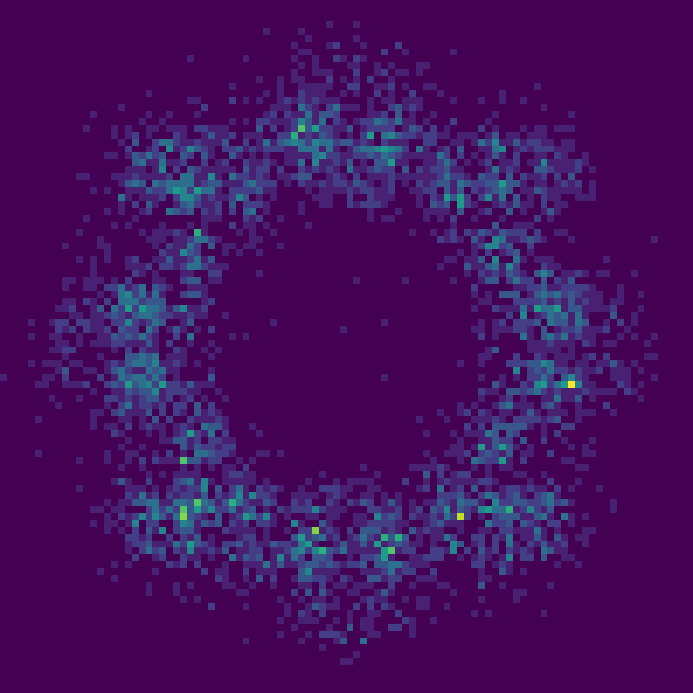}
        \end{minipage}
        \begin{minipage}[t]{0.105\linewidth}
            \centering
            \includegraphics[width=\linewidth]{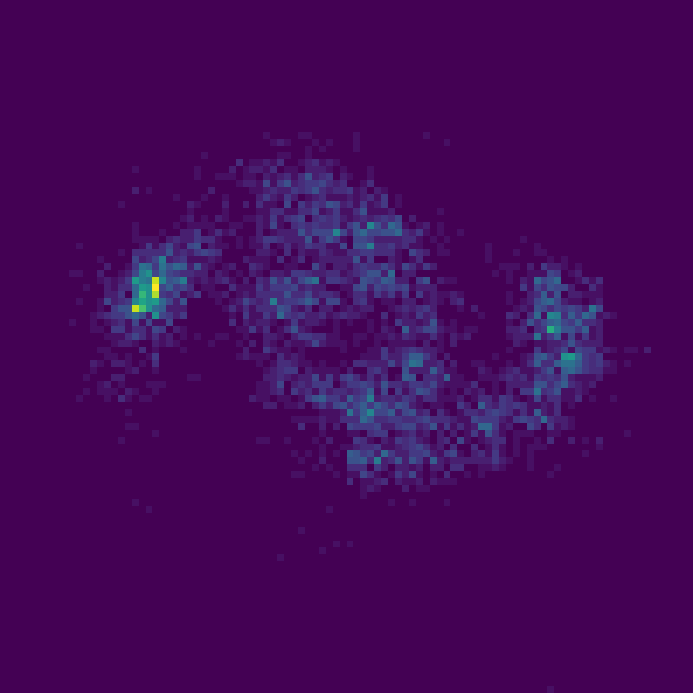}
        \end{minipage}
        \begin{minipage}[t]{0.105\linewidth}
            \centering
            \includegraphics[width=\linewidth]{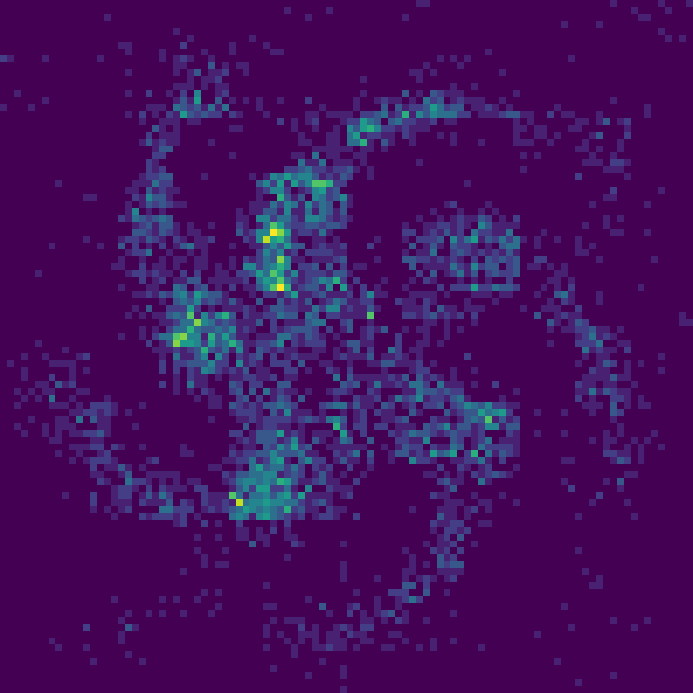}
        \end{minipage}
        \begin{minipage}[t]{0.105\linewidth}
            \centering
            \includegraphics[width=\linewidth]{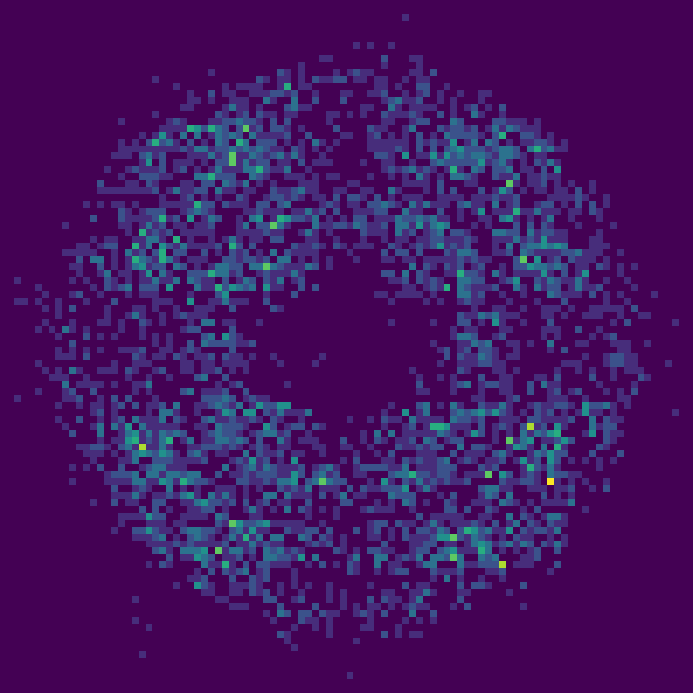}
        \end{minipage}
        \begin{minipage}[t]{0.105\linewidth}
            \centering
            \includegraphics[width=\linewidth]{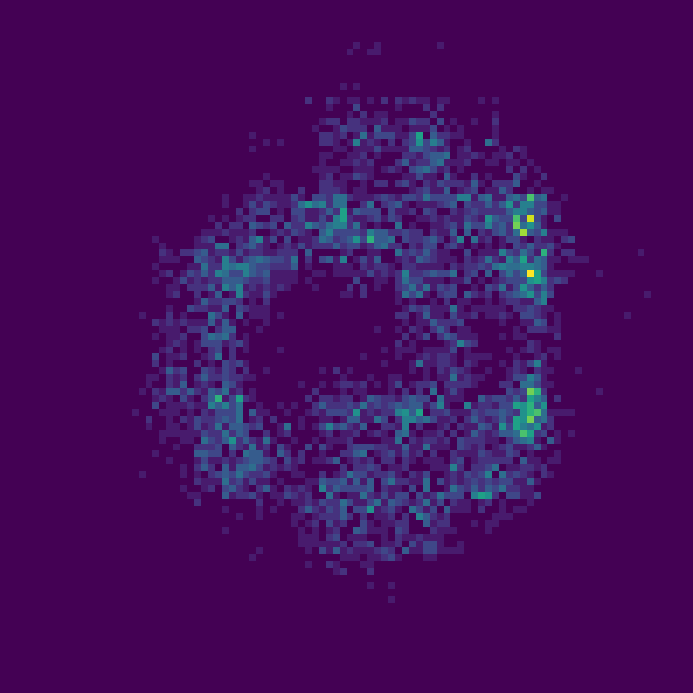}
        \end{minipage}
        \begin{minipage}[t]{0.105\linewidth}
            \centering
            \includegraphics[width=\linewidth]{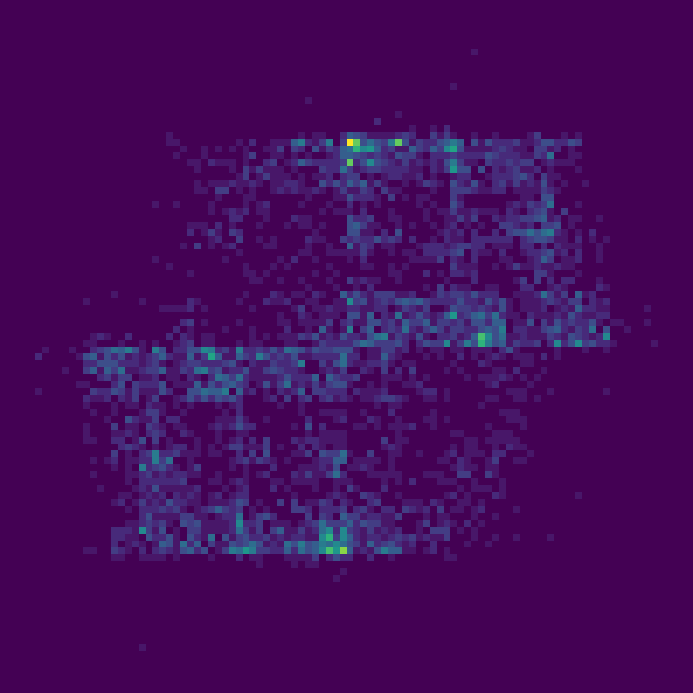}
        \end{minipage}
    \end{minipage}
    \begin{minipage}[t]{1.\linewidth}
        % \raisebox{1mm}{\rotatebox{90}{\scriptsize exact ratio}}
        \begin{minipage}[t]{0.105\linewidth}
            \centering
            \includegraphics[width=\linewidth]{figures/toy2d/dfs/checkerboard_flow_samples.png}
        \end{minipage}
        \begin{minipage}[t]{0.105\linewidth}
            \centering
            \includegraphics[width=\linewidth]{figures/toy2d/dfs/25gaussians_flow_samples.png}
        \end{minipage}
        \begin{minipage}[t]{0.105\linewidth}
            \centering
            \includegraphics[width=\linewidth]{figures/toy2d/dfs/rings_flow_samples.png}
        \end{minipage}
        \begin{minipage}[t]{0.105\linewidth}
            \centering
            \includegraphics[width=\linewidth]{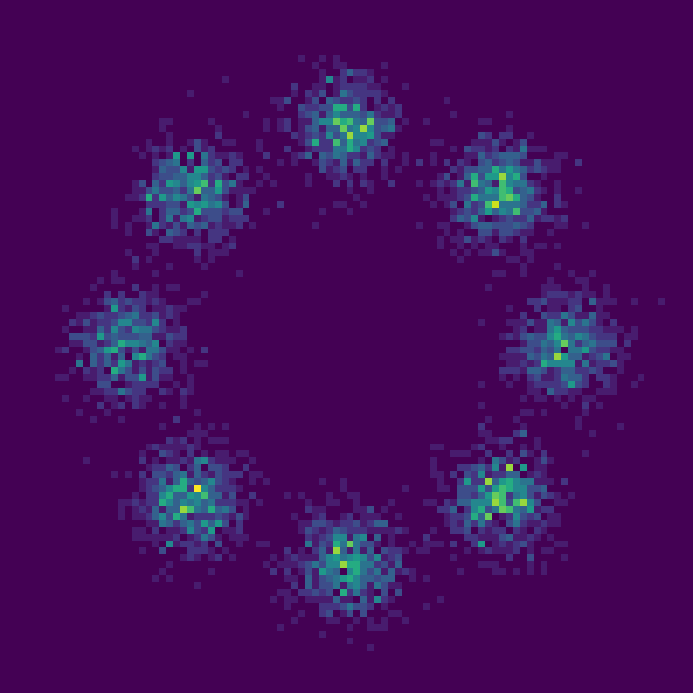}
        \end{minipage}
        \begin{minipage}[t]{0.105\linewidth}
            \centering
            \includegraphics[width=\linewidth]{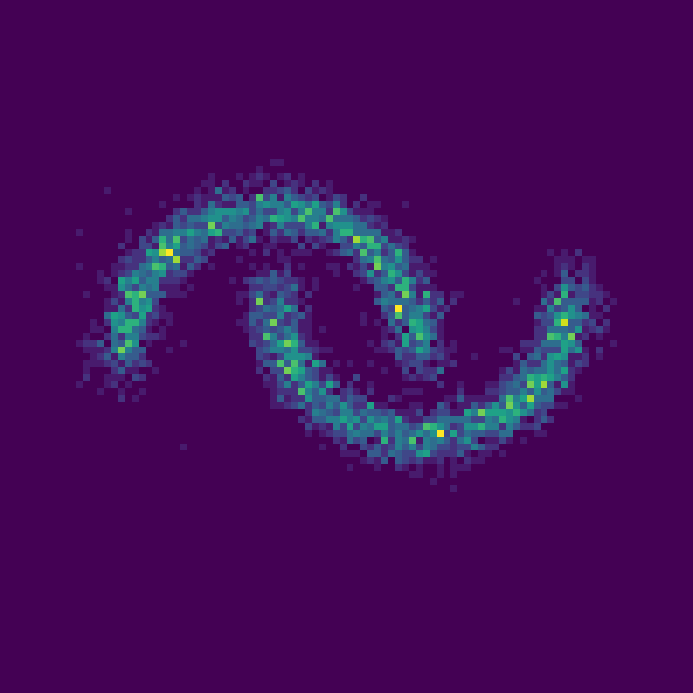}
        \end{minipage}
        \begin{minipage}[t]{0.105\linewidth}
            \centering
            \includegraphics[width=\linewidth]{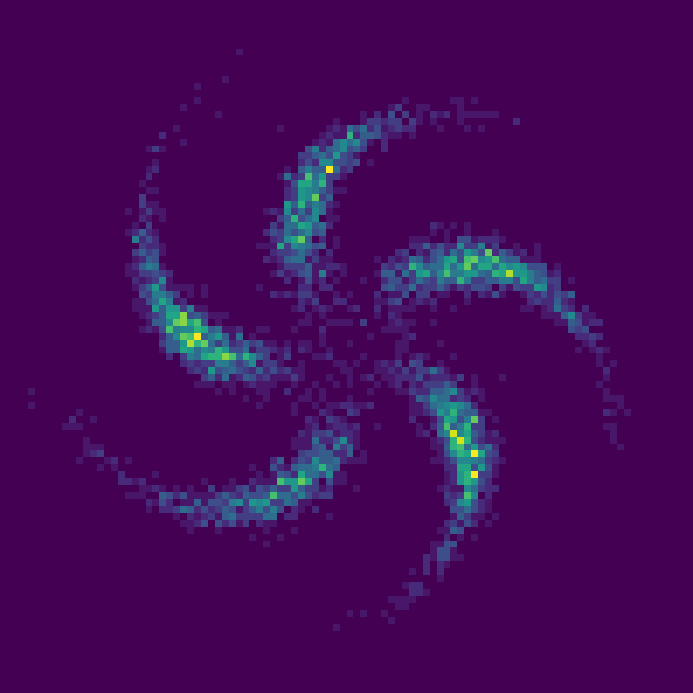}
        \end{minipage}
        \begin{minipage}[t]{0.105\linewidth}
            \centering
            \includegraphics[width=\linewidth]{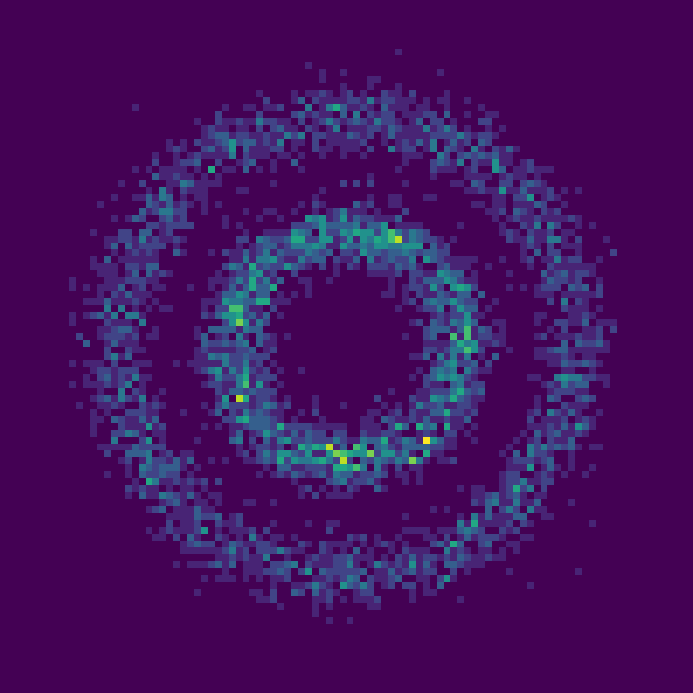}
        \end{minipage}
        \begin{minipage}[t]{0.105\linewidth}
            \centering
            \includegraphics[width=\linewidth]{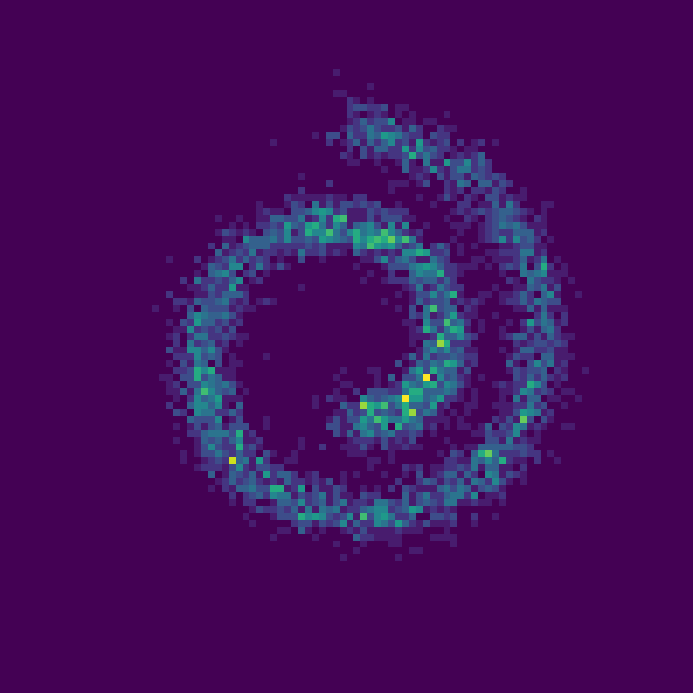}
        \end{minipage}
        \begin{minipage}[t]{0.105\linewidth}
            \centering
            \includegraphics[width=\linewidth]{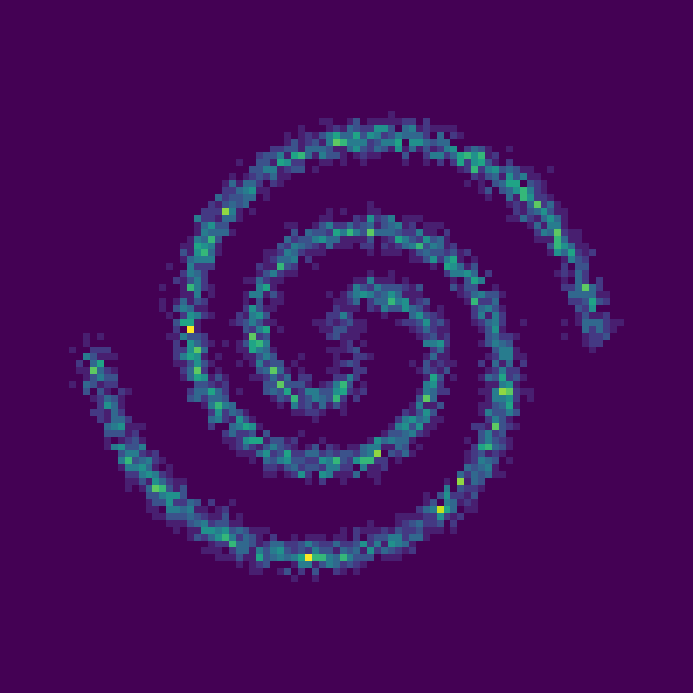}
        \end{minipage}
    \end{minipage}
    \caption{Illustration of the effect of ratio computation in \cref{eq:dfs_loss}. Top row: results using the Taylor expansion approximation; bottom row: results using exact computation. The approximated ratio yields less accurate samples, likely due to the bias introduced in the training objective.}
    \label{fig:comp_approx_ratio_deep_ebm}
\end{figure}

\section{Applications to Discrete Neural Flow Samplers}

In this section, we introduce details of two applications to discrete neural flow samplers: training discrete energy-based models and solving combinatorial optimisation problems.

\subsection{Importance Sampling} \label{sec:appendix_is}
We begin by reviewing the fundamentals of importance sampling, followed by a proof based on the Radon-Nikodym derivative. This approach was also introduced in \cite{lee2025debiasing,Holderrieth2025LEAPSAD,pani2025test}. For completeness, we include a brief recap to make the paper more self-contained.

\textbf{Importance Sampling.}
Consider a target distribution $\pi (x) = \frac{\rho(x)}{Z}$, where $\rho(x) \geq 0$ is the unnormalised distribution and $Z = \sum_x \rho(x) $ denotes the normalising constant, which is typically intractable. For a test function $\phi(x)$ of interest, estimating its expectation under $\pi$ through direct sampling can be challenging.
Importance Sampling (IS) \citep{kahn1950random} addresses this by introducing a proposal distribution $q(x)$ that is easier to sample from. The expectation under $\pi$ can then be re-expressed as
\begin{align}
    \E_{\pi(x)} [\phi(x)] = \frac{1}{Z} \E_{q(x)} \left[ \frac{\rho(x)}{q(x)} \phi(x) \right] = \frac{\E_{q(x)}\left[ \frac{\rho(x)}{q(x)} \phi(x) \right]}{\E_{q(x)}\left[ \frac{\rho(x)}{q(x)} \right]}.
\end{align}
This leads to the Monte Carlo estimator:
\begin{align}
    \E_{\pi(x)} [\phi(x)] \approx \sum_{k=1}^K \frac{w^{(k)}}{\sum_{j=1}^K w^{(j)}} \phi(x^{(k)}), \quad x^{(k)} \sim q(x),
\end{align}
where $w^{(k)} = \frac{\rho(x^{(k)})}{q(x^{(k)})}$ denotes the importance weight. Although this estimator is consistent as $K \to \infty$, it often suffers from high variance and low effective sample size \citep{thiebaux1984interpretation}, especially when the proposal $q$ is poorly matched to the target $\pi$.
In theory, the variance of the estimator is minimized when $q(x) \propto \rho(x)\phi(x)$, yielding a zero-variance estimator. While this condition is rarely attainable in practice, it provides a useful guideline: a well-designed proposal should closely approximate the target distribution, i.e., $q(x) \approx \pi(x)$.

\textbf{CTMT-Inspired Importance Sampling.} 
As noted previously, an ideal proposal should should closely approximate the target, i.e., $q(x) \approx \pi(x)$. This motivates the use of continuous-time Markov chains (CTMCs) to construct the proposal. Specifically, let $R_t (y, x)$ denote a rate matrix that defines a forward CTMC with initial distribution $p_0 \propto \eta$, generating a probability path denoted by $\overrightarrow{\mathbb{Q}}^{\eta, R_t}$.
To complement this, we define a backward CTMC with initial distribution $p_1 \propto \rho$ and interpolated marginals $p_t \propto \tilde{p}_t := \rho^t \eta^{1-t}$. The backward process is governed by the rate matrix $R_t^\prime(y, x) = R_t(x, y) \frac{p_t(y)}{p_t(x)}$, leading to the backward path distribution $\overleftarrow{\mathbb{Q}}^{\rho, R_t^\prime}$.
This construction yields the following importance sampling identity:
\begin{align} \label{eq:ctmt-is}
    \E_{\pi(x)} [\phi(x)] = \E_{x_{0\le t \le 1} \sim \overleftarrow{\mathbb{Q}}^{\rho, R_t^\prime}} [\phi (x_1)] = \E_{x_{0\le t \le 1} \sim \overrightarrow{\mathbb{Q}}^{\eta, R_t}} \left[ \frac{\overleftarrow{\mathbb{Q}}^{\rho, R_t^\prime} (x_{0\le t \le 1})}{\overrightarrow{\mathbb{Q}}^{\eta, R_t} (x_{0\le t \le 1})} \phi (x_1) \right],
\end{align}
which can be approximated via Monte Carlo sampling:
\begin{align}
    \E_{\pi(x)} [\phi(x)] \approx \sum_{k=1}^K \frac{w^{(k)}}{\sum_{j=1}^K w^{(j)}} \phi(x_1^{(k)}), \quad x_{0\le t \le 1}^{(k)} \sim \overrightarrow{\mathbb{Q}}^{\eta, R_t}, \quad w^{(k)} = \frac{\overleftarrow{\mathbb{Q}}^{\rho, R_t^\prime} (x_{0\le t \le 1}^{(k)})}{\overrightarrow{\mathbb{Q}}^{\eta, R_t} (x_{0\le t \le 1}^{(k)})}.
\end{align}
This estimator is consistent for any choice of $R_t$, and it becomes zero-variance when $R_t$ satisfies the Kolmogorov equation in \cref{eq:kolmogorov_eq}, a condition that can be approximately enforced by minimizing the loss in \cref{eq:dfs_loss}.
Before delving into the computation of the importance weights $\frac{\overleftarrow{\mathbb{Q}}^{\rho, R_t^\prime}}{\overrightarrow{\mathbb{Q}}^{\eta, R_t}}$, we introduce two key lemmas that underpin the derivation.

\begin{lemma}[Radon-Nikodym Derivative \citep{del2017stochastic}] \label{lemma:rnd}
    Let $p_0$ and $p_t$ be two initial distributions; $R_s$ and $R^\prime_s$ be two rate matrices, which induce the forward and backward CTMCs $\overrightarrow{\mathbb{Q}}^{p_0, R}$ and $\overleftarrow{\mathbb{Q}}^{p_t, R^\prime}$ over the time interval $[0,t]$, respectively. Then, 
    \begin{align}
        \log \frac{\overleftarrow{\mathbb{Q}}^{R_t^\prime | p_t}}{\overrightarrow{\mathbb{Q}}^{R_t | p_0}} = \int_0^t R_s^\prime (x_s, x_s) - R_s (x_s, x_s) \dif s + \sum_{s, x_s^- \ne x_s} \log \frac{R_s^\prime (x_s^-, x_s)}{R_s (x_s, x_s^-)},
    \end{align}
    which induces that
    \begin{align}
        \frac{\overleftarrow{\mathbb{Q}}^{p_t, R_t^\prime}}{\overrightarrow{\mathbb{Q}}^{p_0, R_t}}  = \frac{p_t(x_t)}{p_0(x_0)} \frac{\exp \left( \int_0^t R_s^\prime (x_s, x_s) \dif s \right)}{\exp \left( \int_0^t  R_s (x_s, x_s) \dif s \right)} \prod_{s, x_s^- \ne x_s} \frac{R_s^\prime (x_s^-, x_s)}{R_s (x_s, x_s^-)}.
    \end{align}
\end{lemma}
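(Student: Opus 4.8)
The plan is to build the path measure of a CTMC explicitly as a product of waiting-time and jump contributions, then take the ratio of the backward path measure against the forward one. First I would recall that for a (time-inhomogeneous) CTMC with rate matrix $R_s$, a trajectory $x_{0\le s\le t}$ that sits at state $x_{s}$ on a sub-interval $[a,b)$ contributes a survival factor $\exp\!\big(\int_a^b R_s(x_s,x_s)\,\d s\big)$ (recall $R_s(x_s,x_s)=-\sum_{y\ne x_s}R_s(y,x_s)\le 0$ is minus the total exit rate), and at each jump time $\sigma$ from $x_{\sigma^-}$ to $x_\sigma$ it contributes the instantaneous jump rate $R_\sigma(x_\sigma,x_{\sigma^-})$. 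Hence the forward density, relative to the reference "counting/Lebesgue" measure on trajectories with finitely many jumps, is
\begin{align}
\overrightarrow{\mathbb{Q}}^{p_0,R}(x_{0\le s\le t}) = p_0(x_0)\,\exp\!\Big(\int_0^t R_s(x_s,x_s)\,\d s\Big)\prod_{s:\,x_s^-\ne x_s} R_s(x_s,x_s^-).
\end{align}
The backward measure $\overleftarrow{\mathbb{Q}}^{p_t,R'}$ is the law of the same trajectory read in reverse time; written in forward time it has initial weight $p_t(x_t)$ at the right endpoint, the same survival-factor structure with $R'$ in place of $R$, and at each jump (which, traversed backward, goes $x_s\to x_s^-$) the rate $R'_s(x_s^-,x_s)$.

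The key steps, in order: (i) write down both path densities against a common dominating reference measure as above; (ii) divide — the reference measure and the combinatorial jump-time factors cancel exactly because the two processes jump at the same times along a shared trajectory, leaving
\begin{align}
\frac{\overleftarrow{\mathbb{Q}}^{p_t,R'_t}}{\overrightarrow{\mathbb{Q}}^{p_0,R_t}} = \frac{p_t(x_t)}{p_0(x_0)}\,\frac{\exp\!\big(\int_0^t R'_s(x_s,x_s)\,\d s\big)}{\exp\!\big(\int_0^t R_s(x_s,x_s)\,\d s\big)}\prod_{s:\,x_s^-\ne x_s}\frac{R'_s(x_s^-,x_s)}{R_s(x_s,x_s^-)};
\end{align}
(iii) take logarithms to obtain the stated additive formula, combining the two survival integrals into $\int_0^t \big(R'_s(x_s,x_s)-R_s(x_s,x_s)\big)\,\d s$ and turning the product over jumps into the sum $\sum_{s:\,x_s^-\ne x_s}\log\frac{R'_s(x_s^-,x_s)}{R_s(x_s,x_s^-)}$. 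One should also note the finiteness/absolute-continuity caveat: the ratio is well defined on trajectories where every realised backward jump $x_s\to x_s^-$ has $R'_s(x_s^-,x_s)>0$ whenever the corresponding forward rate $R_s(x_s,x_s^-)>0$, which holds $\overrightarrow{\mathbb{Q}}$-a.s. for the rate matrices used here since $R'_t(y,x)=R_t(x,y)p_t(y)/p_t(x)$ shares support with the reversal of $R_t$.

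The main obstacle is step (i): making the "path density" rigorous. Pdflatex will happily compile the product-of-rates heuristic, but a careful argument requires either a limiting/discretisation argument (partition $[0,t]$, write the finite-dimensional transition kernels $p_{s+\Delta s\mid s}(y\mid x)=\mathbf 1_{y=x}+R_s(y,x)\Delta s+o(\Delta s)$ as in the preliminaries, take the ratio of the two discrete-time chain laws, and pass to the limit so that the off-diagonal terms assemble into the jump product and the diagonal terms exponentiate into the survival integral) or an appeal to the standard Girsanov-type theorem for pure-jump Markov processes (e.g. the cited \citep{del2017stochastic}). I would present the discretisation route briefly for self-containedness and otherwise cite the reference, since the bookkeeping — showing the $o(\Delta s)$ errors vanish and the jump contributions localise correctly — is routine but tedious and is exactly what the reference already establishes.
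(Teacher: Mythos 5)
Your proposal is correct and matches the paper's approach: the paper itself offers no derivation beyond deferring to \citet[Appendix C.1]{campbell2024generative} (following \citealt{del2017stochastic}), and your sketch of writing both path laws as initial-density $\times$ survival-exponential $\times$ product of jump rates against a common reference measure, then dividing and taking logs, is exactly the standard argument those references make rigorous. Your closing caveat about absolute continuity and the discretisation route to justify the path density covers the only substantive gap, so nothing further is needed.
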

\begin{proof}
    For a comprehensive proof, we refer readers to \cite[Appendix C.1]{campbell2024generative}, which follows the exposition in \cite{del2017stochastic}.
\end{proof}

\begin{lemma}[Fundamental Theorem of Calculus] \label{lemma:ftpdf}
    Let $f: [0, T] \to \mathbb{R}$ be a piecewise differentiable function on the interval \([0, T]\). Suppose that $f$ is differentiable except at a finite set of discontinuity points $\{s_i\}_{i=1}^n \subset [0, T]$, where the left-hand limit $f(s_i^-)$ and the right-hand limit $f(s_i^+)$ at each $s_i$ exist but are not necessarily equal. Then, the total change of $f$ can be expressed as
    \begin{align}
        f(t) - f(0) = \int_0^t f^\prime (s) \, \mathrm{d}s + \sum_{s \in \{s_i\}_{i=1}^n \cap (0,t)} \left[ f(s^+) - f(s^-) \right],
    \end{align}
    for all $t \in [0, T]$, where $f^\prime(s)$ denotes the derivative of $f$ at points where $f$ is differentiable.
\end{lemma}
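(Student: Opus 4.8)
The plan is to reduce the claim to the classical fundamental theorem of calculus applied separately on each maximal subinterval on which $f$ is differentiable, and then to reassemble the pieces by a telescoping argument that produces the jump corrections. First I would fix $t \in [0,T]$ and enumerate the discontinuity points lying in $(0,t)$ as $0 =: \sigma_0 < \sigma_1 < \dots < \sigma_m < \sigma_{m+1} := t$ (discarding any $s_i \notin (0,t)$), so that on each open interval $(\sigma_j,\sigma_{j+1})$ the function $f$ is differentiable — hence continuous — with $f'$ integrable on that interval, and the one-sided limits $f(\sigma_j^+)$ and $f(\sigma_{j+1}^-)$ exist by hypothesis. I would note that $\{\sigma_1,\dots,\sigma_m\} = \{s_i\}_{i=1}^n \cap (0,t)$, so these are exactly the terms appearing in the asserted sum.

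The second step is to establish, for each $j \in \{0,\dots,m\}$, the piece identity $f(\sigma_{j+1}^-) - f(\sigma_j^+) = \int_{\sigma_j}^{\sigma_{j+1}} f'(s)\,\d s$. On the compact subinterval $[\sigma_j+\varepsilon,\ \sigma_{j+1}-\varepsilon]$ the ordinary fundamental theorem of calculus gives $f(\sigma_{j+1}-\varepsilon) - f(\sigma_j+\varepsilon) = \int_{\sigma_j+\varepsilon}^{\sigma_{j+1}-\varepsilon} f'(s)\,\d s$; letting $\varepsilon \downarrow 0$ and invoking the existence of the one-sided limits on the left-hand side together with absolute continuity of the integral on the right-hand side yields the boundary-to-boundary formula. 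For the two outermost pieces one has $f(\sigma_0^+) = f(0)$ and, when $t$ is not itself a discontinuity point, $f(\sigma_{m+1}^-) = f(t)$ — adopting the convention $f(t) := f(t^-)$ in the remaining case, which is immaterial for the CTMC application since jump times almost surely avoid any fixed $t$.

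The final step is bookkeeping. Summing the per-interval identities over $j = 0,\dots,m$ gives $\sum_{j=0}^m \bigl(f(\sigma_{j+1}^-) - f(\sigma_j^+)\bigr) = \int_0^t f'(s)\,\d s$. Reindexing the telescoped left-hand side as $f(\sigma_{m+1}^-) - f(\sigma_0^+) + \sum_{j=1}^m \bigl(f(\sigma_j^-) - f(\sigma_j^+)\bigr) = f(t) - f(0) - \sum_{j=1}^m \bigl(f(\sigma_j^+) - f(\sigma_j^-)\bigr)$ and rearranging reproduces exactly the claimed expression $f(t) - f(0) = \int_0^t f'(s)\,\d s + \sum_{s \in \{s_i\} \cap (0,t)} \bigl(f(s^+) - f(s^-)\bigr)$.

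I expect no deep obstacle: the only delicate points are (i) justifying the passage to the one-sided limits at the endpoints of each subinterval, which is precisely where the hypothesis that these limits exist is combined with integrability of $f'$, and (ii) keeping the telescoping indices aligned so that the interior endpoint contributions collapse into the jump terms while the two extreme terms assemble into $f(t) - f(0)$.
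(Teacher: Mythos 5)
Your proof is correct, but note that the paper itself does not prove this lemma at all: it is stated as a standard calculus fact and used directly in the Radon--Nikodym weight computation, so there is no in-paper argument to compare against. Your partition--FTC--telescope argument is exactly the standard proof one would supply: enumerate the jump points in $(0,t)$, apply the classical fundamental theorem on each compact subinterval $[\sigma_j+\varepsilon,\sigma_{j+1}-\varepsilon]$, pass to the one-sided limits, and telescope so that the interior boundary terms become the jump corrections $f(s^+)-f(s^-)$; your index bookkeeping checks out. The only points worth making explicit are ones you already flag: you need $f'$ to be integrable on each piece (implicit in the lemma, since otherwise $\int_0^t f'(s)\,\mathrm{d}s$ is meaningless, and automatic in the intended piecewise-smooth CTMC setting where $f(s)=\log\tilde{p}_s(x_s)$ is smooth between jump times), and the endpoint conventions when $0$ or $t$ happens to be a discontinuity point, which the lemma's sum over $\{s_i\}\cap(0,t)$ silently excludes and which is immaterial for the application since jump times almost surely avoid any fixed time.
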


Now, it is ready to compute the importance weight. Specifically,
\begin{align}
    \log  \frac{\overleftarrow{\mathbb{Q}}^{p_t, R_t^\prime}}{\overrightarrow{\mathbb{Q}}^{p_0, R_t}}
    &= \log \frac{\tilde{p}_t}{\tilde{p}_0} \!+\! \log \frac{Z_0}{Z_t}  \!+\! \int_0^t R_s^\prime (x_s, x_s) - R_s (x_s, x_s) \dif s + \!\!\! \sum_{s, x_s^- \ne x_s} \!\!\! \log \frac{R_s^\prime (x_s^-, x_s)}{R_s (x_s, x_s^-)} \nonumber \\
    &= \log \frac{Z_0}{Z_t} \!+\! \int_0^t \! \partial_s \log \tilde{p}_s \dif s \!+\! \int_0^t \!\! R_s^\prime (x_s, x_s) \!-\! R_s (x_s, x_s) \dif s +\!\!\!\!\!\! \sum_{s, x_s^- \ne x_s} \!\!\!\! \log \frac{R^\prime (x_s^-, x_s) p_s (x_s) }{R (x_s, x_s^-) p_s (x_s^-)} \nonumber \\
    &= \log \frac{Z_0}{Z_t} + \int_0^t \partial_s \log \tilde{p}_s \dif s + \int_0^t R_s^\prime (x_s, x_s) - R_s (x_s, x_s) \dif s \nonumber \\
    &= \log \frac{Z_0}{Z_t} + \int_0^t \partial_s \log \tilde{p}_s \dif s + \int_0^t - \sum_{y \ne x_s} R_s (x_s, y) \frac{p_s(y)}{p_s (x_s)} - R_s (x_s, x_s) \dif s \nonumber \\
    &= \log \frac{Z_0}{Z_t} + \int_0^t \partial_s \log \tilde{p}_s - \sum_{y} R_s (x_s, y) \frac{p_s(y)}{p_s (x_s)} \dif s, \nonumber
\end{align}
where the first equation follows from \cref{lemma:rnd}, and the second from \cref{lemma:ftpdf}, leveraging the fact that $t \mapsto \log \tilde{p}_t$ is piecewise differentiable. The final equality holds by the detailed balance condition satisfied by the backward rate matrix, namely $R_t^\prime(y, x) p_t (x) = R_t(x, y) p_t (y)$.
Importantly, although the partition function $Z_t$ is generally intractable, it cancels out in practice through the use of the self-normalised importance sampling estimator as in \cref{eq:ctmt-is}.

\textbf{Free Energy and Internal Energy Estimation.}
To estimate the log-partition function $\log Z_t$, we consider the following lower bound
\begin{align} \label{eq:logzt_est}
    \log Z_t 
    &= \log \E_{p_t}\left[ \frac{Z_t}{Z_0} \right] = \log \E_{x_{0\le s \le t} \sim \overrightarrow{\mathbb{Q}}^{p_0, R_t}} \left[ \frac{\overleftarrow{\mathbb{Q}}^{p_t, R_t^\prime}}{\overrightarrow{\mathbb{Q}}^{p_0, R_t}}\frac{Z_t}{Z_0}  \right] \nonumber \\
    &\ge \E_{x_{0\le s \le t} \sim \overrightarrow{\mathbb{Q}}^{p_0, R_t}} \left[ \log  \frac{\overleftarrow{\mathbb{Q}}^{p_t, R_t^\prime}}{\overrightarrow{\mathbb{Q}}^{p_0, R_t}} + \log \frac{Z_t}{Z_0} \right] \nonumber \\
    &= \E_{x_{0\le s \le t} \sim \overrightarrow{\mathbb{Q}}^{p_0, R_t}} \left[ \int_0^t \partial_s \log \tilde{p}_s (x_s) - \sum_{y} R_s (x_s, y) \frac{p_s(y)}{p_s (x_s)} \dif s \right],
\end{align}
where we assume that $Z_0 = 1$.
Considering the test function $\phi = \log \tilde{p}_t$, one can also estimate the negative entropy, which is related to the internal energy:
\begin{align} \label{eq:entropy_est}
    \E_{p_t}[\log \tilde{p}_t] \!=\! \E_{x_{0\le s \le t} \sim \overrightarrow{\mathbb{Q}}^{p_0, R_t}} \!\left[ \frac{\overleftarrow{\mathbb{Q}}^{p_t, R_t^\prime}}{\overrightarrow{\mathbb{Q}}^{p_0, R_t}}  \log \tilde{p}_t (x_t) \right] \!\!\approx\! \sum_{k=1}^K \frac{\exp(w_t^{(k)})}{\sum_{j=1}^K \exp(w_t^{(j)})} \log \tilde{p}_t (x_t^{(k)}),
\end{align}
where $x_{0\le s \le t}^{(k)} \sim \overrightarrow{\mathbb{Q}}^{\eta, R_t}$, and $w_t^{(k)} (x_{0\le s \le t}^{(k)}) =  \int_0^t \partial_s \log \tilde{p}_s (x_s^{(k)}) - \sum_{y} R_s (x_s^{(k)}, y) \frac{p_s(y)}{p_s (x_s^{(k)})} \dif s$.

\textbf{Effective Sample Size (ESS).}
The Effective Sample Size (ESS) \citep[Chapter 2]{liu2001monte} quantifies how many independent and equally weighted samples a set of importance-weighted Monte Carlo samples is effectively worth. It reflects both the quality and diversity of the weights: when most weights are small and a few dominate, the ESS is low, indicating that only a small subset of samples contributes meaningfully to the estimate.
Formally, consider the importance sampling estimator:
\begin{align}
    \E_p [\phi(x)] \approx \sum_{k=1}^K \frac{\exp(w^{(k)})}{\sum_{j=1}^K \exp(w^{(j)})} \phi (x^{(k)}), \quad w^{(k)} = \log \frac{p(x^{(k)})}{q(x^{(k)})}, \quad x^{(k)} \sim q(x),
\end{align}
where $q$ is the proposal distribution. The normalised ESS is given by:
\begin{align}
    \mathrm{ESS} = \frac{1}{K} \frac{(\sum_{k=1}^K \exp(w^{(k)}))^2}{\sum_{k=1}^K \exp(2w^{(k)})} = \frac{1}{K \sum_{k=1}^K (\tilde{w}^{(k)})^2},
\end{align}
where $\tilde{w}^{(k)} = \frac{\exp(w^{(k)})}{\sum_{j=1}^K \exp(w^{(j)})}$ denotes the normalised importance weight.
In CTMC-based importance sampling, the estimator takes the form:
\begin{align}
    \E_{p_t} [\phi(x_t)] \approx \sum_{k=1}^K \frac{\exp(w_t^{(k)})}{\sum_{j=1}^K \exp(w_t^{(j)})} \phi (x_t^{(k)}), \quad x_{0\le s \le t}^{(k)} \sim \overrightarrow{\mathbb{Q}}^{\eta, R_t}, 
\end{align}
with log-weight $w_t^{(k)} =  \int_0^t \partial_s \log \tilde{p}_s (x_s^{(k)}) - \sum_{y} R_s (x_s^{(k)}, y) \frac{p_s(y)}{p_s (x_s^{(k)})} \dif s$. Thus, the corresponding ESS is computed as
\begin{align}
    \mathrm{ESS} = \frac{1}{K \sum_{k=1}^K (\tilde{w}_t^{(k)})^2}, \quad \tilde{w}_t^{(k)} = \frac{\exp(w_t^{(k)})}{\sum_{j=1}^K \exp(w_t^{(j)})}.
\end{align}

\textbf{Log-Likelihood Estimation.}
Let $R_t^\theta (y, x) \triangleq R_t^\theta (\tau, i | x) = [G_t^\theta (\tau, i | x)]_+$ be a learned rate matrix, where $y = (x_1, \dots, x_{i-1}, \tau, x_{i+1}, \dots, x_d)$. This matrix parameterizes a forward that progressively transforms noise into data via Euler-Maruyama discretization:
\begin{align}
    x_{t + \Delta t} \sim \mathrm{Cat}\left(  \mathbf{1}_{x_{t + \Delta t}=x_{t}} +  R_t^\theta (x_{t + \Delta t}, x_t) \Delta t \right), \quad x_0 \sim p_0.
\end{align}
The corresponding reverse-time rate matrix is given by
\begin{align}
    {R_t^{\theta}}^\prime (y, x) = R_t^\theta (x, y) \frac{p_t (y)}{p_t(x)} = [G_t^\theta (x_i, i | y)]_+ \frac{\tilde{p}_t (y)}{\tilde{p}_t (x)} = [G_t^\theta (y_i, i | x)]_+ \frac{\tilde{p}_t (y)}{\tilde{p}_t (x)},
\end{align}
which enables simulating the reverse CTMC that maps data back into noise:
\begin{align}
    x_{t - \Delta t} \sim \mathrm{Cat}\left(  \mathbf{1}_{x_{t - \Delta t}=x_{t}} +  R_t^\theta (x_t, x_{t - \Delta t}) \frac{\tilde{p}_t (x_{t - \Delta t})}{\tilde{p}_t (x_{t})} \Delta t \right), \quad x_1 \sim p_1.
\end{align}
This reverse process enables estimation of a variational lower bound (ELBO) on the data log-likelihood:
\begin{align}
    \log \E_{p_{\mathrm{data}}} [p_\theta (x)] 
    &= \log \E_{x_1 \sim p_1, x_{0\le t<1} \sim \overleftarrow{\mathbb{Q}}^{{R_t^\theta}^\prime | x_1}} \left[ \frac{\overrightarrow{\mathbb{Q}}^{R_t^\theta | x_0} }{\overleftarrow{\mathbb{Q}}^{{R_t^\theta}^\prime | x_1}} p_0 (x_0) \right] \nonumber \\
    &\ge \E_{x_1 \sim p_1, x_{0 \le t<1} \sim \overleftarrow{\mathbb{Q}}^{{R_t^\theta}^\prime | x_1}} \left[ \log \frac{\overrightarrow{\mathbb{Q}}^{R_t^\theta | x_0} }{\overleftarrow{\mathbb{Q}}^{{R_t^\theta}^\prime | x_1}} + \log p_0 (x_0) \right] \nonumber \\
    &\approx \frac{1}{K} \sum_{k=1}^K \log \frac{\overrightarrow{\mathbb{Q}}^{R_t^\theta | x_0^{(k)}} }{\overleftarrow{\mathbb{Q}}^{{R_t^\theta}^\prime | x_1^{(k)}}} + \log p_0 (x_0^{(k)}), \quad x_{0 \le t \le 1}^{(k)} \sim p_1 (x_1) \overleftarrow{\mathbb{Q}}^{{R_t^\theta}^\prime | x_1}, \nonumber
\end{align}
where the log-ratio $\log \frac{\overrightarrow{\mathbb{Q}}^{R_t^\theta | x_0} }{\overleftarrow{\mathbb{Q}}^{{R_t^\theta}^\prime | x_1}}$ can be evaluated using \cref{lemma:rnd}
\begin{align}
        \log \frac{\overrightarrow{\mathbb{Q}}^{R_t^\theta | x_0} }{\overleftarrow{\mathbb{Q}}^{{R_t^\theta}^\prime | x_1}} 
        &= \int_1^0 {R_s^\theta}^\prime (x_s, x_s) - R_s^\theta (x_s, x_s) \dif s + \sum_{s, x_s^- \ne x_s} \log \frac{R_s^\theta (x_s, x_s^-)}{{R_t^\theta}^\prime (x_s^-, x_s)} \nonumber \\
        &= \int_1^0 \sum_{y \ne x_s} \left[R_s^\theta (y, x_s) - R_s^\theta (x_s, y) \frac{p_s (y)}{p_s (x_s)}\right] \dif s + \sum_{s, x_s^- \ne x_s} \log \frac{p_s (x_s)}{p_s (x_s^-)} \nonumber \\
        &= \int_1^0 \!\!\! \sum_{\substack{i=1,\ldots,d\\y\in N(x),\, y_i\neq x_i}} \!\! [G_s^\theta (y_i, i | x_s)]_+ - [-G_s^\theta (y_i, i | x_s)] \frac{\tilde{p}_s (y)}{\tilde{p}_s (x_s)} \dif s + \!\!\!\! \sum_{s, x_s^- \ne x_s} \log \frac{\tilde{p}_s (x_s)}{\tilde{p}_s (x_s^-)}. \nonumber
    \end{align}

\subsection{Training Discrete EBMs with Importance Sampling} \label{sec:appendix_train_ebm_alg}
To train a discrete EBM $p_\phi (x) \propto \exp(- E_\phi (x))$, we employ contrastive divergence, which estimates the gradient of the log-likelihood as
\begin{align} \label{eq:appendix_cd}
    \nabla_\phi \E_{p_{\mathrm{data}}(x)}[\log p_\phi (x)] = \E_{p_\phi (x)} [\nabla_\phi E_\phi (x)] - \E_{p_{\mathrm{data}} (x)} [\nabla_\phi E_\phi (x)],
\end{align}
where the second term can be easily approximated using the training data with Monte Carlo estimation. To estimate the intractable expectation over $p_\phi$, MCMC method is typically used.
However, for computational efficiency, only a limited number of MCMC steps are performed, resulting in a biased maximum likelihood estimator and suboptimal energy function estimates \citep{nijkamp2020anatomy}.
To address this issue, we replace MCMC with the proposed discrete neural flow samplers. Specifically, we train a rate matrix $R_t^\theta$ to sample from the target EBM $p_\phi$. The expectation over $p_\phi$can then be estimated using CTMT-inspired importance sampling, as described in \cref{sec:appendix_is}:
\begin{align}
    \E_{p_\phi (x)} [\nabla_\phi E_\phi (x)] 
    =  \E_{x \sim \overrightarrow{\mathbb{Q}}^{p_0, R_t^\theta}} \!\left[ \frac{\overleftarrow{\mathbb{Q}}^{p_\phi, {R_t^\theta}^\prime}}{\overrightarrow{\mathbb{Q}}^{p_0, R_t^\theta}}  \nabla_\phi E_\phi (x)\right]
    \approx \sum_{k=1}^K \frac{\exp(w^{(k)})}{\sum_{j=1}^K \exp (w^{(j)})} \nabla_\phi E_\phi (x^{(k)}), \nonumber
\end{align}
where $w^{(k)} = \int_0^1 \xi_t (x_t; R_t^\theta) \dif t$.
To summarise, we jointly train the EBM $p_\phi$ and the \ours by alternating the following two steps until convergence:
\begin{itemize}
    \item[1)] Updating the rate matrix parameters $\theta$ using the training procedure described in \cref{alg:dnfs_training};
    \item[2)] Updating EBM $p_\phi$ via contrastive divergence, as defined in \cref{eq:appendix_cd}.
\end{itemize}

\subsection{Combinatorial Optimisation as Sampling}

Consider a general combinatorial optimisation problem of the form $\min_{x \in \mathcal{X}} f(x)$ subject to $c(x) = 0$. This problem can be reformulated as sampling from an unnormalised distribution $p(x) \propto \exp(-E(x)/T)$, where the energy function is defined as $E(x) = f(x) + \lambda c(x)$ \citep{sun2023revisiting}. As the temperature $T \to \infty$, $p(x)$ approaches the uniform distribution over $\mathcal{X}$, while as $T \to 0$, $p(x)$ concentrates on the optimal solutions, becoming uniform over the set of minimisers.
In this paper, we focus on two combinatorial optimisation problems: Maximum Independent Set (MIS) and Maximum Cut (MaxCut).
Below, we define the energy functions used for each task following \cite{sun2022annealed}. Given a graph $G = (V,E)$, we denote its adjacency matrix by $A$, which is a symmetric and zero-diagonal binary matrix.

\textbf{MIS.}
The MIS problem can be formulated as the following constrained optimisation task:
\begin{align}
    \min_{x \in \{0,1\}^d} - \sum_{i=1}^{|V|} x_i, \quad \mathrm{s.t.}\ x_i x_j = 0, \forall (i,j) \in E.
\end{align}
We define the corresponding energy function in quadratic form as:
\begin{align}
   -\log p(x) \propto E(x) = - 1^T x + \lambda \frac{x^T A x}{2}.
\end{align}
Thus, the log-probability ratio between neighboring configurations $y\in \mathcal{N}(x)$, differing from $x$ by a single bit flip, has a closed-form expression
\begin{align}
    \left[ \log \frac{p(y)}{p(x)} \right]_{y \in \mathcal{N}(x)} = (1 - 2x) \odot (x - \frac{1}{2}\lambda Ax).
\end{align}
Following \citet{sun2022annealed}, we set $\lambda=1.0001$.
After inference, we apply a post-processing step to ensure feasibility: we iterate over each node $x_i$, and if any neighbour $x_j = 1$ for $(x_i, x_j) \in E$, we set $x_j \leftarrow 0$. This guarantees that the resulting configuration $x$ is a valid independent set.

\textbf{Maxcut.}
The Maxcut problem can be formulated as
\begin{align}
    \min_{x \in \{-1, 1\}^d} - \sum_{i, j \in E} A_{i,j} \left( \frac{1 - x_i x_j}{2} \right).
\end{align}
We define the corresponding energy function as
\begin{align}
    -\log p(x) \propto E(x) = \frac{1}{4} x^T A x.
\end{align}
This leads to the following closed-form expression for the log-ratio 
\begin{align}
    \left[ \log \frac{p(y)}{p(x)} \right]_{y \in \mathcal{N}(x)} = -\frac{1}{2} A x
\end{align}
Since any binary assignment yields a valid cut, no post-processing is required for MaxCut.

\begin{wrapfigure}{r}{0.35\linewidth}
\vspace{-2mm}
    \centering
    \includegraphics[width=.99\linewidth]{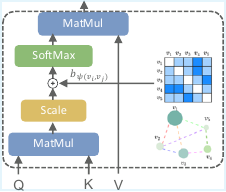}
\vspace{-6mm}
\caption{Illustration of the graph-aware attention mechanism. The figure is adapted from \citep[Figure 1]{ying2021transformers}}.
\vspace{-3mm}
\label{fig:graphormer_attn}
\end{wrapfigure}
\subsection{Locally Equivariant GraphFormer (leGF)} \label{sec:appendix_leGF}
To train an amortized version of \ours for solving combinatorial optimization problems, it is essential to condition the model on the underlying graph structure. To this end, we integrate Graphormer \citep{ying2021transformers} into our proposed Locally Equivariant Transformer (leTF), resulting in the Locally Equivariant Graphformer (leGF).

The leGF architecture largely follows the structure of leTF, with the key difference being the computation of attention weights, which are modified to incorporate graph-specific structural biases.
Following \citet{ying2021transformers}, given a graph $G=(V,E)$, we define $\psi(v_i, v_j)$ as the shortest-path distance between nodes $v_i$ and $v_j$ if a path exists; otherwise, we assign it a special value (e.g., $-1$). 
Each possible output of $\psi$ is associated with a learnable scalar $b_{\psi(v_i, v_j)}$, which serves as a structural bias term in the self-attention mechanism.
Let $A_{i,j}$ denote the $(i,j)$-th element of the Query-Key interaction matrix. The attention weights are then computed as:
\begin{align}
    A_{i,j} = \mathrm{softmax}( \frac{Q_i^T K_j}{\sqrt{d_k}} + b_{\psi(v_i, v_j)}), \nonumber
\end{align}
where $b_{\psi(v_i, v_j)}$ is shared across all attention layers.
An illustration of this graph-aware attention mechanism is shown in \cref{fig:graphormer_attn}. For further details, we refer the reader to \citet{ying2021transformers}.

\section{Details of Experimental Settings and Additional Results} \label{sec:appendix_exp_res}
In this section, we present the detailed experimental settings and additional results. All experiments are conducted on a single Nvidia RTX A6000 GPU.

\begin{figure}[!t]
% % Hacky ways to use \vspace & \hspace (＃－.－)
    \centering
    \begin{minipage}[t]{0.32\linewidth}
        \begin{minipage}[t]{0.3\linewidth}
            \centering
            \vspace{5mm}
            {\scriptsize Energy}
            \includegraphics[width=\linewidth]{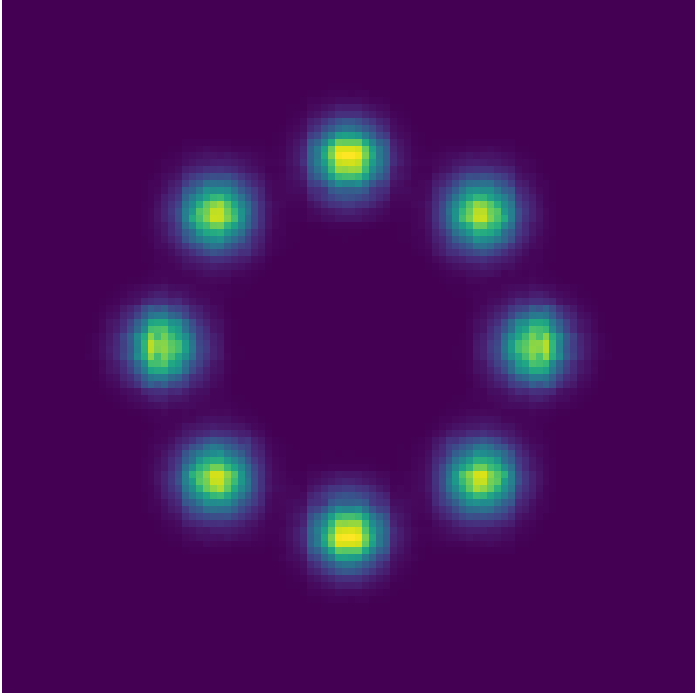}
        \end{minipage}
        \begin{minipage}[t]{0.62\linewidth}
            \centering
            \begin{minipage}[t]{0.48\linewidth}
                \centering
                {\scriptsize GFlowNet}
                \includegraphics[width=\linewidth]{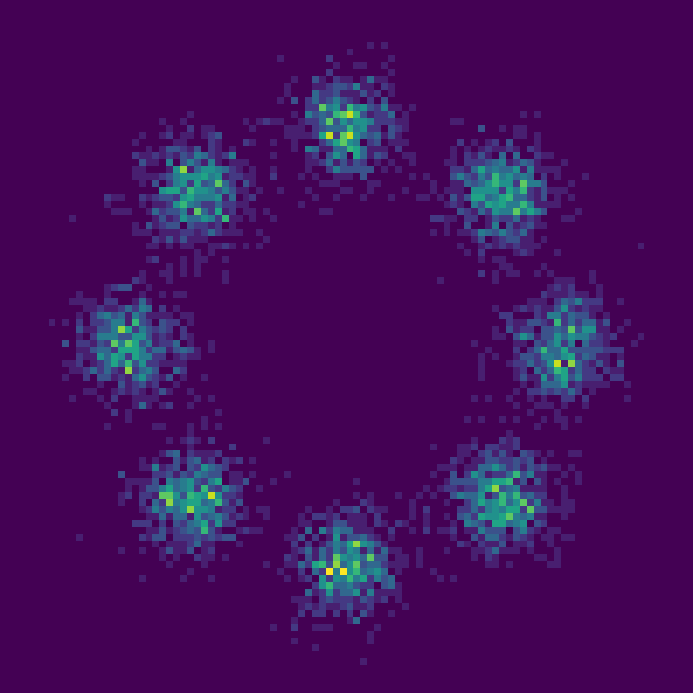}
            \end{minipage}
            \begin{minipage}[t]{0.48\linewidth}
                \centering
                {\scriptsize LEAPS}
                \includegraphics[width=\linewidth]{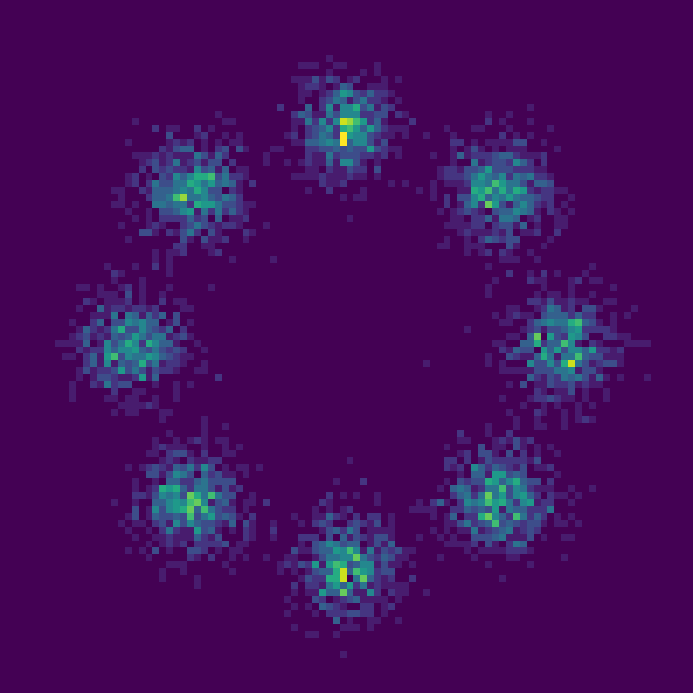}
            \end{minipage}
            \begin{minipage}[t]{0.48\linewidth}
                \centering
                {\scriptsize Oracle}
                \includegraphics[width=\linewidth]{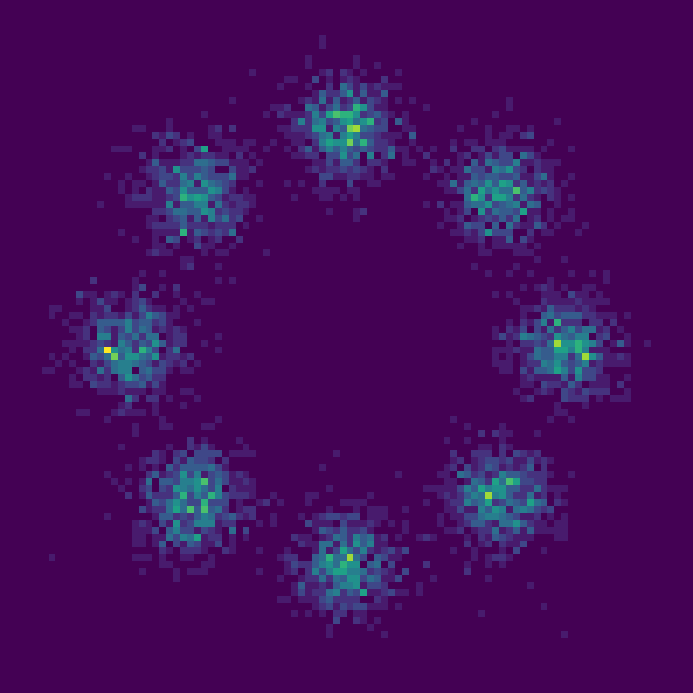}
            \end{minipage}
            \begin{minipage}[t]{0.48\linewidth}
                \centering
                {\scriptsize \ours}
                \includegraphics[width=\linewidth]{figures/toy2d/dfs/8gaussians_flow_samples.png}
            \end{minipage}
        \end{minipage}
    \end{minipage}
    \begin{minipage}[t]{0.32\linewidth}
        \begin{minipage}[t]{0.3\linewidth}
            \centering
            \vspace{5mm}
            {\scriptsize Energy}
            \includegraphics[width=\linewidth]{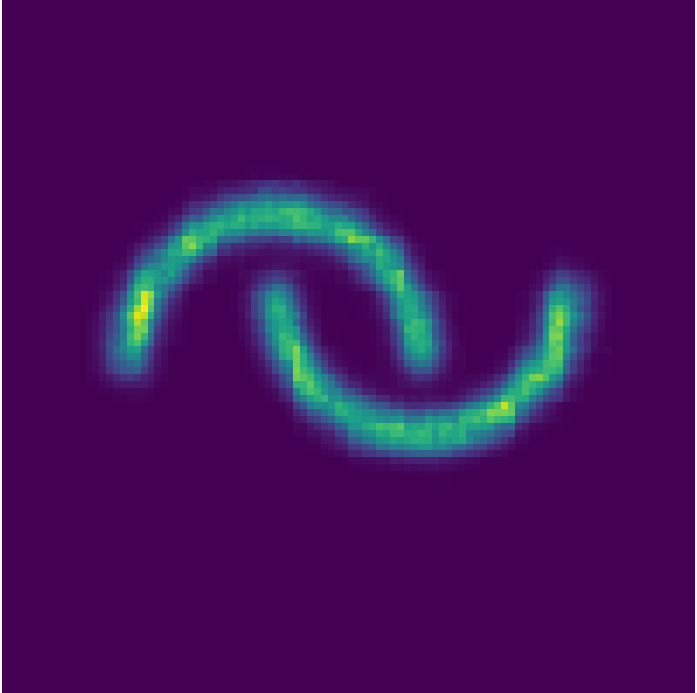}
        \end{minipage}
        \begin{minipage}[t]{0.62\linewidth}
            \centering
            \begin{minipage}[t]{0.48\linewidth}
                \centering
                {\scriptsize GFlowNet}
                \includegraphics[width=\linewidth]{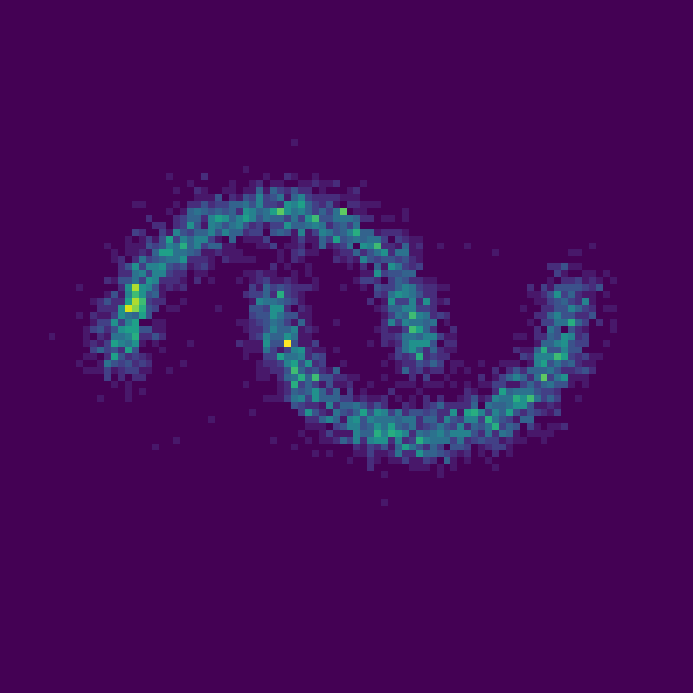}
            \end{minipage}
            \begin{minipage}[t]{0.48\linewidth}
                \centering
                {\scriptsize LEAPS}
                \includegraphics[width=\linewidth]{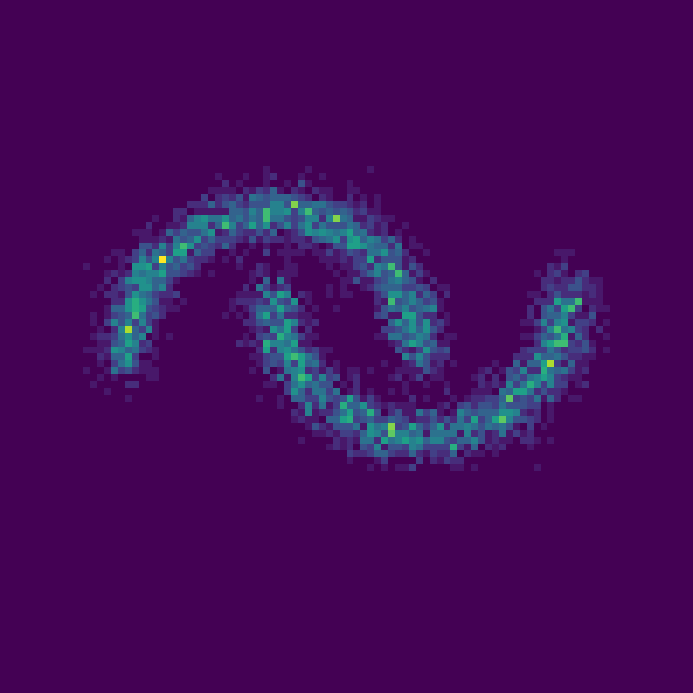}
            \end{minipage}
            \begin{minipage}[t]{0.48\linewidth}
                \centering
                {\scriptsize Oracle}
                \includegraphics[width=\linewidth]{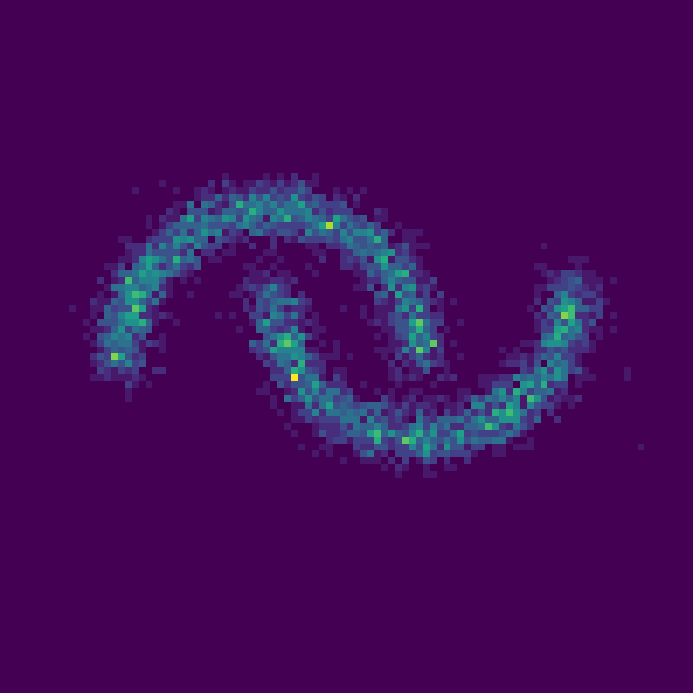}
            \end{minipage}
            \begin{minipage}[t]{0.48\linewidth}
                \centering
                {\scriptsize \ours}
                \includegraphics[width=\linewidth]{figures/toy2d/dfs/moons_flow_samples.png}
            \end{minipage}
        \end{minipage}
    \end{minipage}
    \begin{minipage}[t]{0.32\linewidth}
        \begin{minipage}[t]{0.3\linewidth}
            \centering
            \vspace{5mm}
            {\scriptsize Energy}
            \includegraphics[width=\linewidth]{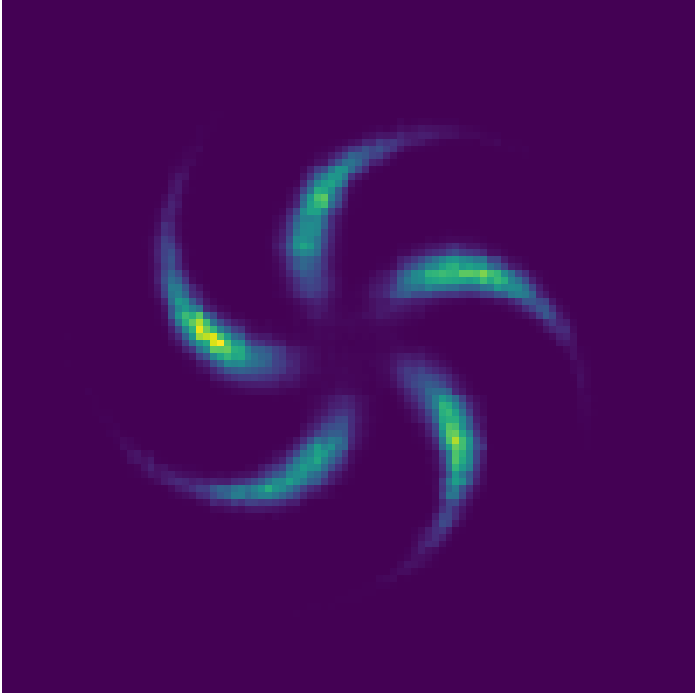}
        \end{minipage}
        \begin{minipage}[t]{0.62\linewidth}
            \centering
            \begin{minipage}[t]{0.48\linewidth}
                \centering
                {\scriptsize GFlowNet}
                \includegraphics[width=\linewidth]{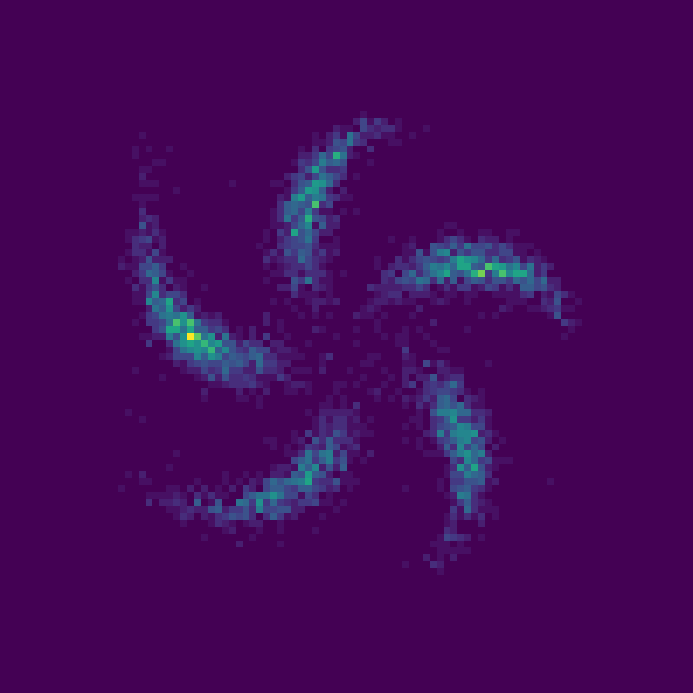}
            \end{minipage}
            \begin{minipage}[t]{0.48\linewidth}
                \centering
                {\scriptsize LEAPS}
                \includegraphics[width=\linewidth]{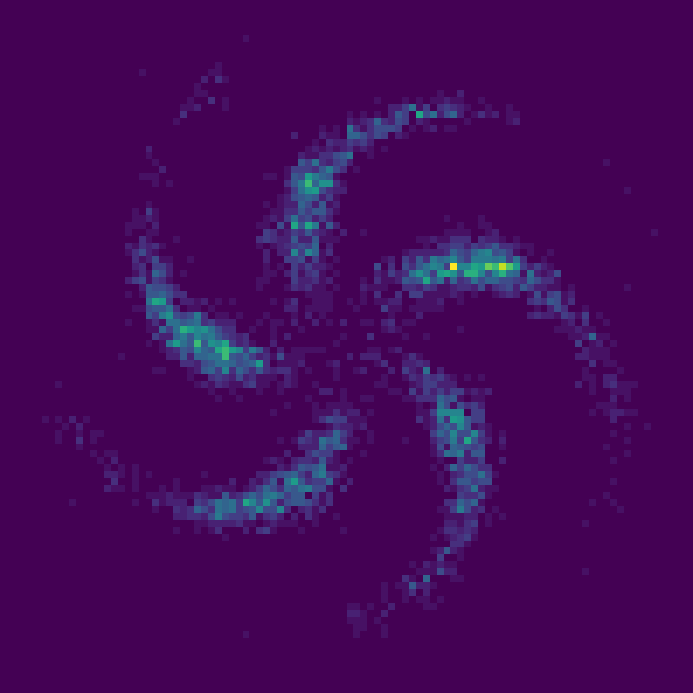}
            \end{minipage}
            \begin{minipage}[t]{0.48\linewidth}
                \centering
                {\scriptsize Oracle}
                \includegraphics[width=\linewidth]{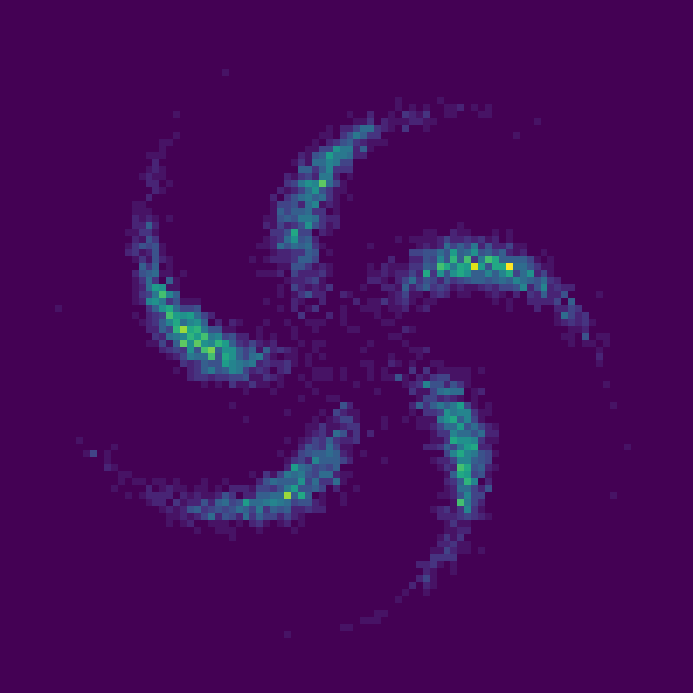}
            \end{minipage}
            \begin{minipage}[t]{0.48\linewidth}
                \centering
                {\scriptsize \ours}
                \includegraphics[width=\linewidth]{figures/toy2d/dfs/pinwheel_flow_samples.png}
            \end{minipage}
        \end{minipage}
    \end{minipage}
    \begin{minipage}[t]{0.32\linewidth}
        \begin{minipage}[t]{0.3\linewidth}
            \centering
            \vspace{5mm}
            {\scriptsize Energy}
            \includegraphics[width=\linewidth]{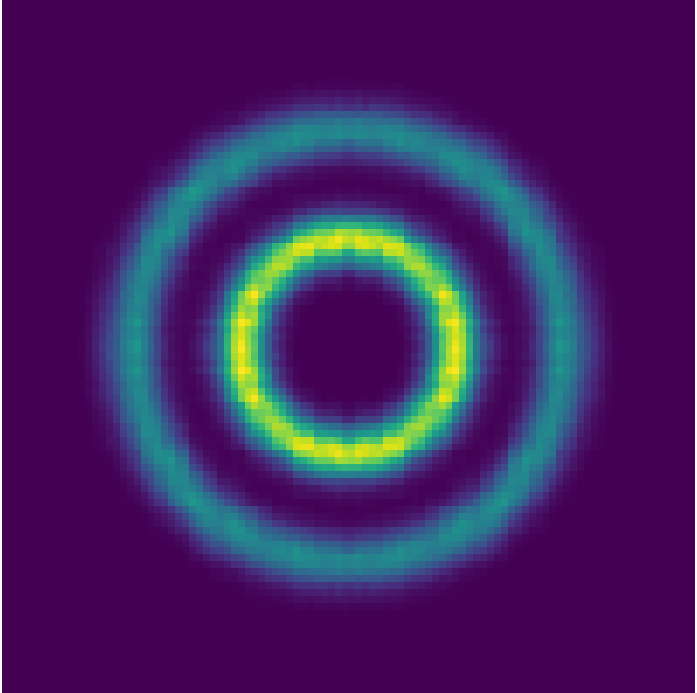}
        \end{minipage}
        \begin{minipage}[t]{0.62\linewidth}
            \centering
            \begin{minipage}[t]{0.48\linewidth}
                \centering
                {\scriptsize GFlowNet}
                \includegraphics[width=\linewidth]{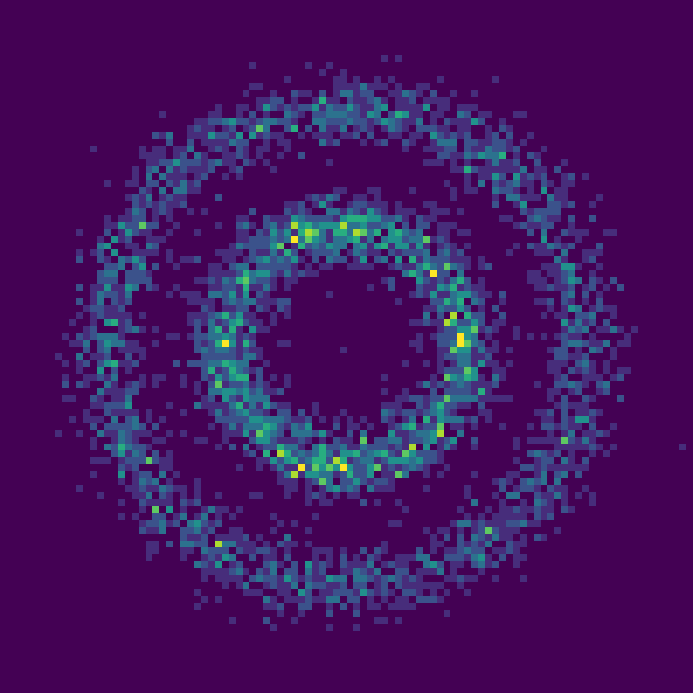}
            \end{minipage}
            \begin{minipage}[t]{0.48\linewidth}
                \centering
                {\scriptsize LEAPS}
                \includegraphics[width=\linewidth]{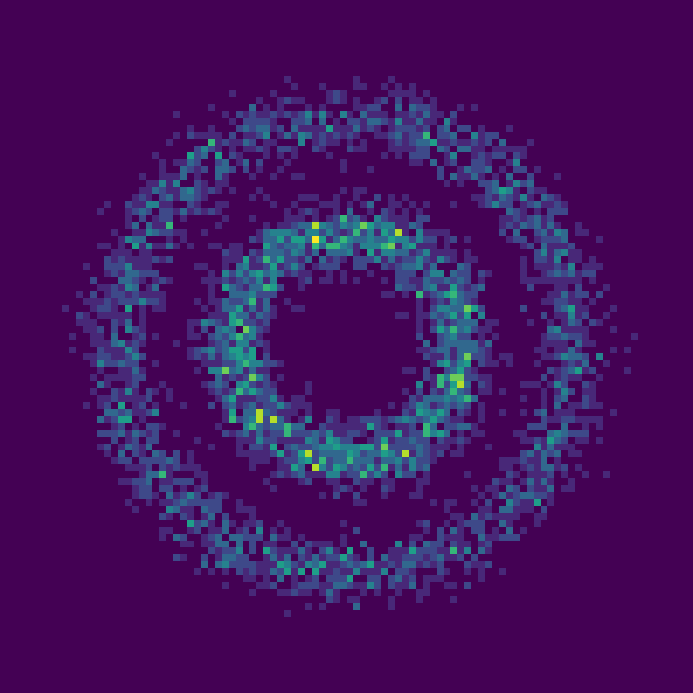}
            \end{minipage}
            \begin{minipage}[t]{0.48\linewidth}
                \centering
                {\scriptsize Oracle}
                \includegraphics[width=\linewidth]{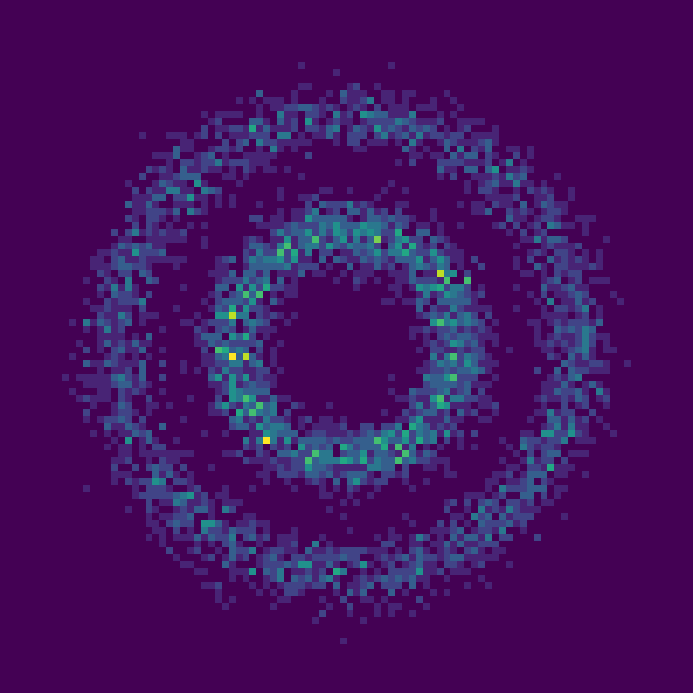}
            \end{minipage}
            \begin{minipage}[t]{0.48\linewidth}
                \centering
                {\scriptsize \ours}
                \includegraphics[width=\linewidth]{figures/toy2d/dfs/circles_flow_samples.png}
            \end{minipage}
        \end{minipage}
    \end{minipage}
    \begin{minipage}[t]{0.32\linewidth}
        \begin{minipage}[t]{0.3\linewidth}
            \centering
            \vspace{5mm}
            {\scriptsize Energy}
            \includegraphics[width=\linewidth]{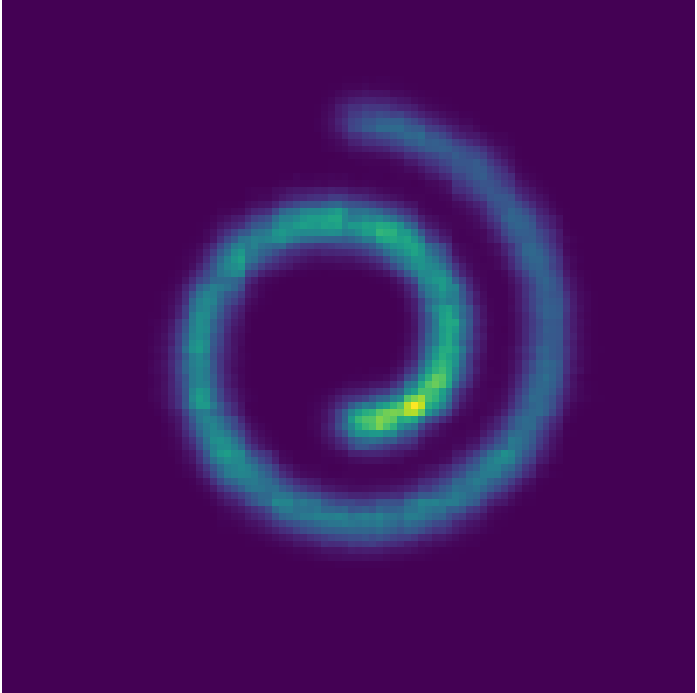}
        \end{minipage}
        \begin{minipage}[t]{0.62\linewidth}
            \centering
            \begin{minipage}[t]{0.48\linewidth}
                \centering
                {\scriptsize GFlowNet}
                \includegraphics[width=\linewidth]{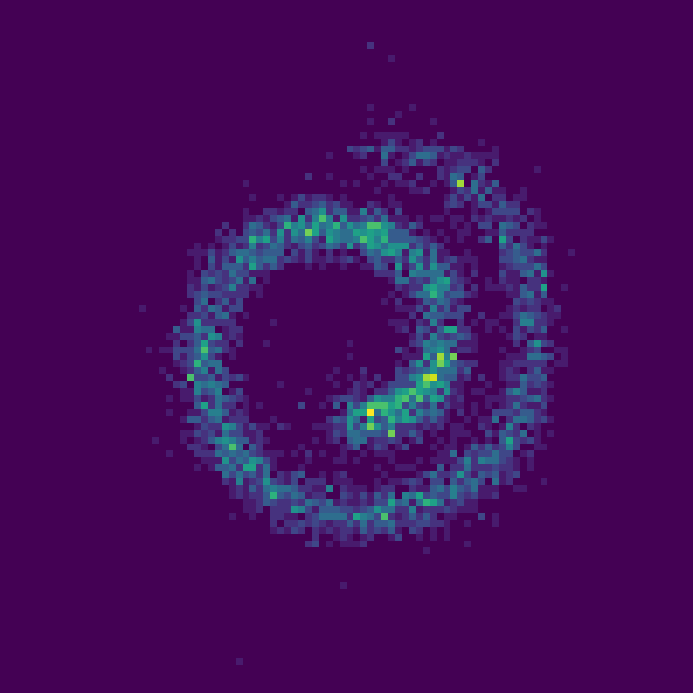}
            \end{minipage}
            \begin{minipage}[t]{0.48\linewidth}
                \centering
                {\scriptsize LEAPS}
                \includegraphics[width=\linewidth]{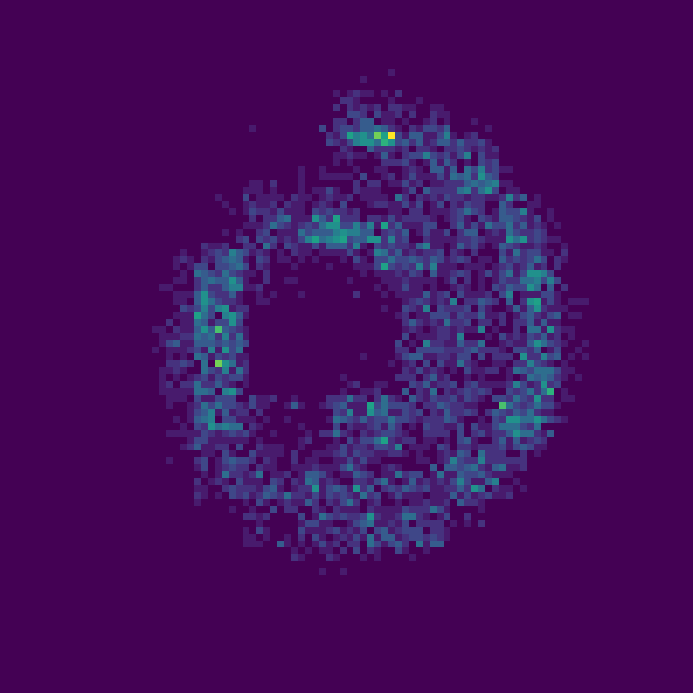}
            \end{minipage}
            \begin{minipage}[t]{0.48\linewidth}
                \centering
                {\scriptsize Oracle}
                \includegraphics[width=\linewidth]{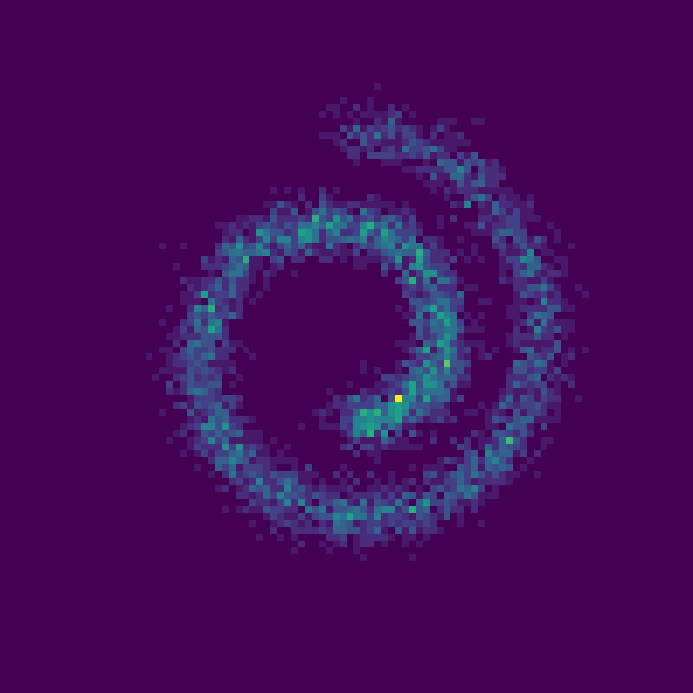}
            \end{minipage}
            \begin{minipage}[t]{0.48\linewidth}
                \centering
                {\scriptsize \ours}
                \includegraphics[width=\linewidth]{figures/toy2d/dfs/swissroll_flow_samples.png}
            \end{minipage}
        \end{minipage}
    \end{minipage}
    \begin{minipage}[t]{0.32\linewidth}
        \begin{minipage}[t]{0.3\linewidth}
            \centering
            \vspace{5mm}
            {\scriptsize Energy}
            \includegraphics[width=\linewidth]{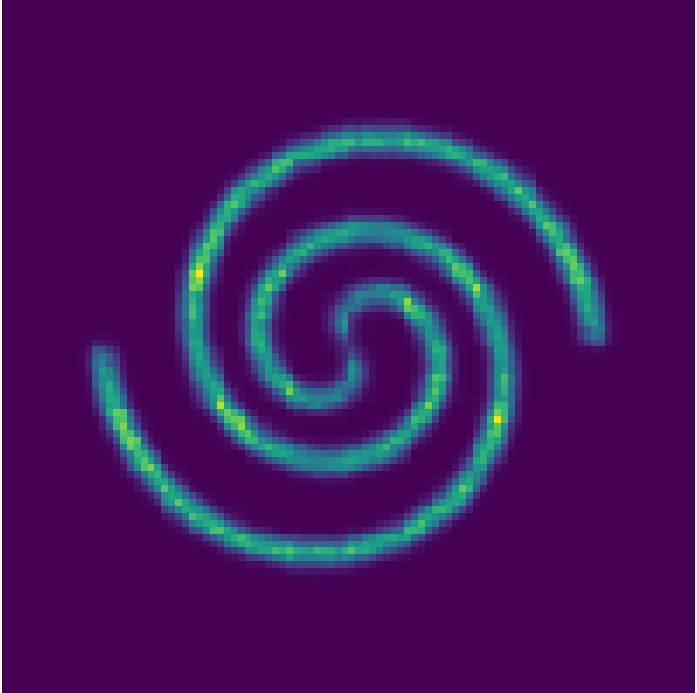}
        \end{minipage}
        \begin{minipage}[t]{0.62\linewidth}
            \centering
            \begin{minipage}[t]{0.48\linewidth}
                \centering
                {\scriptsize GFlowNet}
                \includegraphics[width=\linewidth]{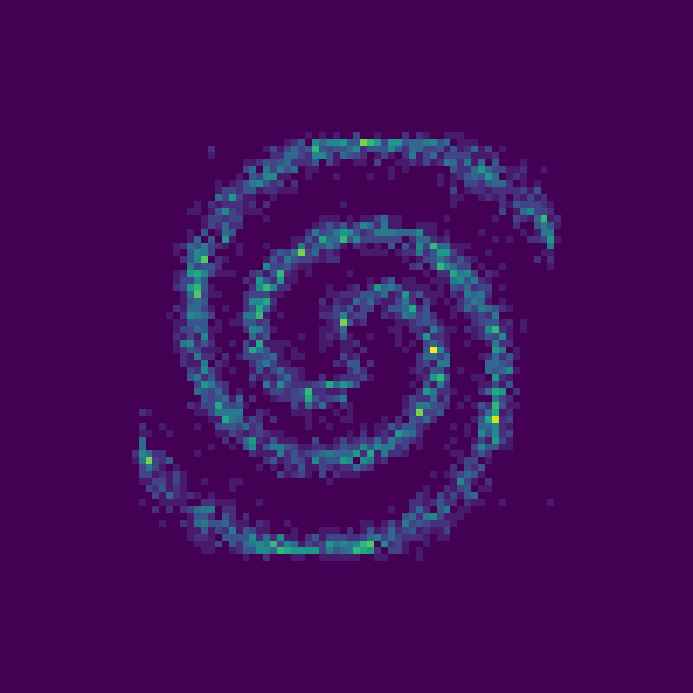}
            \end{minipage}
            \begin{minipage}[t]{0.48\linewidth}
                \centering
                {\scriptsize LEAPS}
                \includegraphics[width=\linewidth]{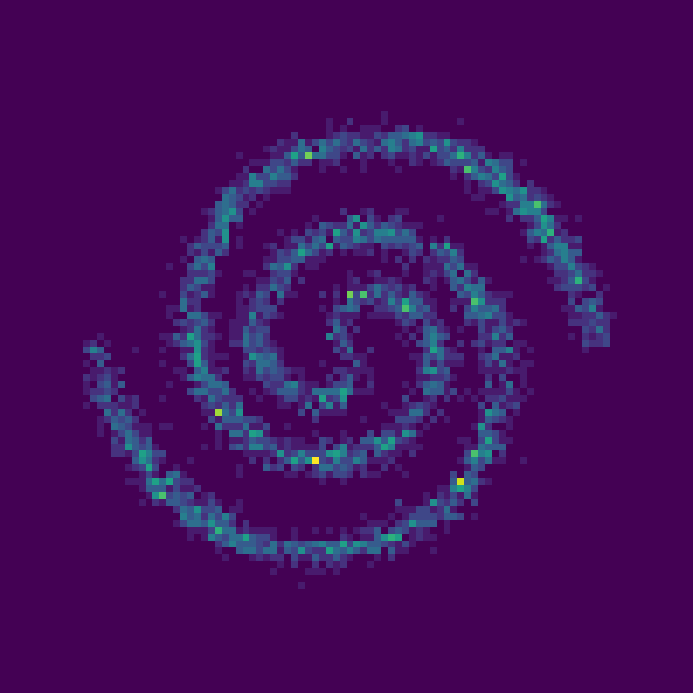}
            \end{minipage}
            \begin{minipage}[t]{0.48\linewidth}
                \centering
                {\scriptsize Oracle}
                \includegraphics[width=\linewidth]{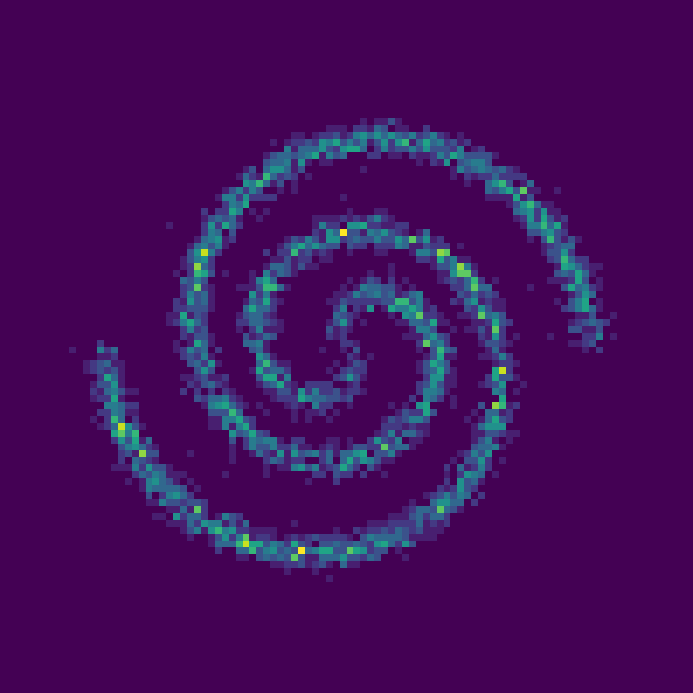}
            \end{minipage}
            \begin{minipage}[t]{0.48\linewidth}
                \centering
                {\scriptsize \ours}
                \includegraphics[width=\linewidth]{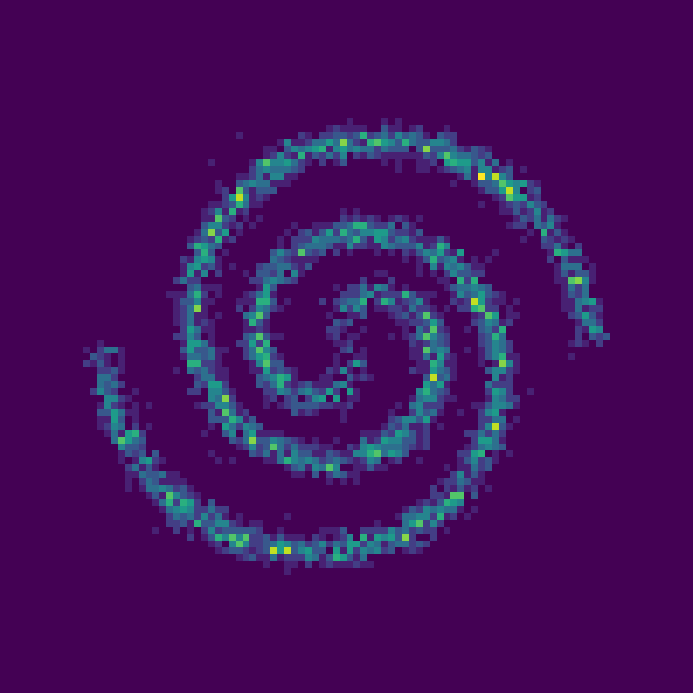}
            \end{minipage}
        \end{minipage}
    \end{minipage}
    \caption{Additional visualisation of sampling from pre-trained EBMs.}
    \label{fig:appendix_sampling_toy2d}
\end{figure}

\subsection{Sampling from Unnormalised Distributions} \label{sec:appendix_exp_sample_from_unnorm_dist}

\subsubsection{Experimental Details}

\textbf{Sampling from Pre-trained EBMs.}
In this experiment, we adopt energy discrepancy\footnote{\url{https://github.com/J-zin/discrete-energy-discrepancy/tree/density_estimation}} to train an EBM, implemented as a 4 layer MLP with 256 hidden units and Swish activation.
Once trained, the pretrained EBM serves as the target unnormalized distribution, with the initial distribution $p_0$ set to uniform. A probability path is then constructed using a linear schedule with $128$ time steps.
To parameterise the rate matrix, we employ the proposed locally equivariant transformer, in which the causal attention block consists of 3 multi-head attention layers, each with 4 heads and 128 hidden units. The model is trained using the AdamW optimizer with a learning rate of 0.0001 and a batch size of 128 for 1,000 epochs (100 steps per epoch). To prevent numerical instability from exploding loss, the log-ratio term $\log \frac{p_t(y)}{p_t(x)}$ is clipped to a maximum value of 5.

\textbf{Sampling from Ising Models.}
We follow the experimental setup described in \cite[Section F.1]{grathwohl2021oops}. The energy function of the Ising model is given by $\log p(x) \propto a x^T J x + b^T x$. 
In \cref{fig:ising_compare_to_baselines}, we set $a = 0.1$ and $b = 0$. The probability path is constructed using a linear schedule with 64 time steps, starting from a uniform initial distribution.
The leTF model comprises 3 bidirectional causal attention layers, each with 4 heads and 128 hidden units. Training is performed using the AdamW optimizer with a learning rate of 0.001, a batch size of 128, and for 1,000 epochs (100 steps per epoch). 
To prevent numerical instability, the log-ratio term is clipped to a maximum value of 5.

\textbf{Experimental Setup for \cref{fig:ising_var_reduction,fig:ising_diff_arch}.}
In this experiment, we set $a=0.1$ and $b=0.2$. 
The probability path is defined via a linear schedule over 64 time steps, starting from a uniform initial distribution. 
The leTF model is composed of 3 bidirectional causal attention layers, each with 4 heads and 64 hidden units. Training is conducted using the AdamW optimiser with a learning rate of 0.001, a batch size of 128, and for 500 epochs (100 steps per epoch). To mitigate numerical instability, the log-ratio term is clipped to a maximum value of 5.

\subsubsection{Additional Results}

\textbf{Additional Results of Sampling from Pre-trained EBMs.}
We present additional results comparing \ours to baseline methods on sampling from pre-trained EBMs in \cref{fig:appendix_sampling_toy2d}. The results demonstrate that \ours produces samples that closely resemble those from the oracle distribution.

\begin{wrapfigure}{r}{0.4\linewidth}
    \centering
    % \vspace{-2mm}
    \begin{minipage}[t]{1.\linewidth}
        \raisebox{0.5mm}{\rotatebox{90}{\scriptsize LEAPS + leConv}}
        \begin{minipage}[t]{0.3\linewidth}
            \centering
            \includegraphics[width=\linewidth]{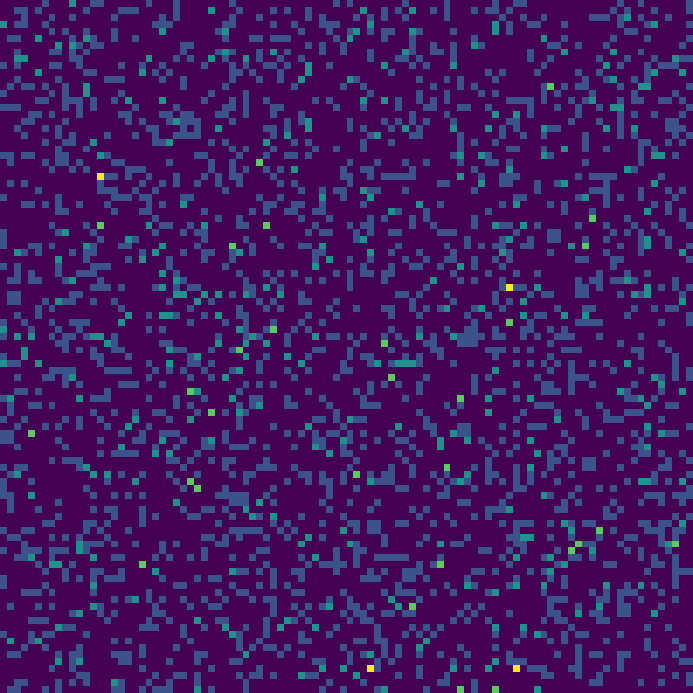}
        \end{minipage}
        \begin{minipage}[t]{0.3\linewidth}
            \centering
            \includegraphics[width=\linewidth]{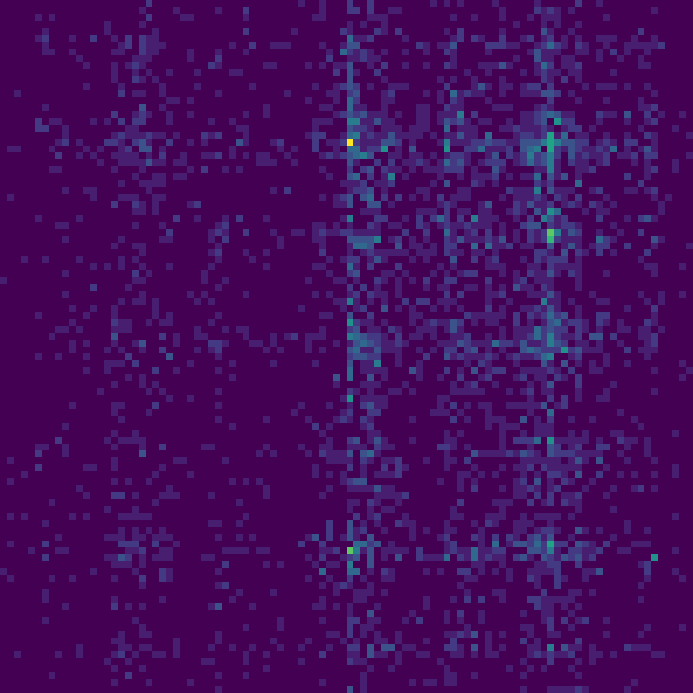}
        \end{minipage}
        \begin{minipage}[t]{0.3\linewidth}
            \centering
            \includegraphics[width=\linewidth]{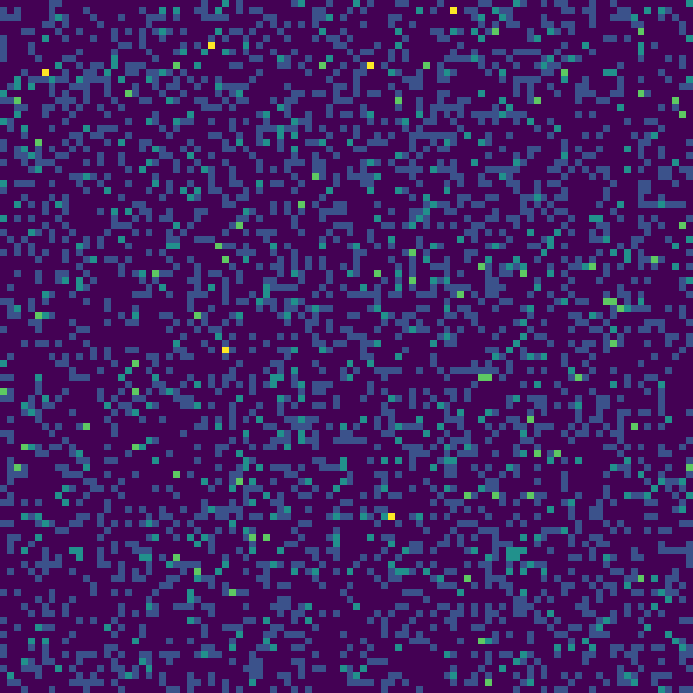}
        \end{minipage}
    \end{minipage}
    \begin{minipage}[t]{1.\linewidth}
        \raisebox{1mm}{\rotatebox{90}{\scriptsize LEAPS + leTF}}
        \begin{minipage}[t]{0.3\linewidth}
            \centering
            \includegraphics[width=\linewidth]{figures/toy2d/leaps/checkerboard_leaps_samples.png}
        \end{minipage}
        \begin{minipage}[t]{0.3\linewidth}
            \centering
            \includegraphics[width=\linewidth]{figures/toy2d/leaps/25gaussians_leaps_samples.png}
        \end{minipage}
        \begin{minipage}[t]{0.3\linewidth}
            \centering
            \includegraphics[width=\linewidth]{figures/toy2d/leaps/rings_leaps_samples.png}
        \end{minipage}
    \end{minipage}
    % \vspace{-2mm}
    \caption{Comparison of leConv and leTF on sampling from pre-trained EBMs with LEAPS.}
    \label{fig:leaps_leconv_letf_deep_ebm}
    \vspace{-3mm}
\end{wrapfigure}
\textbf{LEAPS with leConv and leTF.}
While leConv performs comparably to \ours on Ising models, its convolutional architecture may struggle to adapt to non-grid data structures.
In this experiment, we use the LEAPS\footnote{\url{https://github.com/malbergo/leaps}} algorithm \citep{Holderrieth2025LEAPSAD}  to train a neural sampler with two different locally equivariant architectures: leConv \citep{Holderrieth2025LEAPSAD} and leTF (ours). 
Given the data has 32 dimensions, we pad it to 36 and reshape it into a $6 \times 6$ grid to make it compatible with leConv.
As shown in \cref{fig:leaps_leconv_letf_deep_ebm}, LEAPS with leConv fails to achieve meaningful performance, whereas the proposed transformer-based architecture leTF performs comparably to \ours.
This result highlights the limitations of leConv, whose expressiveness is constrained on non-grid data, while leTF offers greater flexibility and generalisation across diverse input structures.

\begin{wrapfigure}{r}{0.4\linewidth}
% \vspace{-4mm}
    \centering
    \includegraphics[width=.99\linewidth]{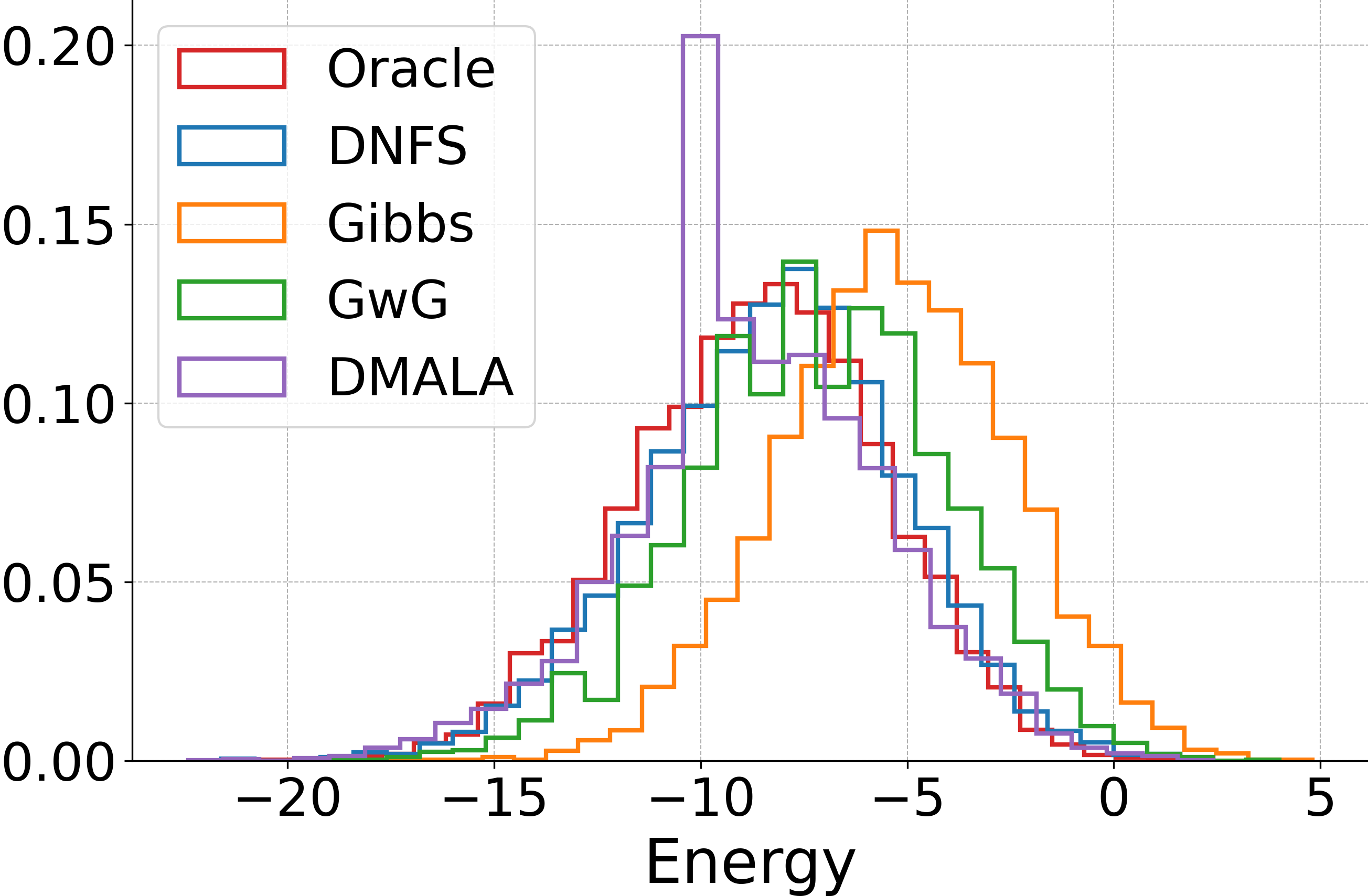}
% \vspace{-4mm}
\caption{Histogram of sample energy for different sampling methods.}
\vspace{-2mm}
\label{fig:ising_compare_to_mcmc_baselines}
\end{wrapfigure}
\textbf{Comparing to MCMC Methods.}
Compared to MCMC methods, a key advantage of neural samplers is their ability to guarantee convergence when trained optimally, whereas MCMC methods often suffer from slow mixing and poor convergence.
To highlight this benefit, we adopt the same setting as in \cref{fig:ising_compare_to_baselines}, using \ours with 64 sampling steps. 
For a fair comparison, we evaluate MCMC baselines with the same number of steps, including Gibbs sampling \citep{casella1992explaining}, Gradient with Gibbs (GwG) \citep{grathwohl2021oops}, and the Discrete Metropolis-adjusted Langevin Algorithm (DMALA) \citep{zhang2022langevin}. 
We present energy histograms based on 5,000 samples in \cref{fig:ising_compare_to_mcmc_baselines}, with a long-run Gibbs sampler serving as the oracle.
The results show that short-run MCMC methods struggle to produce accurate samples, although gradient-based variants (GwG and DMALA) outperform conventional Gibbs sampling. In contrast, the energy distribution produced by \ours closely matches the oracle, demonstrating the effectiveness of the proposed neural sampler.

\begin{wrapfigure}{r}{0.65\linewidth}
\vspace{-1mm}
\centering
    \begin{minipage}[t]{0.48\linewidth}
        \centering
        \includegraphics[width=\linewidth]{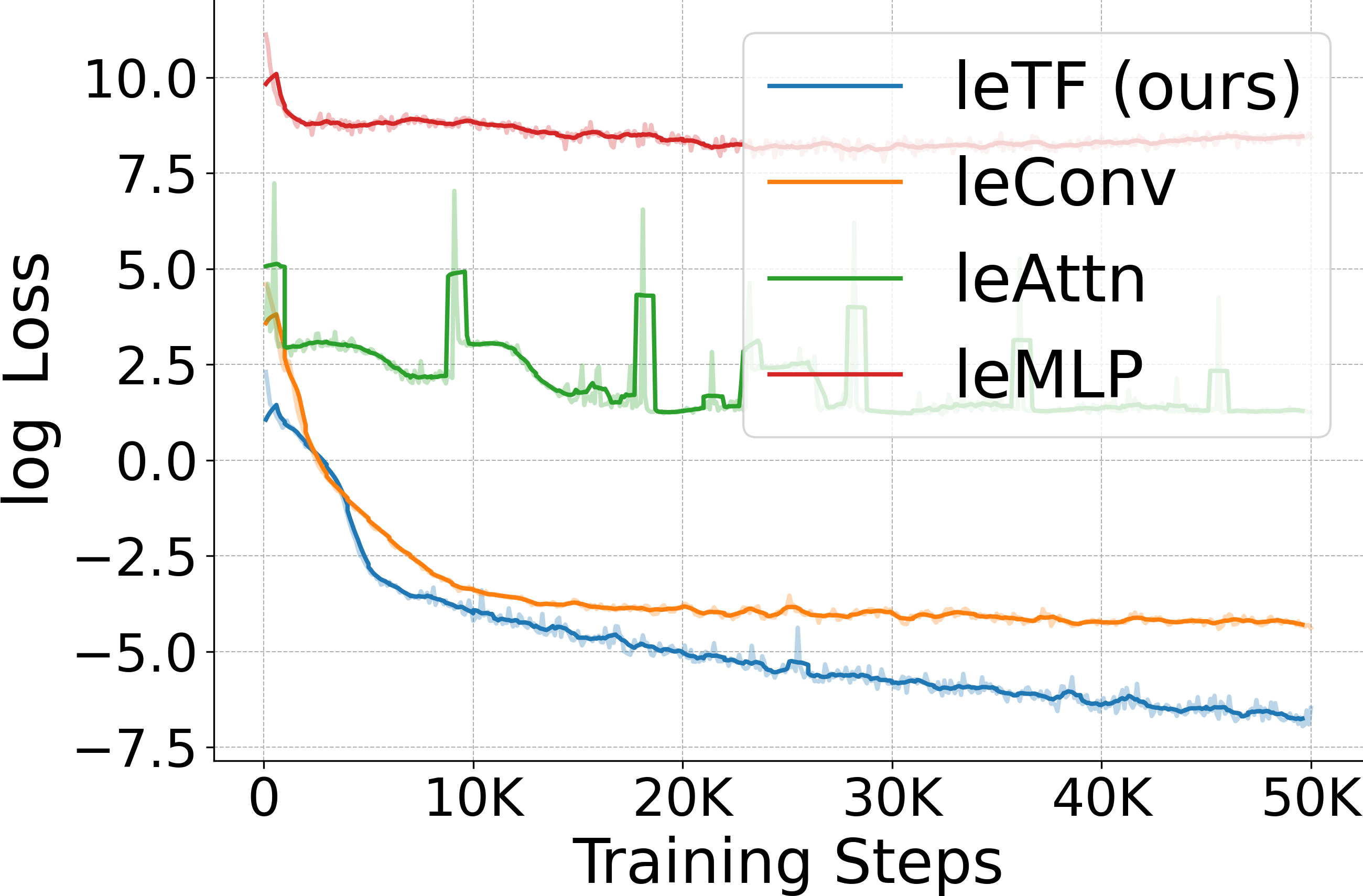}
        \caption{Training loss of different locally equivariant networks (ref: \cref{fig:ising_diff_arch}). }
        \label{fig:appendix_loss_diff_arch}
    \end{minipage}
    \hfill
    \begin{minipage}[t]{0.48\linewidth}
        \centering
        \includegraphics[width=\linewidth]{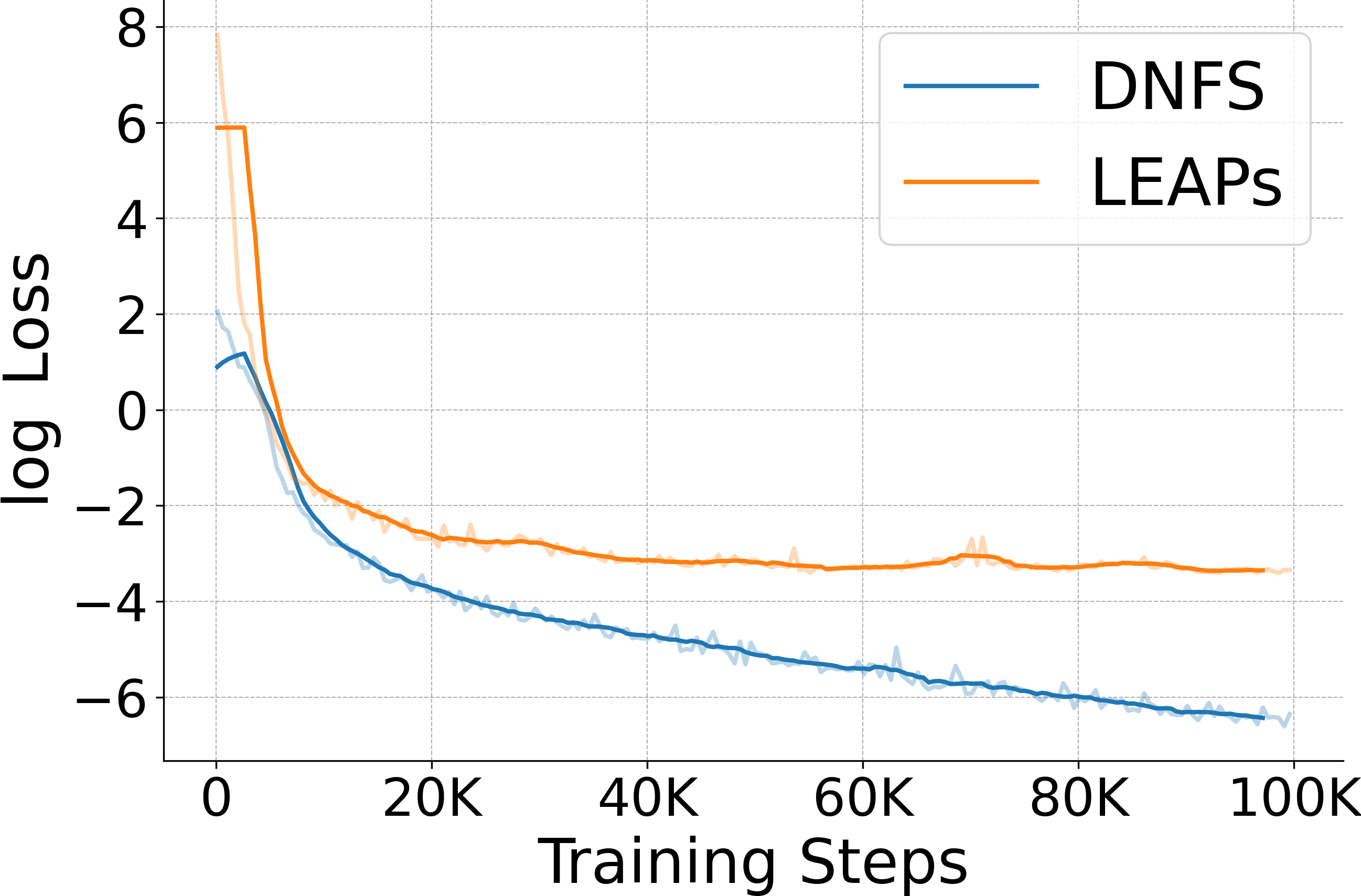}
        \caption{Training loss comparison between \ours and LEAPS (ref: \cref{fig:ising_compare_to_baselines}).}
        \label{fig:appendix_loss_leaps_dnfs}
    \end{minipage}
    \vspace{-4mm}
\end{wrapfigure}
\textbf{Training Loss Comparison.}
We train our model by minimising the loss function defined in \cref{eq:dfs_loss}, where a smaller loss value reflects a better fit to the parameterised rate matrix. As such, the loss value serves as a proxy for training quality and model convergence.
In \cref{fig:appendix_loss_diff_arch}, we compare the training loss across different locally equivariant network architectures. The results show that the proposed locally equivariant transformer achieves the lowest loss value, indicating its superior capacity to fit the target rate matrix.
Furthermore, \cref{fig:appendix_loss_leaps_dnfs} compares the loss values between \ours and LEAPS, showing that our approach again outperforms the baseline. This demonstrates the combined effectiveness of our coordinate descent learning algorithm and transformer-based architecture.

\begin{table*}[t]
\centering
\caption{Comparison of the estimated free energy $\mathcal{F}/D$, internel energy $\mathcal{E}/D$, entropy $\mathcal{S}/D$, and effective sample size for Ising models on $D = 10 \times 10$ grids at different temperatures.}
\label{tab:ising_statistic}
\setlength{\tabcolsep}{1.mm}
\resizebox{1.0\linewidth}{!}{
\begin{tabular}{c|l|cccc}
\toprule
 & Method & ESS & Free Energy $\mathcal{F} / D$ & Internal Energy $\mathcal{E} / D$ & Entropy $\mathcal{S} / D$  \\
\midrule
\multirow{3}{*}{$\sigma=0.1$} 
& Optimal Value & $1$ & $-3.6727$ & $-0.4282$ & $0.6489$ \\
& LEAPS & $0.9956\pm0.0001$ & $-3.6709\pm4.9182e^{-5}$ & $-0.4262\pm0.0034$ & $0.6489\pm0.0006$ \\
& \ours & $0.9985\pm4.6912e^{-5}$ & $-3.6709\pm6.2463e^{-5}$ & $-0.4271\pm0.0010$ & $0.6488\pm0.0002$ \\
\midrule
\multirow{3}{*}{$\sigma=0.22305$} 
& Optimal Value & $1$ & $-2.1242$ & $-1.4763$ & $0.2855$ \\
& LEAPS & $0.3631\pm0.1184$ & $-2.1011\pm0.0002$ & $-1.4493\pm0.0130$ & $0.2873\pm0.0057$ \\
& \ours & $0.9685\pm0.0010$ & $-2.1120\pm6.6258e^{-05}$ & $-1.4743\pm0.0068$ & $0.2811\pm0.0030$\\
\bottomrule
\end{tabular}}
% \vspace{-3mm}
\end{table*}

\textbf{Quantatitive Results.}
Recall \cref{eq:lattic_ising_definition}, where the Ising model is defined as
\begin{align}
    p(x) \propto \exp(-E(x)) \triangleq \exp(\sigma x^T A_D x), \quad x \in \{-1, 1\}^D
\end{align}
where $\sigma \in \mathbb{R}$ is the temperature and $A_D$ denote the adjacency matrix of the lattice graph.
We evaluate our method by comparing the estimated free energy $\mathcal{F} = -\frac{1}{2\sigma} \log Z$, internal energy $\mathcal{E} = \mathbb{E}_p[E(x)]$, and entropy $\mathcal{S} = 2\sigma(\mathcal{E} - \mathcal{F})$ with their theoretically optimal values derived in \cite{ferdinand1969bounded}. 
The free energy and internal energy are estimated using \cref{eq:logzt_est,eq:entropy_est}, respectively.
We compare DNFS with LEAPS under two temperature settings, $\sigma = 0.1$ and $\sigma = 0.22305$, where the latter corresponds to the critical temperature and thus presents a more challenging sampling problem.
The result is reported in \cref{tab:ising_statistic}, where we use $2,048$ Monte Carlo samples to estimate the values and report the mean and standard deviation averaged over $10$ independent runs.
The results show that DNFS provides accurate estimates of the free and internal energies, closely matching the theoretical value with low variance. In contrast, LEAPS exhibits significantly lower ESS and larger deviations in other metrics under the critical temperature setting, which may indicate insufficient mode coverage. These comparisons highlight the robustness of DNFS and suggest it is less prone to mode collapse across different temperatures.

\subsection{Training Discrete EBMs}

\begin{table*}[t]
\setlength{\tabcolsep}{1.4mm}
\centering
\caption{
Experiment results of probability mass estimation on seven synthetic datasets.
We display the negative log-likelihood (NLL) and MMD (in units of $1\times 10^{-4}$).
}
\label{tab:synthetic_toy2d}
\vspace{-1mm}
\begin{tabular}{c|l|ccccccc}
\toprule
Metric & Method & 2spirals & 8gaussians & circles & moons & pinwheel & swissroll & checkerboard \\
\midrule
% PCD (setting?) &  &  &  &  &  &  &  \\
\multirow{6}{*}{NLL$\downarrow$} 
& PCD    &  $20.094$ & $19.991$ &$20.565$&$19.763$&$19.593$&$20.172$&$21.214$ \\
% & ALOE   &  $20.295$ & $20.350$ & $20.565$ & $19.287$ & $19.821$ & $20.160$ & $54.653$ \\
& ALOE+  &  $20.062$ & $19.984$ & $20.570$ & $19.743$ & $19.576$ & $20.170$ & $21.142$\\
& ED-Bern & $20.039$ & $19.992$ & $20.601$ & $19.710$ & $19.568$ & $20.084$ & $20.679$ \\
& EB-GFN & ${20.050}$ & ${19.982}$ & ${20.546}$ & ${19.732}$ & ${19.554}$ & ${20.146}$ & ${20.696}$ \\
& EB-\ours & $20.118$ & $19.990$ & $20.517$ & $19.789$ & $19.566$ & $20.145$ & $20.682$ \\
\midrule
\multirow{6}{*}{MMD$\downarrow$} 
& PCD    &  $2.160$&$0.954$&$0.188$&$0.962$&$0.505$&$1.382$& $2.831$ \\
% & ALOE   &  $21.926$ & $107.320$ & $0.497$ & $26.894$ & $39.091$ & $0.471$ & $61.562$ \\
& ALOE+  &  $ {0.149} $ & ${0.078}$ & $0.636$ & $0.516$ & $1.746$ & $0.718$ & $12.138$ \\
& ED-Bern & $0.120$ & $0.014$ & $0.137$ & $0.088$ & $0.046$ & $0.045$ & $1.541$ \\
& EB-GFN &  $0.583$ & $0.531$ & ${0.305}$ & ${0.121}$ & ${0.492}$ & ${0.274}$ & ${1.206}$\\ 
& EB-\ours & $0.603$ & $0.070$ & $0.527$ & $0.223$ & $0.524$ & $0.388$ & $0.716$ \\
\bottomrule
\end{tabular}
% \end{sc}
% \vspace{-3mm}
\end{table*}

\begin{table*}[t]
\setlength{\tabcolsep}{1.4mm}
\centering
\caption{
Experimental Results of EBM and \ours on probability mass estimation: Rows labelled `EBM' represent metrics evaluated using the trained EBM model, while rows labelled `\ours' represent metrics evaluated using the trained \ours.
}
\vspace{-1mm}
\label{tab:synthetic_mmd_ebm_dfs}
\begin{tabular}{c|l|ccccccc}
\toprule
Metric & Method & 2spirals & 8gaussians & circles & moons & pinwheel & swissroll & checkerboard \\
\midrule
% PCD (setting?) &  &  &  &  &  &  &  \\
\multirow{2}{*}{NLL$\downarrow$} 
& EBM & $20.118$ & $19.990$ & $20.517$ & $19.789$ & $19.566$ & $20.145$ & $20.682$ \\
& \ours & $20.947$ & $20.948$ & $21.043$ & $20.908$ & $21.011$ & $20.899$ & $21.106$ \\
\midrule
\multirow{2}{*}{MMD$\downarrow$} 
& EBM & $2.553$ & $1.429$ & $0.897$ & $2.808$ & $1.733$ & $0.731$ & $6.168$ \\
 & \ours & $0.603$ & $0.070$ & $0.527$ & $0.223$ & $0.524$ & $0.388$ & $0.716$ \\
\bottomrule
\end{tabular}
% \end{sc}
\vspace{-3mm}
\end{table*}

\subsubsection{Experimental Details}

\textbf{Probability Mass Estimation.}
This experiment follows the setup of \cite{dai2020learning}. We first sample 2D data points $\hat{x} \triangleq [\hat{x}_1, \hat{x}_2] \in \mathbb{R}^2$ from a continuous distribution $\hat{p}$, and then quantise each point into a 32-dimensional binary vector $x \in \{0,1\}^{32}$ using Gray code.
Formally, the resulting discrete distribution follows $p(x) \propto \hat{p}([\mathrm{GradyToFloat}(x_{1:16}), \mathrm{GradyToFloat}(x_{17:32})])$.

Following \cite{schroder2024energy}, we parameterise the energy function using a 4-layer MLP with 256 hidden units and Swish activation. The leTF model consists of 3 bidirectional causal attention layers, each with 4 heads and 128 hidden units. The probability path is defined by a linear schedule over 64 time steps, starting from a uniform initial distribution. Both the EBM and \ours are trained using the AdamW optimiser with a learning rate of 0.0001 and a batch size of 128. Notably, each update step of the EBM is performed after every 10 update steps of \ours. To ensure numerical stability, the log-ratio term is clipped at a maximum value of 5.

After training, we quantitatively evaluate all methods using negative log-likelihood (NLL) and maximum mean discrepancy (MMD), as reported in \cref{tab:synthetic_toy2d}. Specifically, the NLL is computed using the trained EBM on 4,000 samples drawn from the data distribution, with the normalization constant estimated via importance sampling using 1,000,000 samples from a variational Bernoulli distribution with $p=0.5$.
For the MMD metric, we follow the protocol in \cite{zhang2022generative}, employing an exponential Hamming kernel with a bandwidth of 0.1. All reported results are averaged over 10 independent runs, where each run uses 4,000 samples generated by the trained \ours.

\textbf{Training Ising Models.}
Following \cite{grathwohl2021oops, zhang2022langevin}, we train a learnable adjacency matrix $J_\phi$ to approximate the true matrix $J$ in the Ising model.
To construct the dataset, we generate 2,000 samples using Gibbs sampling with 1,000,000 steps per instance. The leTF model consists of three bidirectional causal attention layers, each with four heads and 128 hidden units. The probability path is defined by a linear schedule over 64 time steps, starting from a uniform distribution.
$J_\phi$ is optimised using the AdamW optimiser with a learning rate of 0.0001 and a batch size of 128. To promote sparsity, we follow \cite{zhang2022generative} and apply $l_1$ regularisation with a coefficient of 0.05. \ours is trained separately using AdamW with a learning rate of 0.001 and the same batch size. To ensure numerical stability, the log-ratio term is clipped at a maximum value of 5.
We train both models iteratively, performing one update step for $J_\phi$ for every ten update steps of \ours.

\begin{figure}[!t]
% % Hacky ways to use \vspace & \hspace (＃－.－)
    \centering
    \begin{minipage}[t]{0.325\linewidth}
        \begin{minipage}[t]{\linewidth}
            \centering
            \begin{minipage}[t]{0.3\linewidth}
                \centering
                {\scriptsize Data}\\
                \vspace{.52mm}
                \includegraphics[width=\linewidth]{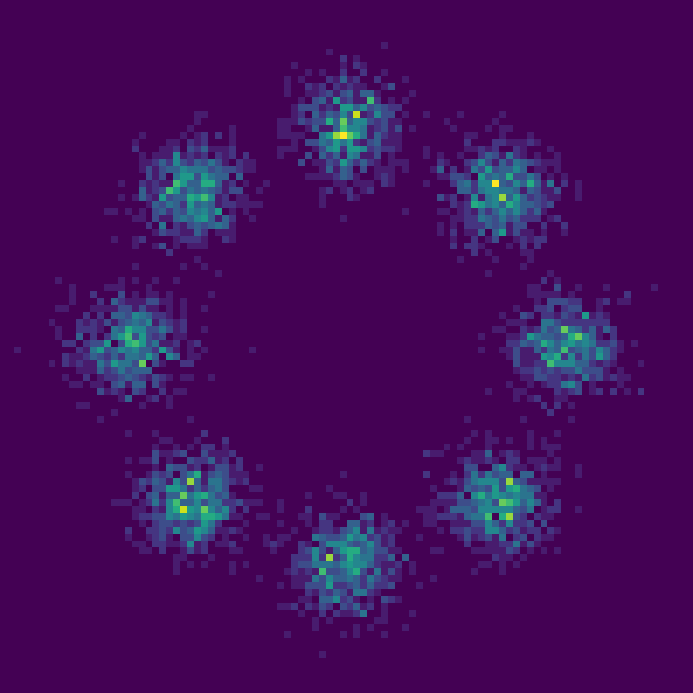}
            \end{minipage}
            \begin{minipage}[t]{0.3\linewidth}
                \centering
                {\scriptsize Energy}
                \includegraphics[width=\linewidth]{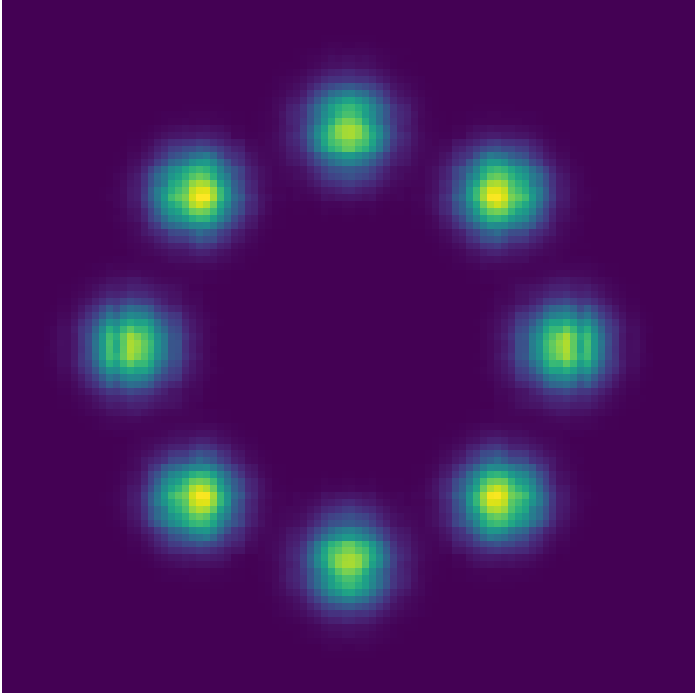}
            \end{minipage}
            \begin{minipage}[t]{0.3\linewidth}
                \centering
                {\scriptsize Samples}
                \includegraphics[width=\linewidth]{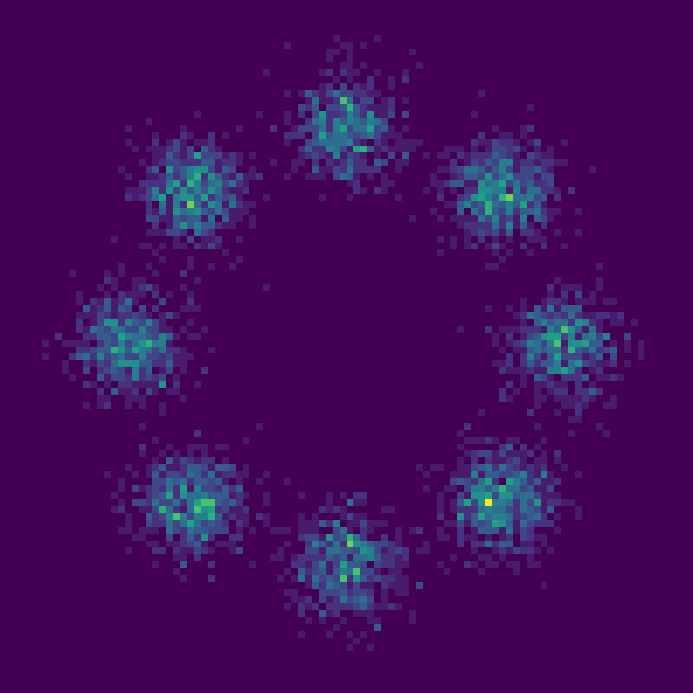}
            \end{minipage}
        \end{minipage}
    \end{minipage}
    \begin{minipage}[t]{0.325\linewidth}
        \begin{minipage}[t]{\linewidth}
            \centering
            \begin{minipage}[t]{0.3\linewidth}
                \centering
                {\scriptsize Data}\\
                \vspace{.52mm}
                \includegraphics[width=\linewidth]{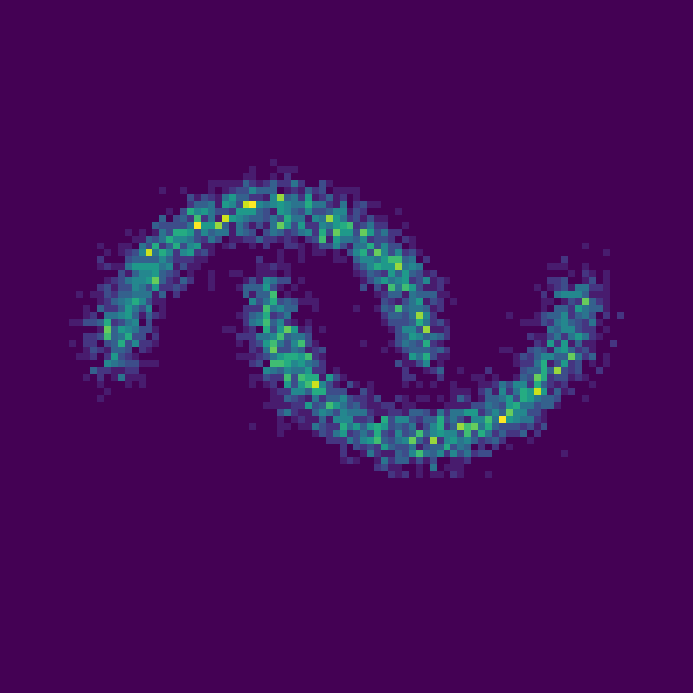}
            \end{minipage}
            \begin{minipage}[t]{0.3\linewidth}
                \centering
                {\scriptsize Energy}
                \includegraphics[width=\linewidth]{figures/ebm/toy2d/moons_dfs_samples.png}
            \end{minipage}
            \begin{minipage}[t]{0.3\linewidth}
                \centering
                {\scriptsize Samples}
                \includegraphics[width=\linewidth]{figures/ebm/toy2d/moons_dfs_samples.png}
            \end{minipage}
        \end{minipage}
    \end{minipage}
    \begin{minipage}[t]{0.325\linewidth}
        \begin{minipage}[t]{\linewidth}
            \centering
            \begin{minipage}[t]{0.3\linewidth}
                \centering
                {\scriptsize Data}\\
                \vspace{.52mm}
                \includegraphics[width=\linewidth]{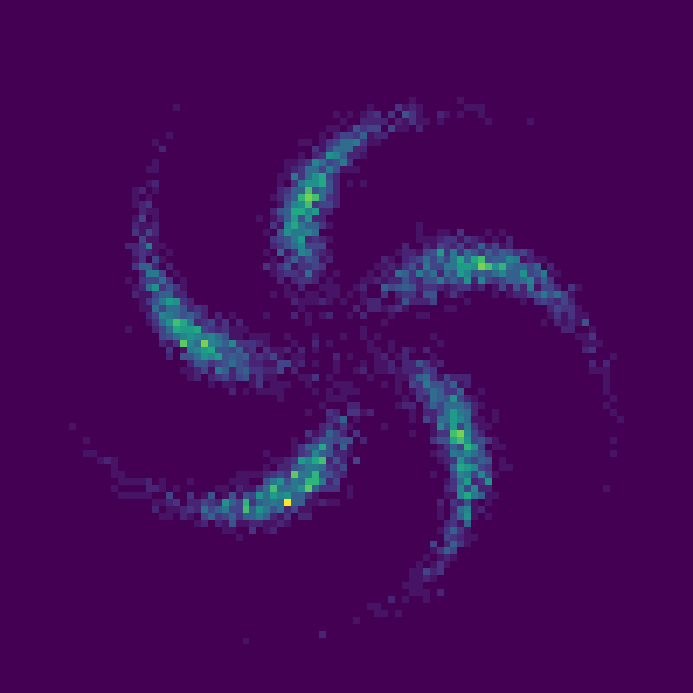}
            \end{minipage}
            \begin{minipage}[t]{0.3\linewidth}
                \centering
                {\scriptsize Energy}
                \includegraphics[width=\linewidth]{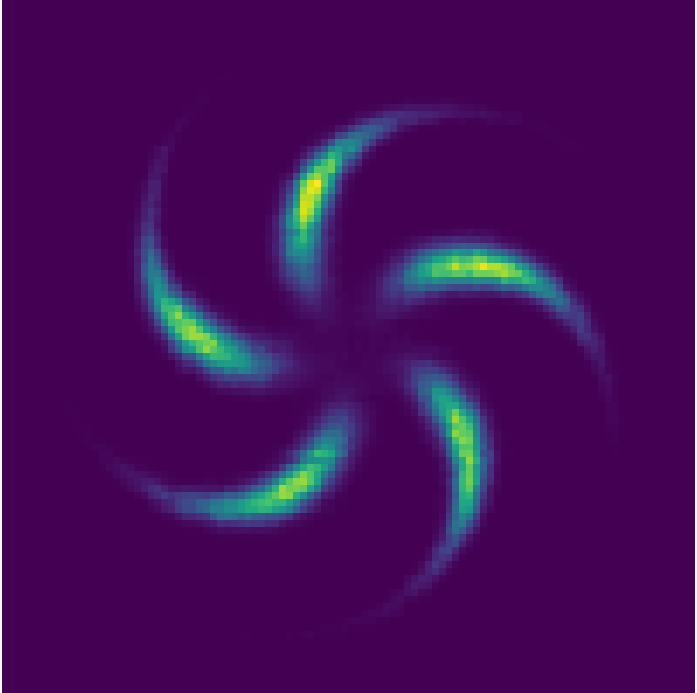}
            \end{minipage}
            \begin{minipage}[t]{0.3\linewidth}
                \centering
                {\scriptsize Samples}
                \includegraphics[width=\linewidth]{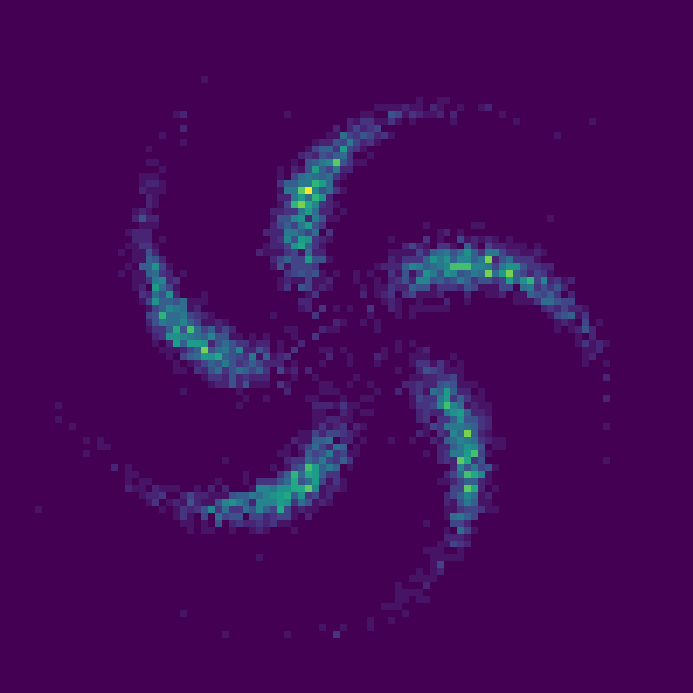}
            \end{minipage}
        \end{minipage}
    \end{minipage}
    \begin{minipage}[t]{0.325\linewidth}
        \begin{minipage}[t]{\linewidth}
            \centering
            \begin{minipage}[t]{0.3\linewidth}
                \centering
                {\scriptsize Data}\\
                \vspace{.52mm}
                \includegraphics[width=\linewidth]{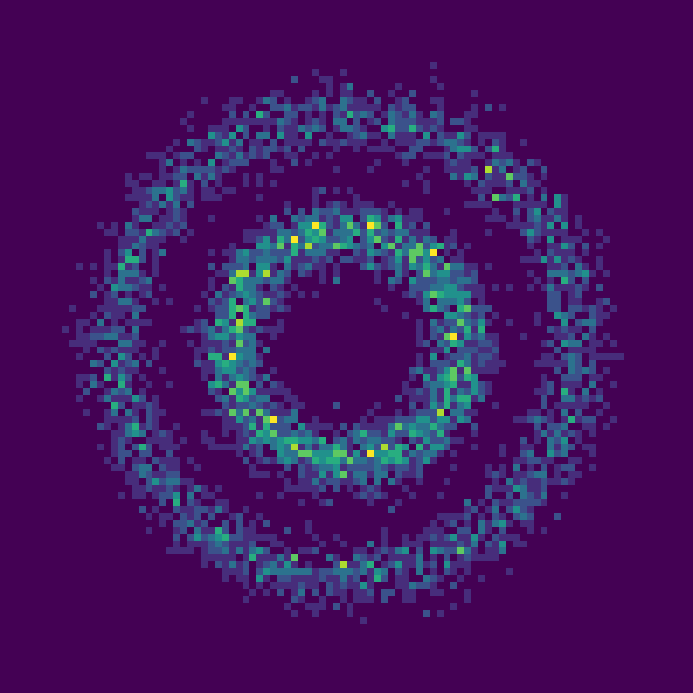}
            \end{minipage}
            \begin{minipage}[t]{0.3\linewidth}
                \centering
                {\scriptsize Energy}
                \includegraphics[width=\linewidth]{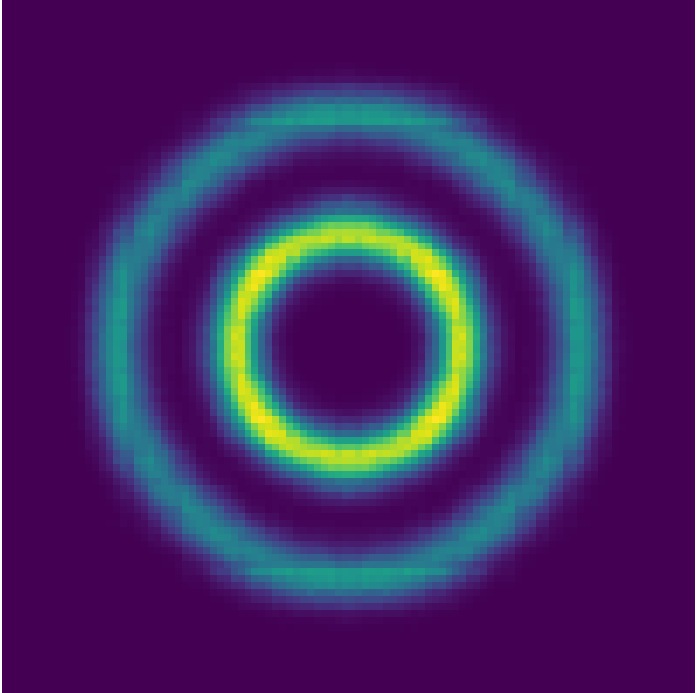}
            \end{minipage}
            \begin{minipage}[t]{0.3\linewidth}
                \centering
                {\scriptsize Samples}
                \includegraphics[width=\linewidth]{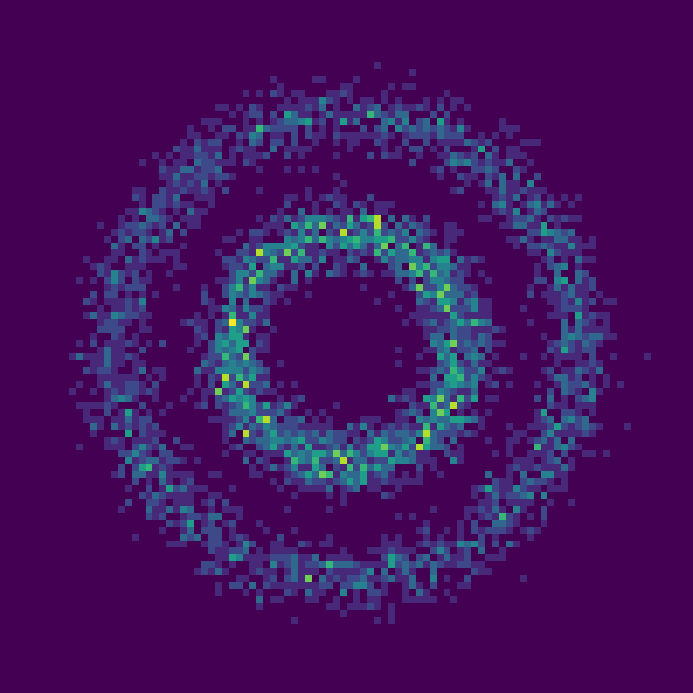}
            \end{minipage}
        \end{minipage}
    \end{minipage}
    \begin{minipage}[t]{0.325\linewidth}
        \begin{minipage}[t]{\linewidth}
            \centering
            \begin{minipage}[t]{0.3\linewidth}
                \centering
                {\scriptsize Data}\\
                \vspace{.52mm}
                \includegraphics[width=\linewidth]{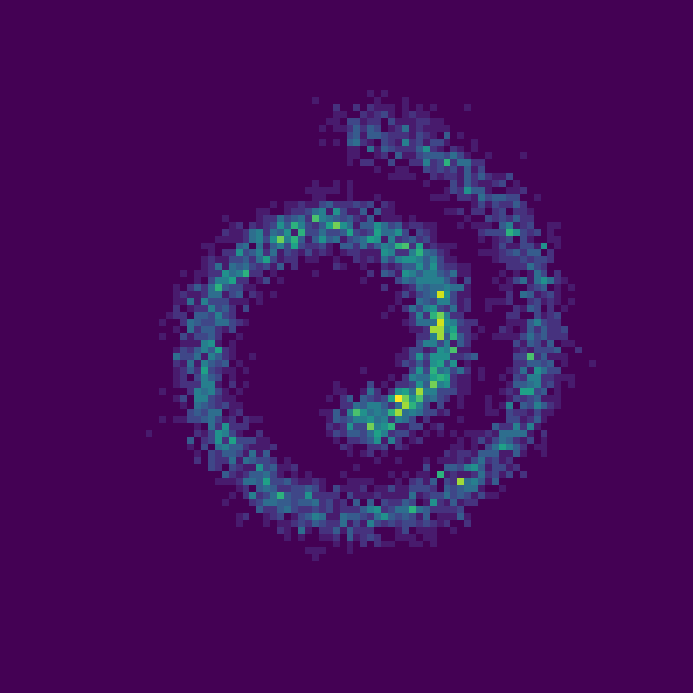}
            \end{minipage}
            \begin{minipage}[t]{0.3\linewidth}
                \centering
                {\scriptsize Energy}
                \includegraphics[width=\linewidth]{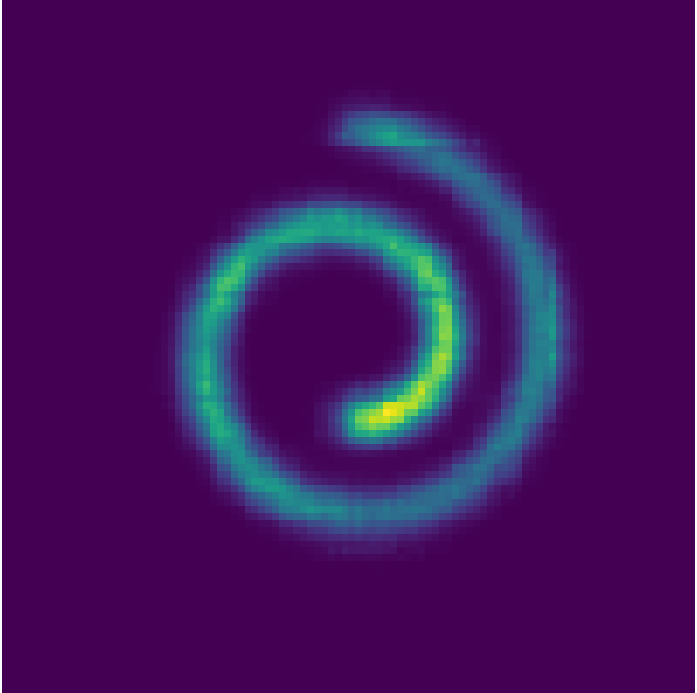}
            \end{minipage}
            \begin{minipage}[t]{0.3\linewidth}
                \centering
                {\scriptsize Samples}
                \includegraphics[width=\linewidth]{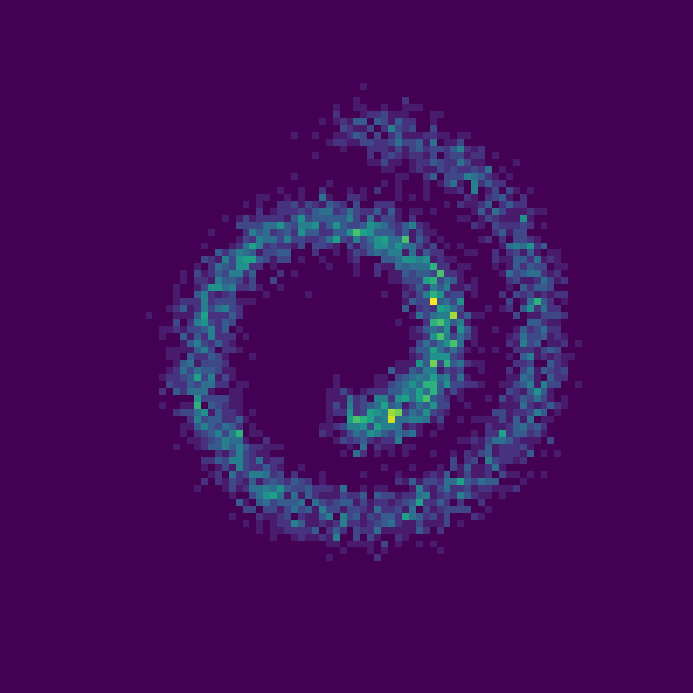}
            \end{minipage}
        \end{minipage}
    \end{minipage}
    \begin{minipage}[t]{0.325\linewidth}
        \begin{minipage}[t]{\linewidth}
            \centering
            \begin{minipage}[t]{0.3\linewidth}
                \centering
                {\scriptsize Data}\\
                \vspace{.52mm}
                \includegraphics[width=\linewidth]{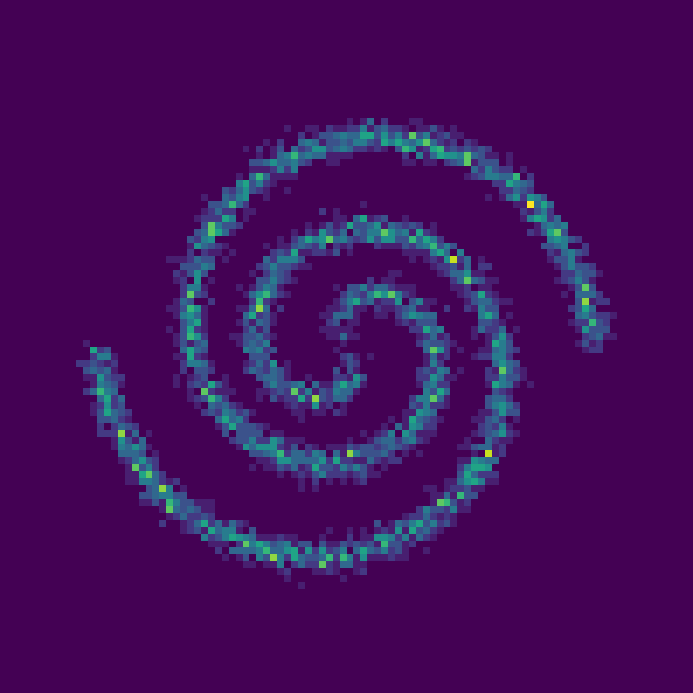}
            \end{minipage}
            \begin{minipage}[t]{0.3\linewidth}
                \centering
                {\scriptsize Energy}
                \includegraphics[width=\linewidth]{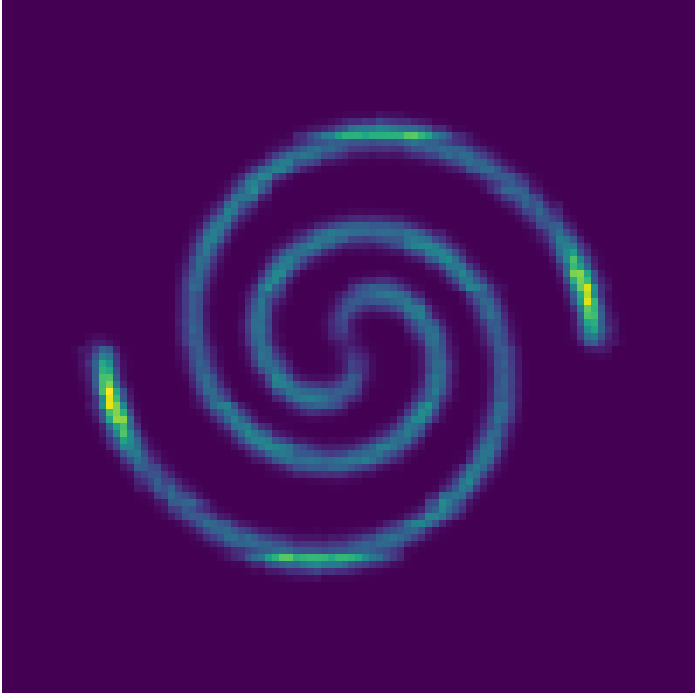}
            \end{minipage}
            \begin{minipage}[t]{0.3\linewidth}
                \centering
                {\scriptsize Samples}
                \includegraphics[width=\linewidth]{figures/ebm/toy2d/2spirals_dfs_samples.png}
            \end{minipage}
        \end{minipage}
    \end{minipage}
    \caption{Additional qualitative results in training discrete EBMs. We visualise the training data, learned energy landscape, and the synthesised samples of DNFS.}
    \label{fig:appendix_ebm_toy2d}
\end{figure}

\subsubsection{Additional Results}

\textbf{Additional Results of Probability Mass Estimation.}
We compare \ours to various baselines, including PCD \citep{tieleman2008training}, ALOE+ \citep{dai2020learning}, ED-Bern \citep{schroder2024energy}, and EB-GFN \citep{zhang2022generative}.
As shown in \cref{tab:synthetic_toy2d}, \ours outperforms both Persistent Contrastive Divergence (PCD), which relies on conventional MCMC methods, and the variational approach ALOE+, demonstrating the effectiveness of our method.
Additional qualitative results are provided in \cref{fig:appendix_ebm_toy2d}, where \ours consistently produces accurate energy landscapes and high-quality samples that closely resemble the training data.
Furthermore, \cref{fig:appendix_ebm_toy2d_sampling_traj} illustrates the sampling trajectory of \ours alongside marginal samples from long-run Gibbs sampling.
The comparison shows that \ours produces samples that closely resemble those from Gibbs, demonstrating its ability to approximate the target distribution with high fidelity.

Notably, since the EBM and the sampler are trained jointly, we can evaluate the negative log-likelihood (NLL) of the data using the trained CTMC, as described in \cref{sec:appendix_is}, and assess sample quality via the maximum mean discrepancy (MMD) using samples generated by the trained sampler. In \cref{tab:synthetic_mmd_ebm_dfs}, we report results under four evaluation settings: i) NLL (EBM): using the trained EBM with importance sampling; ii) NLL (DNFS): using the trained CTMC following the method in \cref{sec:appendix_is}; iii) MMD (EBM): samples drawn via Gibbs sampling from the EBM; and iv) MMD (DNFS): samples generated by \ours.
It can be seen that samples generated by \ours achieved lower MMD compared to those generated by Gibbs sampling, demonstrating the superiority of the learned sampler in capturing the target distribution and producing higher-fidelity samples.
However, we observe that \ours yields less accurate likelihood estimates compared to importance sampling performed with the trained energy function.
This is likely because \ours is trained to satisfy the Kolmogorov equation rather than explicitly optimising the evidence lower bound, as is common in other discrete diffusion models for generative modelling \citep{shi2024simplified,sahoo2024simple}.
Moreover, it is noteworthy that the sample quality and likelihood are not necessarily consistent \citep[Section 3.2]{theis2015note}, and \ours does not directly
optimise the likelihood. Thus, the performance of \ours on NLL is not guaranteed.

\begin{wraptable}{r}{0.6\linewidth}
\vspace{1.5mm}
    \small
    \centering
    \vspace{-3mm}
    \caption{Mean negative log-RMSE (higher is better) between the learned connectivity matrix $J_\phi$ and the true matrix $J$ for different values of $D$ and $\sigma$.} 
    \label{tab:ising_results_appendix}
    \vspace{-2mm}
    \resizebox{\linewidth}{!}{
   \begin{tabular}{lccccccc}
        \toprule
        \multirow{2}{*}{Method $\backslash$ $\sigma$} & \multicolumn{5}{c}{$D=10^2$} & \multicolumn{2}{c}{$D=9^2$} \\
        \cmidrule(lr){2-6}\cmidrule(lr){7-8}
          & $0.1$ & $0.2$ & $0.3$ & $0.4$ & $0.5$ & $-0.1$ & $-0.2$  \\ 
         \midrule
        Gibbs & $4.8$ & $4.7$ & $3.4$ & $2.6$ & $2.3$ & $4.8$ & $4.7$ \\
        GwG & $4.8$ & $4.7$ & $3.4$ & $2.6$ & $2.3$ & $4.8$ & $4.7$ \\
        ED-Bern & $5.1$ & $4.0$ & $2.9$ & $2.6$ & $2.3$ & $5.1$ & $4.3$ \\
        EB-GFN & $6.1$ & $5.1$ & $3.3$ & $2.6$ & $2.3$  & $5.7$ & $5.1$ \\
        \midrule
        \ours & $4.6$ & $3.9$ & $3.1$ & $2.6$ & $2.3$ & $4.6$ & $3.9$ \\
        \bottomrule
    \end{tabular}
    }
\vspace{-3mm}
\end{wraptable}
\textbf{Additional Results of Training Ising Models.}
We further provide a quantitative comparison against baselines for training Ising models.
Following \citet{zhang2022generative, schroder2024energy}, we evaluate on $D = 10 \times 10$ grids with $\sigma = 0.1, 0.2, \dots, 0.5$ and $D = 9 \times 9$ grids with $\sigma = -0.1, -0.2$.
Performance is measured by the negative log-RMSE between the estimated $J_\phi$ and the true adjacency matrix $J$.
As shown in \cref{tab:ising_results_appendix}, while our method underperforms ED-GFN, which is also a neural sampler, it achieves results comparable to Gibbs, GwG, and ED-Bern in most settings, demonstrating its ability to uncover the underlying structure in the data.

\subsection{Solving Combinatorial Optimisation Problems} \label{sec:appendix_sol_combopt}

\subsubsection{Experimental Details}
This experiment follows the setup in \citep{zhang2023let}, where we train an amortised combinatorial solver using \ours on 1,000 training graphs and evaluate it by reporting the average solution size over 100 test graphs.
To be specific, we use Erd\H{o}s--R\'enyi (ER) \citep{erdos1961evolution} and Barab\'asi--Albert (BA) \citep{barabasi1999emergence} random graphs to benchmark the MIS and MCut problems, respectively.
Due to scalability limitations of our current method, we restrict our evaluation to small graphs with 16 to 75 vertices, leaving the exploration of more complex graphs for future work.

We parameterise the rate matrix using the proposed locally equivariant GraphFormer (leGF), which consists of 5 bidirectional causal attention layers, each with 4 heads and 256 hidden units.
Training is performed using the AdamW optimiser with a learning rate of 0.0001 and a batch size of 256. The log-ratio term is clipped to a maximum value of 5 to ensure stability.
More importantly, we find that the temperature $T$ plays a crucial role in performance. Fixed temperature values generally lead to suboptimal results. Therefore, we adopt a temperature annealing strategy: starting from an inverse temperature of 0.1 and gradually increasing it to a final value of 5.
A more comprehensive annealing strategy, such as adaptive schedules based on loss plateaus, may further improve performance by better aligning the sampling dynamics with the learning process. Exploring such adaptive annealing schemes is a promising direction for future work.

\subsubsection{Additional Results}

\begin{table}[t]
\setlength\tabcolsep{4.pt}
\vspace{-6mm}
\caption{Maximum cut experimental results. We report the absolute performance, approximation ratio (relative to \textsc{Gurobi}), and inference time.
}
\label{tab:mcut}
\centering
% \begin{small}
\begin{footnotesize}
% \begin{scriptsize}
\begin{sc}
    \begin{tabular}{lcrrcrrcrr}
    \toprule
    \multirow{2}{*}{Method} & 
    \multicolumn{3}{c}{BA16-20} & \multicolumn{3}{c}{BA32-40} & \multicolumn{3}{c}{BA64-75} \\
    \cmidrule(lr){2-4}\cmidrule(lr){5-7}\cmidrule(lr){8-10}
    & Size $\uparrow$ & Drop $\downarrow$ & Time $\downarrow$ & Size $\uparrow$ & Drop $\downarrow$ & Time $\downarrow$ & Size $\uparrow$ & Drop $\downarrow$ & Time $\downarrow$ \\
    \midrule
    \textsc{Gurobi} & $40.85$ & $0.00\%$ & 0:02 &  $93.67$ & $0.00\%$ & 0:05 & $194.08$ & $0.00\%$ & 0:14 \\
    \midrule
    Random & $25.70$ & $37.1\%$ & 0:03 &  $46.19$ & $50.7\%$ & 0:05 & $81.19$ & $58.2\%$ & 0:08 \\
    % GwG & $40.67$ & $0.44\%$ & 0:18 & $93.65$ & $0.02\%$ & 0:25 & $193.96$ & $0.06\%$ & 0:34 \\
    DMALA & $40.32$ & $1.30\%$ & 0:04 & $93.47$ & $0.21\%$ & 0:06 & $192.33$ & $0.90\%$ & 0:07 \\
    GflowNet & $39.93$ & $2.25\%$ & 0:02 & $90.65$ & $3.22\%$ & 0:04 & $186.60  $ & $3.85\%$ & 0:07 \\
    \ours & $39.60$ & $3.06\%$ & 0:03 & $88.64$ & $5.37\%$ & 0:05 & $181.75$ & $6.35\%$ & 0:08 \\
    \ours\!\!+DMALA & $40.76$ & $0.22\%$ & 0:08 & $93.63$ & $0.01\%$ & 0:12 & $192.30$ & $0.92\%$ & 0:17 \\
    \bottomrule
    \end{tabular}
\end{sc}
\end{footnotesize}
\vspace{-5mm}
\end{table}

\textbf{Additional Results on Maximum Cut.}
We further evaluate \ours on the maximum cut problem. As shown in \cref{tab:mcut}, the trained DNFS significantly outperforms its untrained version, underscoring the effectiveness of our approach.
While \ours slightly lags behind the baselines DMALA and GFlowNet, its MCMC-refined variant achieves the best overall performance, closely approaching the oracle solution provided by Gurobi.

\begin{wraptable}{r}{0.6\linewidth}
\setlength\tabcolsep{4.pt}
\vspace{-4mm}
\caption{Comparison between \ours and its DLAMA-refined version by solving MIS on the ER16-20 dataset.}
\label{tab:appendix_mis_dnfs_dlama}
\centering
% \begin{small}
\begin{footnotesize}
% \begin{scriptsize}
\begin{sc}
    \begin{tabular}{ccccc}
    \toprule
    \multirow{2}{*}{Steps} & 
    \multicolumn{2}{c}{DLAMA} & \multicolumn{2}{c}{\ours\!\!+DLAMA} \\
    \cmidrule(lr){2-3}\cmidrule(lr){4-5}
    & Size $\uparrow$ & Time (s) $\downarrow$ & Size $\uparrow$ & Time (s) $\downarrow$\\
    \midrule
    1 & $8.33$ & $1.75$ & $8.37$ & $4.53$ \\
    2 & $8.53$ & $3.19$ & $8.63$ & $5.89$ \\
    3 & $8.71$ & $4.70$ & $8.75$ & $7.62$ \\
    4 & $8.77$ & $6.12$ & $8.84$ & $8.85$ \\
    5 & $8.80$ & $7.74$ & $8.91$ & $10.47$ \\
    \bottomrule
    \end{tabular}
\end{sc}
% \end{scriptsize}
\end{footnotesize}
% \end{small}
\end{wraptable}
\textbf{MCMC Refined DNSF.}
As previously discussed, a key advantage of \ours is its ability to incorporate additional MCMC steps to refine the sampling trajectory, thanks to the known marginal distribution $p_t$.
To validate the effectiveness of this MCMC-refined sampling, we conduct an experiment on the MIS problem using the ER16-20 dataset.
In this experiment, we compare two methods: DMALA \citep{zhang2022langevin} and DNFS combined with DLAMA, both sampling from the same interpolated distribution $p_t \propto p_0^{1-t} p_1^{t}$.
For reference, the average solution size obtained by \ours without DLAMA refinement is 8.28.
As shown in \cref{tab:appendix_mis_dnfs_dlama}, applying DLAMA refinement significantly boosts performance, with improvements increasing as more refinement steps are added.
More importantly, integrating the proposed neural sampler (\ours) with the MCMC method (i.e., \ours + DLAMA) outperforms the standalone MCMC baseline (i.e., DMALA), demonstrating that the learned sampler provides a strong initialisation that guides the refinement process toward better solutions.
This result confirms the synergy between neural samplers and MCMC refinement in solving challenging combinatorial problems.

\begin{table}[t]
\setlength\tabcolsep{4.pt}
\caption{Results for the maximum independent set problem on RB32-40 graphs with varying parameter $p$. Reported values include the solution size (larger is better) and the percentage drop in performance relative to GUROBI (lower is better), indicated in brackets.}
\label{tab:rb-graphs}
\centering
\begin{footnotesize}
\resizebox{1.0\linewidth}{!}{
    \begin{tabular}{lcccccc}
    \toprule
    $p=$ & $0.1$ & $0.3$ & $0.5$ & $0.7$ & $0.9$ & $1.0$ \\
    \midrule
    \textsc{Gurobi} & $8.52 (0.00\%)$ & $8.24 (0.00\%)$ & $7.06 (0.00\%)$ & $7.89 (0.00\%)$ & $8.05 (0.00\%)$	& $8.63 (0.00\%)$ \\
    \midrule
    Random & $3.38 (60.3\%)$ & $4.59 (44.2\%)$ & $4.25 (39.8\%)$ & $5.54 (29.7\%)$ & $6.29 (21.8\%)$ & $6.93 (19.6\%)$ \\
    DMALA & $8.50 (0.23\%)$ & $8.16 (0.97\%)$ & $7.01 (0.70\%)$ & $7.84 (0.63\%)$ & $8.04 (0.12\%)$ & $8.63 (0.00\%)$ \\
    GflowNet & $8.20 (3.75\%)$ & $7.93 (3.76\%)$ & $6.83 (3.25\%)$ & $7.78 (1.39\%)$ & $8.03 (0.62\%)$ & $8.63 (0.00\%)$ \\
    \ours &  $8.00 (6.10\%)$ & $7.65 (7.16\%)$ & $6.67 (5.52\%)$ & $7.66 (2.91\%)$ & $7.97 (0.99\%)$ & $8.63 (0.00\%)$ \\
    \ours\!\!+DMALA & $8.51 (0.11\%)$ & $8.19 (0.60\%)$ & $7.02 (0.56\%)$ & $7.85 (0.50\%)$ & $8.04 (0.12\%)$ & $8.63 (0.00\%)$ \\
    \bottomrule
    \end{tabular}
}
\end{footnotesize}
\vspace{-5mm}
\end{table}

\textbf{Benchmarking on the RB Graphs.}
Following \cite{sanokowski2024diffusion}, we further evaluate our method on the Maximum Independent Set problem using the RB32-40 graphs, varying the parameter, which controls the problem’s difficulty. Specifically, higher $p$ values of yield easier instances, while lower values result in more challenging graphs. For each setting, we report the solution size, along with the percentage performance drop relative to GUROBI.
As shown in \cref{tab:rb-graphs}, the results are consistent with the observation in \cref{tab:mis}. 
On easier instances (i.e., higher values of $p$), our method performs competitively, approaching the performance of the oracle solver GUROBI. On more challenging instances, however, DNFS exhibits a performance gap relative to GFlowNet. This gap arises because GFlowNet restricts its sampling trajectories to the feasible solution, effectively narrowing the exploration space. Introducing such an inductive bias into DNFS represents a promising direction for future work.
Nevertheless, despite DNFS underperforming GFlowNet in its pure form, it offers a distinct advantage: the intermediate target distribution is explicitly known. This property enables integration with MCMC-based refinement methods (e.g., DNFS+DMALA), which significantly improves performance.

\begin{figure}[!t]
% % Hacky ways to use \vspace & \hspace (＃－.－)
    \centering
    \begin{minipage}[t]{0.95\linewidth}
        \centering
        {Noise to Data $(t: 0 \rightarrow 1)$}
        \includegraphics[width=\linewidth]{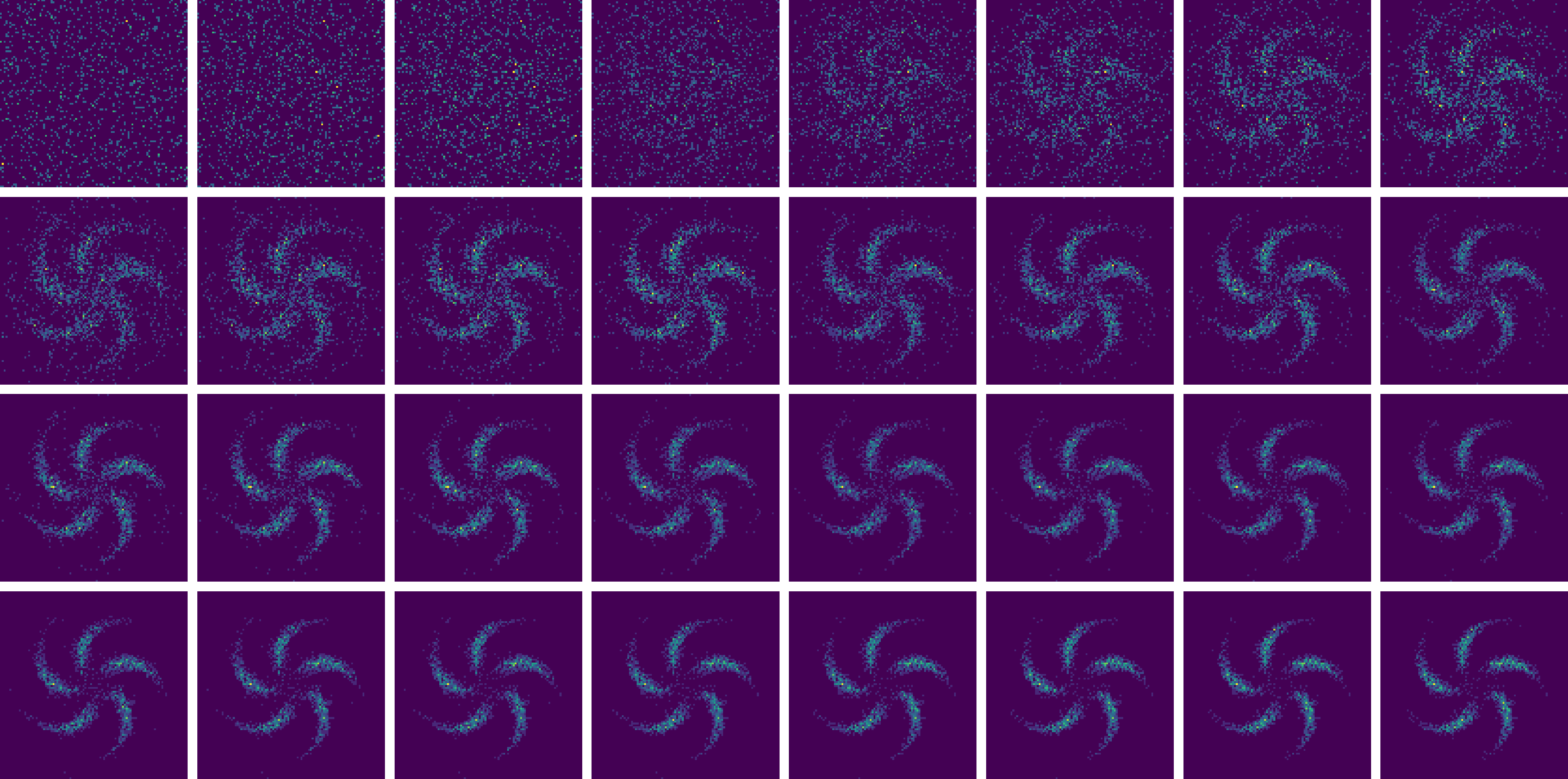}
    \end{minipage}
    \begin{minipage}[t]{0.95\linewidth}
        \centering
        {Data to Noise $(t: 1 \rightarrow 0)$}
        \includegraphics[width=\linewidth]{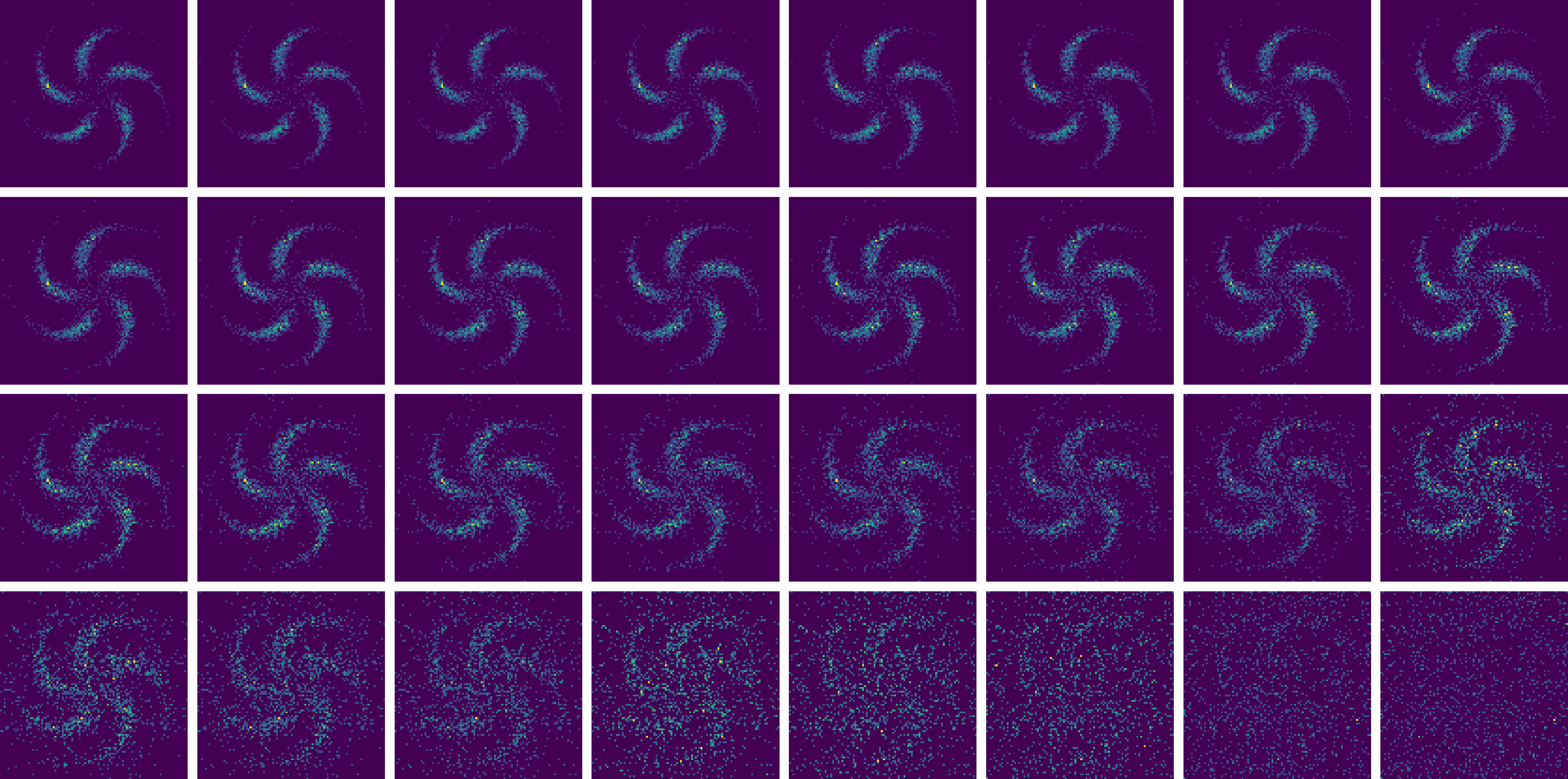}
    \end{minipage}
    \begin{minipage}[t]{0.95\linewidth}
        \centering
        {Marginal Samples with Gibbs Sampling $(t: 0 \rightarrow 1)$}
        \includegraphics[width=\linewidth]{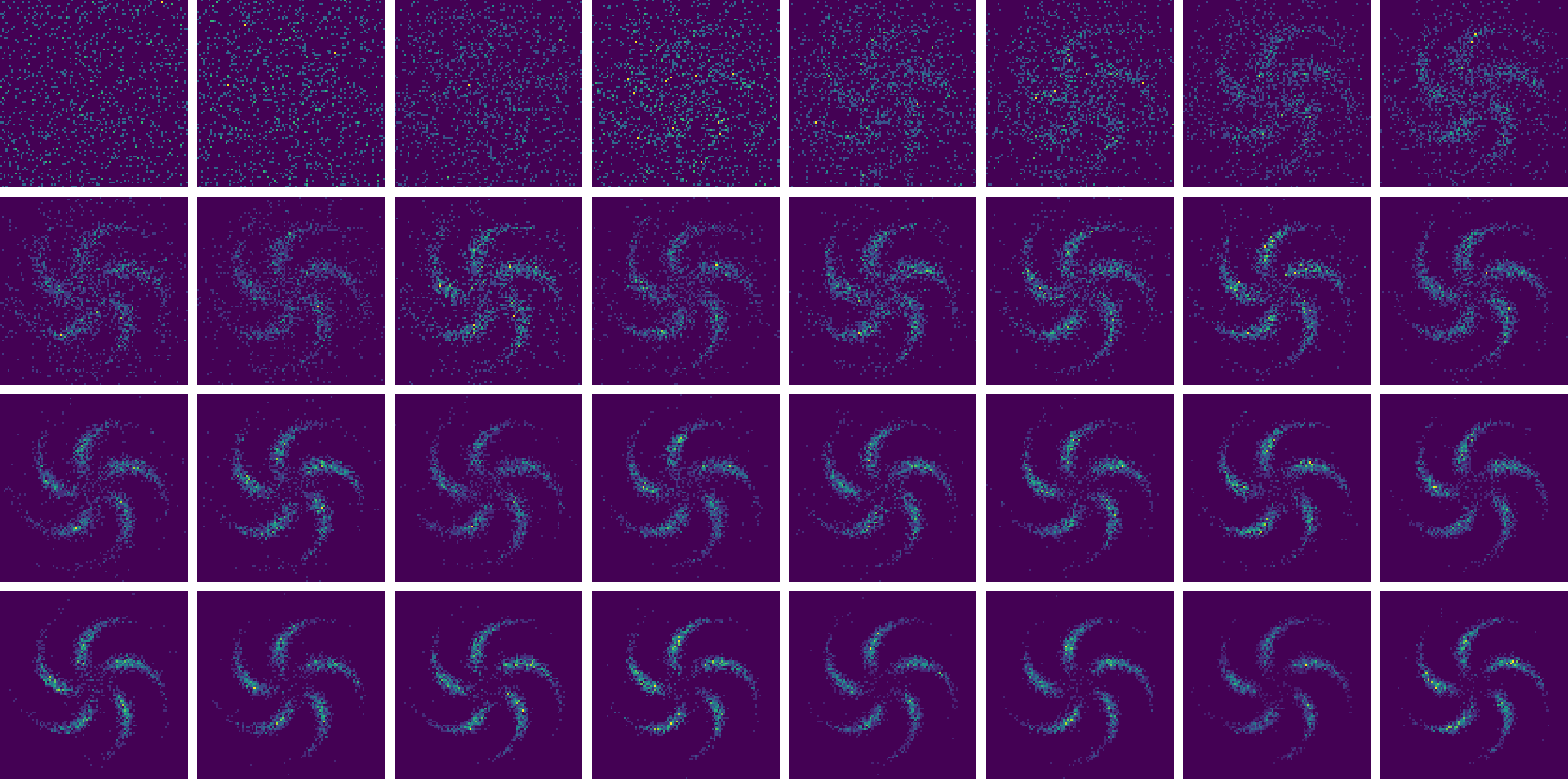}
    \end{minipage}
    \caption{The sampling trajectory of \ours in discrete EBM training. Top: noise to data trajectory; Middle: data to noise trajectory; Bottom: marginal samples with Gibbs sampling.}
    \label{fig:appendix_ebm_toy2d_sampling_traj}
\end{figure}

\end{document}